\newcommand{\edg}{\bm{V}}
\newcommand{\edgm}{V}
\newcommand{\softbold}{\bm}
\newcommand{\vars}{\softbold{V}}
\newcommand{\var}{V}
\newcommand{\exovars}{\softbold{U}}
\newcommand{\exovarsval}{\softbold{u}}
\newcommand{\exoparents}{\softbold{U}}
\newcommand{\edgparents}{Pa}
\newcommand{\dm}{d}
\newcommand{\scm}{\mathcal{M}}
\newcommand{\scmfn}{F}
\newcommand{\scmfns}{\softbold{F}}
\newcommand{\dist}{p}
\newcommand{\graph}{G}
\newcommand{\treat}{\bm{X}}
\newcommand{\treatval}{\bm{x}}
\newcommand{\treatm}{X}
\newcommand{\treatmval}{x}
\newcommand{\outcome}{\bm{Y}}
\newcommand{\outcomeval}{\bm{y}}
\newcommand{\outcomem}{Y}
\newcommand{\outcomemval}{y}
\newcommand{\adjust}{\bm{Z}}
\newcommand{\adjustval}{\bm{z}}
\newcommand{\adjustm}{Z}
\newcommand{\circuit}{C}
\newcommand{\scope}{\phi}
\newcommand{\pcparam}{\theta}
\newcommand{\pcvars}{\bm{V}}
\newcommand{\pcvarsval}{\bm{v}}
\newcommand{\pcvarsingle}{V}
\newcommand{\pcvarsingleval}{v}
\newcommand{\varsubset}{\bm{W}}
\newcommand{\varsubsetval}{\bm{w}}
\newcommand{\pcchild}{ch}
\newcommand{\sumunit}{T}
\newcommand{\sumunits}{\bm{T}}
\newcommand{\productunit}{P}
\newcommand{\productunits}{\bm{P}}
\newcommand{\leafunit}{L}
\newcommand{\genericunit}{N}
\newcommand{\rootunit}{R}
\newcommand{\margdetvars}{\bm{Q}}
\newcommand{\margdetvarsval}{\bm{q}}
\newcommand{\evidence}{\bm{E}}
\newcommand{\evidenceval}{\bm{e}}
\newcommand{\values}{val}
\newcommand{\pcfunc}{p}
\newcommand{\vtree}{v}
\newcommand{\vnodes}{M}
\newcommand{\vedges}{E}
\newcommand{\vlabel}{\psi}
\newcommand{\vnode}{m}
\newcommand{\vlabelno}{U}
\newcommand{\margdetset}{\mathcal{Q}}
\newcommand{\mdvtree}{w}
\newcommand{\vnodepa}{\vnode_{\text{pa}}}
\newcommand{\vnodech}{\vnode_{\text{ch}}}
\newcommand{\parents}{pa}
\newcommand{\children}{ch}
\newcommand{\circuitclass}{\mathcal{C}}
\newcommand{\marg}{\texttt{MARG}}
\newcommand{\inst}{\texttt{INST}}
\newcommand{\product}{\texttt{PROD}}
\newcommand{\logop}{\texttt{LOG}}
\newcommand{\maxop}{\texttt{MAX}}
\newcommand{\powop}{\texttt{POW}}
\newcommand{\power}{\alpha}
\newcommand{\p}{p}
\newcommand{\powerset}{\mathcal{P}}
\newcommand{\supp}{\textnormal{supp}}
\newcommand{\targetmdset}{\bm{S}}
\newcommand{\commonvars}{\bm{C}}
\newcommand{\newcommonvars}{\bm{C}} 
\newcommand{\commonvarsglobal}{\bm{C}_{\text{global}}}
\newcommand{\regionsize}{K}
\newcommand{\vparamfn}{\rho}
\newtheorem{proposition}[]{Proposition}
\newtheorem{problem}[]{Problem}
\newtheorem{definition}[]{Definition}
\newtheorem{lemma}[]{Lemma}
\newcommand\restr[2]{{
  \left.\kern-\nulldelimiterspace 
  #1 
  \vphantom{|} 
  \right|_{#2} 
  }}
\newcites{SM}{References}
\begin{document}

%

%

\twocolumn[

\aistatstitle{Compositional Probabilistic and Causal Inference using Tractable Circuit Models}

\aistatsauthor{ Benjie Wang \And Marta Kwiatkowska }

\aistatsaddress{ Department of Computer Science \\ University of Oxford \And Department of Computer Science \\ University of Oxford } ]

\begin{abstract}

   Probabilistic circuits (PCs) are a class of tractable probabilistic models, which admit efficient inference routines depending on their structural properties.
   In this paper, we introduce \textit{md-vtrees}, a novel structural formulation of (marginal) determinism in structured decomposable PCs, which generalizes previously proposed classes such as probabilistic sentential decision diagrams. Crucially, we show how md-vtrees can be used to derive tractability conditions and efficient algorithms for advanced inference queries expressed as \textit{arbitrary compositions} of basic probabilistic operations, such as marginalization, multiplication and reciprocals, in a sound and generalizable manner. In particular, we derive the first polytime algorithms for causal inference queries such as backdoor adjustment on PCs. As a practical instantiation of the framework, we propose MDNets, a novel PC architecture using md-vtrees, and empirically demonstrate their application to causal inference.

\end{abstract}

\section{INTRODUCTION}

Probabilistic circuits (PC) \citep{choi2020probcirc} are a broad family of tractable probabilistic models that are known for their ability to perform \textit{exact} and \textit{efficient} probabilistic inference. 
For example, in contrast to neural probabilistic models such as variational autoencoders (VAE) \citep{kingma2013auto}, generative adversarial networks (GAN) \citep{goodfellow2014gan}, and normalizing flows (NF) \citep{rezende2015nf}, linear-time exact algorithms on PCs are available for important inference tasks such as computing marginal probabilities, or the maximum probability assignment of variables, for certain types of PCs. Meanwhile, probabilistic circuit structures have recently been shown to scale to high-dimensional datasets such as CelebA \citep{peharz2020einet}.

A distinguishing feature of the PC framework is the ability to trade off \textit{expressive efficiency} for \textit{tractability} by imposing various properties on the PC. Broadly, these properties can be divided into \textit{scope properties}, such as decomposability and structured decomposability \citep{pipatsrisawat2008compilation}, and \textit{support properties}, such as determinism, strong determinism \citep{pipatsrisawat2010strdet}, and marginal determinism \citep{choi2020probcirc}. As we impose more properties on a PC, more inference tasks become tractable (i.e. computable in polynomial time), but we also lose some expressive efficiency and generality.

In this paper, we aim to extend the boundaries of advanced inference queries that can be tackled with PCs. In particular, we consider probabilistic inference queries specified as compositions of basic operations such as marginalization, products, and reciprocals, building upon the approach of \citet{vergari2021atlas}. We find that current PC classes are not sufficient to analyze \textit{arbitrary} compositions of these operations, and thus propose a novel class of circuits (md-vtrees \& MDNet) and accompanying rules (MD-calculus) to derive in a sound and generalizable manner tractability conditions and algorithms for such compositions. Exploiting this, we design the first efficient exact algorithms for causal inference on probabilistic circuits.  

Our first contribution is to introduce a unifying formulation of support properties in structured decomposable circuits using \textit{md-vtrees}. We show that md-vtrees generalize previously proposed PC families such as probabilistic sentential decision diagrams (PSDD) \citep{kisapsdd2014} and structured decomposable and deterministic circuits \citep{dang2020strudel, dimauro2021random}. Notably, we also show that PSDDs are not \textit{optimal} in that we can impose weaker support properties while maintaining tractability for the same inference tasks.

Next, as a practical instantiation of the framework, we propose MDNets, a novel architecture for PCs which can be easily configured to conform to any md-vtree. Crucially, this allows us to enforce arbitrary support properties needed for tractable inference, including those not covered by existing PC architectures. We show how to learn MDNets simply and efficiently using randomized structures and parameter learning \citep{peharz2020ratspn, peharz2020einet, dimauro2021random}.

Finally, for inference, we derive a set of rules for analyzing arbitrary compositions of basic operations using md-vtrees, which we call the MD-calculus. In particular, MD-calculus rules can be applied backward through a given composition, to derive sufficient conditions for tractability on the inputs to the query, which we can enforce during learning through our MDNets. As an application, we demonstrate how the MD-calculus can be applied to derive tractability conditions and algorithms for causal inference estimands on PCs, including the backdoor and frontdoor formulae and (an extension of) the napkin formula.

\subsection{Related Work}

\paragraph{Support Properties} 
The property of determinism was first introduced in the context of Boolean circuits, specifically, those in negation normal form \citep{darwiche2001deter, darwiche2002map}, before being naturally extended to arithmetic/probabilistic circuits \citep{darwiche2003differential}. Later, a stronger property known as \textit{strong determinism} was introduced \citep{pipatsrisawat2010strdet, darwiche2011sdd, kisapsdd2014} as a convenient means of enforcing determinism in structured decomposable circuits by tying determinism to the scope decomposition, resulting in the (probabilistic) sentential decision diagram (SDD). \citet{oztok2016sdp} further introduced the notion of \textit{constrained vtrees}, which restricts the structure (scopes, and thus support) of the SDD vtree in order to solve problems on weighted Boolean formulae. Finally, \citet{choi2020probcirc} recently introduced a more general support property called \textit{marginal determinism}, which applies to general probabilistic circuits and is not directly tied to the scope decomposition; our work shows how to construct marginal deterministic circuits, previously considered an intractable task \citep{choi2022mmapexact}. Marginal determinism is sufficient for tractability of some marginal MAP queries \citep{huang2006map}.

\paragraph{Causality and Probabilistic Circuits} The relationship between probabilistic circuits and causality has its roots in the seminal \textit{compilation} methods of \citet{darwiche2003differential}, which described an inference approach for (causal) Bayesian networks that involved compiling their graphs into tractable arithmetic circuits; subsequent work has further examined causality and compiled circuits \citep{butz2020decompilation, wang2021provable, darwiche2021causal, chen2021treewidth}. 
However, obtaining an exact causal interpretation of more general, learned probabilistic circuits has remained an open problem \citep{zhao2015spnbn, papantonis2020causal}. The only practical prior causal inference method for such circuits is the neural parameterization of \citet{zecevic2021interventional}, but this lacks exactness guarantees and is only applicable to fully observed settings. In contrast, we consider exact causal inference using do-calculus, where circuits encode the observed probability distribution.

\section{PRELIMINARIES}

\paragraph{Notation} We use uppercase to denote a random variable (e.g., $\pcvarsingle$) and lowercase for an instantiation of a variable (e.g., $\pcvarsingleval$). Sets of variables (and their instantiations) are denoted using bold font (e.g., $\pcvars, \pcvarsval$), and we use $\values$ for the set of all instantiations of a set of variables (e.g., $\values(\pcvars)$).

Probabilistic circuits (PC) \citep{choi2020probcirc} are computational graphs which encode a non-negative function over a set of variables; in particular, they are often used to model (possibly unnormalized) probability distributions. 

\begin{definition}[Probabilistic Circuit]

A circuit $\circuit$ over variables $\pcvars$ is a parameterized rooted graph, consisting of three types of nodes $\genericunit$: leaf $\leafunit$, sum $\sumunit$ and product $\productunit$. Leaf nodes $\leafunit$ are leaves of the graph, while each internal node (sum or product) $\genericunit$ has a set of children, denoted $\pcchild(\genericunit)$. Sum nodes have a parameter/weight $\pcparam_i \in \mathbb{R}^{\geq 0}$ associated with each of their children $\genericunit_i$.

Each leaf node $\leafunit$ encodes a non-negative function $\pcfunc_\leafunit: \scope(\leafunit) \to \mathbb{R}^{\geq 0}$ over a subset of variables $\scope(\leafunit) \subseteq \pcvars$, known as its \textnormal{scope}. The function encoded by each internal node $\genericunit$ is then given by:
\begin{equation*}
\pcfunc_\genericunit(\pcvars) := 
\begin{cases}
\prod_{\genericunit_i \in \pcchild(\genericunit)} \pcfunc_{\genericunit_i}(\pcvars) & \text{if $\genericunit$ is a product} \footnotemark\\
\sum_{\genericunit_i \in \pcchild(\genericunit)} \pcparam_i \pcfunc_{\genericunit_i}(\pcvars) & \text{if $\genericunit$ is a sum} \\
\end{cases}
\end{equation*}
The function encoded by the circuit, $\pcfunc_{\circuit}(\pcvars)$, is defined to be the function encoded by its root node $\rootunit$.
The \textit{size} of a circuit, denoted $|\circuit|$, is defined to be the number of edges in the circuit.
\end{definition}
\footnotetext{We assume in this paper that each product node has exactly two children; this does not lose generality as any product node can be converted into a sequence of binary products.}

\begin{definition}[Scope and support of PC node]
The scope of an internal node $\genericunit$ is the set of variables $\pcfunc_\genericunit$ specifies a function over, recursively defined by $\scope(\genericunit) := \bigcup_{\genericunit_i \in \pcchild(\genericunit)} \scope(\genericunit_i)$. The support of any node $\genericunit$ is the set of all instantiations of its scope s.t. $\pcfunc_\genericunit$ is positive, defined as $\supp(\genericunit) := \{\varsubsetval \in \values(\scope(\genericunit)): \pcfunc_\genericunit(\varsubsetval) > 0\}$.
\end{definition}

The tractability of probabilistic circuits depends on the scope and support properties they satisfy. A PC is \textit{decomposable} if the children of a product node have distinct scopes (and thus partition the scope of the product node), and is \textit{smooth} if the children of a sum node have the same scope. Decomposability and smoothness together enable tractable marginal inference; that is, for any subset $\varsubset \subseteq \scope(\genericunit)$ of the scope of a node $\genericunit$, we can compute $\pcfunc_\genericunit(\varsubset)$ efficiently, where $\pcfunc_\genericunit(\varsubset) := \sum_{\scope(\genericunit) \setminus \varsubset} \pcfunc_\genericunit(\scope(\genericunit))$ is the \textit{marginal} of the function. A stronger version of decomposability known as \textit{structured decomposability} \citep{pipatsrisawat2008compilation, kisapsdd2014} requires the scope of product nodes to decompose according to a \textit{vtree}.
Structured decomposability enables efficient computation of additional operations/queries, notably the product of two circuits respecting the same vtree \citep{pipatsrisawat2008compilation, shen2016operations}. 
As for support properties, a PC is \textit{deterministic} if, for every instantiation $\varsubsetval$ of the scope of a sum node, at most one of its children $\genericunit_i$ evaluates to a non-zero value $\pcfunc_{\genericunit_i}(\varsubsetval)$ (equivalently, the supports of the children are distinct). Determinism enables tractability of the MAP inference query, i.e. computing $\max_{\pcvars \setminus \evidence} \pcfunc_{\genericunit}(\pcvars \setminus \evidence, \evidenceval)$ for some instantiation $\evidenceval$ of a set of evidence variables $\evidence \subseteq \pcvars$.

\section{A UNIFYING FRAMEWORK FOR SUPPORT PROPERTIES IN STRUCTURED DECOMPOSABLE CIRCUITS} \label{sec:mdvtree}

In this section, we describe our md-vtree framework, which integrates support properties into the vtree formulation of structured decomposable circuits. Using this unifying perspective, we derive a trade-off between the generality of md-vtree circuit classes and tractability, and necessary conditions for optimality of this trade-off.

\subsection{Structured Decomposability}

The property of structured decomposability is defined with respect to a variable tree known as a \textit{vtree}. 

\begin{restatable}[Vtree]{definition}{vtreedef}
A vtree $\vtree = (\vnodes, \vedges)$ for a set of variables $\pcvars$ is a rooted binary tree with nodes $\vnodes$ and edges $\vedges$, whose leaves $\vnode$ each correspond to a subset $\scope(\vnode) \subseteq \pcvars$, such that the subsets for all leaves form a partition of $\pcvars$.
\end{restatable}

We define the \textit{scope} of a leaf $\vnode$ to be $\scope(\vnode)$, and the scope of any other node to be $\scope(\vnode) = \cup_{\vnode_i \in ch(\vnode)} \scope(\vnode_i)$. Further, we write $\vtree_{\vnode} = (\vnodes_{\vnode}, \vedges_{\vnode})$ to denote the vtree rooted at $\vnode$.

Intuitively, a vtree specifies how the scope of product nodes decompose in a circuit. However, our definition of structured decomposability differs from typical recursive definitions \citep{pipatsrisawat2008compilation,darwiche2011sdd,shih2019smooth} in that the key condition is on the scopes of the sum/leaf nodes, without directly placing conditions on the product nodes (besides decomposability): 

\begin{restatable}[PC respecting vtree]{definition}{respectdef}
Let $\circuit$ be a PC and $\vtree = (\vnodes, \vedges)$ be a vtree, both over variables $\pcvars$. We say that $\circuit$ respects $\vtree$ if (1) $\circuit$ is smooth\footnote{A structured decomposable circuit can be smoothed in near-linear time \citep{shih2019smooth}.} and decomposable; and (2) for every leaf node and non-trivial\footnote{A sum node is non-trivial iff it has more than one child.} sum node $\genericunit \in \circuit$, there exists a vtree node $\vnode \in \vnodes$ such that $\scope(\genericunit) = \scope(\vnode)$.
\end{restatable}

\begin{restatable}[Structured Decomposability]{definition}{strdecdef}
A PC $\circuit$ is structured decomposable if it respects some vtree $\vtree$.
\end{restatable}

Structured decomposability enables tractable products of circuits respecting compatible vtrees, i.e. those which have the same structure when projected onto their common variables $\commonvars := \pcvars^{(1)} \cap \pcvars^{(2)}$ (see Appendix for details). 
In the next subsection, we will show how support properties, which are also specified on the sum nodes in the circuit, can be neatly integrated into vtrees.

\subsection{Structured Marginal Determinism}

\begin{figure*}[t]
    \centering
    \begin{subfigure}{0.24\linewidth}
        \centering
        \scalebox{0.6}{
        \begin{tikzpicture}

\node[] (1234) {$\{\pcvarsingle_1, \pcvarsingle_2, \pcvarsingle_3, \pcvarsingle_4\}$};
\node[below= of 1234, xshift=-1.5cm] (12) {$\{\pcvarsingle_1, \pcvarsingle_2\}$};
\node[below= of 1234, xshift=1.5cm] (34) {$\{\pcvarsingle_3, \pcvarsingle_4\}$};
\node[below= of 12, xshift=-1cm] (1) {$\{\pcvarsingle_1\}$};
\node[below= of 12, xshift=1cm] (2) {$\{\pcvarsingle_2\}$};
\node[below= of 34, xshift=-1cm] (3) {$\{\pcvarsingle_3\}$};
\node[below= of 34, xshift=1cm] (4) {$\{\pcvarsingle_4\}$};

\draw[-] (1234) -- (12);
\draw[-] (1234) -- (34);
\draw[-] (12) -- (1);
\draw[-] (12) -- (2);
\draw[-] (34) -- (3);
\draw[-] (34) -- (4);

\end{tikzpicture}
        }
        \caption{vtree with scope function $\scope$}
        \label{fig:psdd_example_scope}
    \end{subfigure}
    \begin{subfigure}{0.24\linewidth}
        \centering
        \scalebox{0.6}{
        \begin{tikzpicture}

\node[] (1234) {$\{\pcvarsingle_1, \pcvarsingle_2\}$};
\node[below= of 1234, xshift=-1.5cm] (12) {$\{\pcvarsingle_1\}$};
\node[below= of 1234, xshift=1.5cm] (34) {$\{\pcvarsingle_3\}$};
\node[below= of 12, xshift=-1cm] (1) {$\{\pcvarsingle_1\}$};
\node[below= of 12, xshift=1cm] (2) {$\{\pcvarsingle_2\}$};
\node[below= of 34, xshift=-1cm] (3) {$\{\pcvarsingle_3\}$};
\node[below= of 34, xshift=1cm] (4) {$\{\pcvarsingle_4\}$};

\draw[-] (1234) -- (12);
\draw[-] (1234) -- (34);
\draw[-] (12) -- (1);
\draw[-] (12) -- (2);
\draw[-] (34) -- (3);
\draw[-] (34) -- (4);

\end{tikzpicture}
        }
        \caption{vtree with label $\vlabel^{(psdd)}$}
        \label{fig:psdd_example_label}
    \end{subfigure}
    \begin{subfigure}{0.24\textwidth}
        \centering
        \scalebox{0.6}{
        \begin{tikzpicture}

\node[] (1234) {$\{\pcvarsingle_1, \pcvarsingle_2, \pcvarsingle_3, \pcvarsingle_4\}$};
\node[below= of 1234, xshift=-1.5cm] (12) {$\{\pcvarsingle_1, \pcvarsingle_2\}$};
\node[below= of 1234, xshift=1.5cm] (34) {$\{\pcvarsingle_3, \pcvarsingle_4\}$};
\node[below= of 12, xshift=-1cm] (1) {$\{\pcvarsingle_1\}$};
\node[below= of 12, xshift=1cm] (2) {$\{\pcvarsingle_2\}$};
\node[below= of 34, xshift=-1cm] (3) {$\{\pcvarsingle_3\}$};
\node[below= of 34, xshift=1cm] (4) {$\{\pcvarsingle_4\}$};

\draw[-] (1234) -- (12);
\draw[-] (1234) -- (34);
\draw[-] (12) -- (1);
\draw[-] (12) -- (2);
\draw[-] (34) -- (3);
\draw[-] (34) -- (4);

\end{tikzpicture}
        }
        \caption{vtree with label $\vlabel^{(det)}$}
        \label{fig:psdd_example_det}
    \end{subfigure}
    \begin{subfigure}{0.24\textwidth}
        \centering
        \scalebox{0.6}{
        \begin{tikzpicture}

\node[] (1234) {$\{\pcvarsingle_1, \pcvarsingle_2\}$};
\node[below= of 1234, xshift=-1.5cm] (12) {$\{\pcvarsingle_1, \pcvarsingle_2\}$};
\node[below= of 1234, xshift=1.5cm] (34) {$\{\pcvarsingle_3\}$};
\node[below= of 12, xshift=-1cm] (1) {$\{\pcvarsingle_1\}$};
\node[below= of 12, xshift=1cm] (2) {$\{\pcvarsingle_2\}$};
\node[below= of 34, xshift=-1cm] (3) {$\{\pcvarsingle_3\}$};
\node[below= of 34, xshift=1cm] (4) {$\{\pcvarsingle_4\}$};

\draw[-] (1234) -- (12);
\draw[-] (1234) -- (34);
\draw[-] (12) -- (1);
\draw[-] (12) -- (2);
\draw[-] (34) -- (3);
\draw[-] (34) -- (4);

\end{tikzpicture}
        }
        \caption{Optimal labels for PSDD}
        \label{fig:psdd_example2_optimal}
    \end{subfigure}
    \caption{Example of md-vtree with scope function, and three different labelling functions.}

    \label{fig:psdd_example}
\end{figure*}
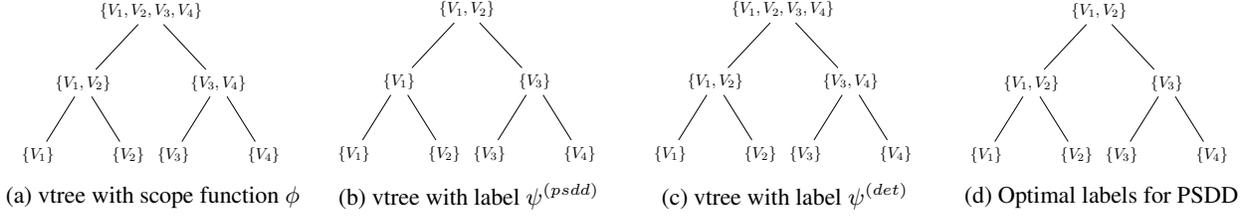

Our new definition of structured decomposability based on sum nodes provides the basis for us to specify a novel systematic characterization of support properties in structured decomposable circuits, which we call \textit{structured marginal determinism}. First, we reformulate the definitions of marginal determinism from \citet{choi2020probcirc}. 

\begin{definition}[Restricted Scope]
    For a PC node $\genericunit$ (resp. vtree node $\vnode$), and given a set $\commonvars \subseteq \pcvars$, the restricted scope is defined as $\scope_{\commonvars}(\genericunit) := \scope(\genericunit) \cap \commonvars$ (resp. $\scope_{\commonvars}(\vnode) := \scope(\vnode) \cap \commonvars$).
\end{definition}

\begin{definition}[Marginalized Support]
    For any PC node $\genericunit$ and subset of variables $\margdetvars \subseteq \pcvars$, we define the marginalized support of $\genericunit$ with respect to $\margdetvars$ as 
    $\supp_{\margdetvars}(\genericunit) := \{\margdetvarsval \in \values(\margdetvars): \pcfunc_{\genericunit}(\margdetvarsval) > 0\}$.
\end{definition}

Note that $\margdetvars$ can contain variables outside of $\scope(\genericunit)$; in a slight abuse of notation, we write $\pcfunc_{\genericunit}(\margdetvarsval)$ for $\pcfunc_{\genericunit}(\margdetvarsval \cap \scope(\genericunit))$.

\begin{definition}[Marginal Determinism]
    A sum node is marginal deterministic with respect to a subset $\margdetvars \subseteq \pcvars$ (written $\margdetvars$-deterministic) if the children of the sum node have distinct marginalized support, i.e. $\supp_{\margdetvars}(\genericunit_i) \cap \supp_{\margdetvars}(\genericunit_j) = \emptyset$ for $\genericunit_i, \genericunit_j$ distinct children of $\sumunit$.
\end{definition}

\begin{definition}[Marginal Determinism of PC]
A PC is marginal deterministic with respect to a subset $\margdetvars \subseteq \pcvars$ (written $\margdetvars$-deterministic) if for every sum node $\sumunit$, either:
\begin{itemize}
    \item $\scope(\sumunit)$ does not overlap with $\margdetvars$, i.e. $\scope_{\margdetvars}(\sumunit) = \emptyset$; or
    \item The sum node $\sumunit$ is $\margdetvars$-deterministic.
\end{itemize}
\end{definition}

For example, normal determinism is equivalent to $\circuit$ being marginal deterministic with respect to $\pcvars$. In general, there is no straightforward relation between $\margdetvars$-determinism and $\margdetvars'$-determinism for different sets $\margdetvars, \margdetvars'$. In particular, 
neither determinism (i.e. $\pcvars$-determinism) nor $\margdetvars$-determinism imply each other in general; for example, a circuit can be $\margdetvars$-deterministic but not deterministic if there exist some sum nodes with $\scope_{\margdetvars}(\sumunit)= \emptyset$. Thus, we use $\margdetset(\circuit)$ to denote the set of all sets $\margdetvars \subseteq \pcvars$ such that $\circuit$ is $\margdetvars$-deterministic; this provides a characterization of the support properties of the circuit.

Now, for a given PC $\circuit$, and any sum node $\sumunit$ in that PC, let $\vlabel(\sumunit)$ be the set of all sets $\margdetvars$ such that $\sumunit$ is $\margdetvars$-deterministic; we call this a \textit{labelling function}. Note that the label function $\vlabel$ is a \textit{specification} of marginal determinism for the circuit; that is, it is sufficient to deduce $\margdetset(\circuit)$. We make two observations that allow us to simplify the labelling function, one straightforward, and one more subtle. Firstly, we note that $\margdetvars$-determinism for the circuit imposes the same requirement on all nodes with the same scope; thus we restrict $\vlabel$ to have the same value for all sum nodes with the same scope. For structured decomposable circuits, we can thus write $\vlabel(\vnode)$ as a function of the vtree node $\vnode$.

The second observation is that, under some assumptions, we can actually specify $\vlabel(\vnode)$ using a \textit{single} set $\margdetvars \subseteq \pcvars$.

\begin{restatable}[Conflicting $\margdetvars$-Determinisms for Sum Nodes]{proposition}{propsupport} \label{prop:support}
Let $\circuit$ be a PC, and let $\margdetvars, \margdetvars' \subseteq \pcvars$ such that neither is a subset of the other. Suppose that there exists a non-trivial sum node $\sumunit$ in $\circuit$ that is $\margdetvars$-deterministic and $\margdetvars'$-deterministic, but not $(\margdetvars \cap \margdetvars')$-deterministic. 
Then the circuit rooted at $\sumunit$, $\circuit_\sumunit$, cannot have full support.
\end{restatable}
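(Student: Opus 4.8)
The plan is to argue by contradiction: assume that $\circuit_\sumunit$ has full support, i.e. $\supp(\sumunit) = \values(\scope(\sumunit))$, and show that this is incompatible with the three stated support conditions. I will work with the restricted scopes $A := \scope_\margdetvars(\sumunit)$ and $B := \scope_{\margdetvars'}(\sumunit)$, observing that $A \cap B = \scope_{\margdetvars \cap \margdetvars'}(\sumunit)$ and that (by smoothness) every child $\genericunit_i \in \pcchild(\sumunit)$ shares the scope $\scope(\sumunit)$. Since all node functions are non-negative, an instantiation lies in $\supp_\margdetvars(\genericunit_i)$ exactly when its restriction to $A$ is the restriction $\restr{\varsubsetval}{A}$ of some full instantiation $\varsubsetval \in \supp(\genericunit_i)$; that is, the marginalized support is the projection of the support onto $A$ (and likewise for $B$ and $A \cap B$). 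Under this correspondence, $\margdetvars$-determinism of $\sumunit$ states that the projections of the children's supports onto $A$ are pairwise disjoint, $\margdetvars'$-determinism states the same for $B$, and the failure of $(\margdetvars \cap \margdetvars')$-determinism states that the projections onto $A \cap B$ are \emph{not} pairwise disjoint.

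From this last failure I would extract a witness: two distinct children $\genericunit_1, \genericunit_2 \in \pcchild(\sumunit)$ and a common value $c \in \values(A \cap B)$ lying in the $(A \cap B)$-projection of both $\supp(\genericunit_1)$ and $\supp(\genericunit_2)$. Lifting back to full instantiations yields $\varsubsetval_1 \in \supp(\genericunit_1)$ and $\varsubsetval_2 \in \supp(\genericunit_2)$ that both restrict to $c$ on $A \cap B$. The crux of the argument is then to glue these into a single hybrid instantiation $\varsubsetval^\ast \in \values(\scope(\sumunit))$ that agrees with $\varsubsetval_1$ on $A$ and with $\varsubsetval_2$ on $B$ (and is filled in arbitrarily elsewhere). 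This gluing is well-defined precisely because $\varsubsetval_1$ and $\varsubsetval_2$ already coincide on the overlap $A \cap B$, both being equal to $c$ there, and it is exactly the shared value $c$ supplied by the failure of $(\margdetvars \cap \margdetvars')$-determinism that makes the two prescriptions consistent.

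Finally I would invoke full support: $\varsubsetval^\ast \in \supp(\sumunit)$, and since the children's functions are non-negative this forces $\varsubsetval^\ast \in \supp(\genericunit_k)$ for some child $\genericunit_k$ of positive weight. Restricting to $A$, the value $\restr{\varsubsetval^\ast}{A} = \restr{\varsubsetval_1}{A}$ then lies in the $A$-projections of both $\supp(\genericunit_k)$ and $\supp(\genericunit_1)$, so $\margdetvars$-determinism (pairwise disjointness on $A$) forces $\genericunit_k = \genericunit_1$. But restricting instead to $B$, the value $\restr{\varsubsetval^\ast}{B} = \restr{\varsubsetval_2}{B}$ now lies in the $B$-projections of both $\supp(\genericunit_1)$ and $\supp(\genericunit_2)$, which is impossible for the distinct children $\genericunit_1 \neq \genericunit_2$ under $\margdetvars'$-determinism. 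This contradiction completes the proof. I expect the main obstacle to be the hybrid-instantiation step: verifying simultaneously that it is well-defined (consistency on $A \cap B$) and that full support routes it to a single child, which the two projection-disjointness conditions then force to be two different children. By comparison, the reformulation of marginal determinism as projection disjointness is routine bookkeeping once the non-negativity of node functions is used.
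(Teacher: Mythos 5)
Your proof is correct and is essentially the paper's argument in contrapositive form: both hinge on extracting, from the failure of $(\margdetvars \cap \margdetvars')$-determinism, compatible partial assignments from two distinct children that agree on the overlap, gluing them into a single assignment, and using $\margdetvars$- and $\margdetvars'$-determinism to show that no child (and hence the sum node) can be positive there. The paper directly exhibits this glued assignment as a point of zero probability, whereas you assume full support and derive the contradiction; the substance is the same.
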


Proposition \ref{prop:support} says that, if we want $\vlabel(\vnode)$ to contain two sets $\margdetvars, \margdetvars'$ which are not subsets of each other, then this necessarily restricts the support of the circuit. While it can be beneficial to enforce a restricted support on a PC if we have prior knowledge \citep{kisapsdd2014}, it is undesirable in our case where restricting support comes as a \textit{side effect} of enforcing tractability, as this can result in bias when learning. As such, we only consider labellings $\vlabel(\vnode)$ where, for every $\margdetvars, \margdetvars' \in \vlabel(\vnode)$, we have $\margdetvars \subseteq \margdetvars'$ or $\margdetvars' \subseteq \margdetvars$. 

\begin{restatable}[Superset $\margdetvars$-Determinisms for Sum Nodes]{proposition}{propsubset}\label{prop:supset}
Suppose that a sum node $\sumunit$ is $\margdetvars$-deterministic. Then it is also $\margdetvars'$-deterministic for any $\margdetvars \subseteq \margdetvars' \subseteq \pcvars$.
\end{restatable}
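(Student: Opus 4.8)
The plan is to reduce the statement to a single monotonicity property of marginalized support: enlarging the marginalization set can only refine the support, in the sense that the natural restriction map sends $\supp_{\margdetvars'}(\genericunit)$ into $\supp_{\margdetvars}(\genericunit)$ whenever $\margdetvars \subseteq \margdetvars'$. Since determinism is phrased as disjointness of the children's (marginalized) supports, this refinement will immediately transfer disjointness from the $\margdetvars$-level up to the $\margdetvars'$-level.

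First I would establish the projection lemma. Fix a node $\genericunit$ and suppose $\margdetvarsval' \in \supp_{\margdetvars'}(\genericunit)$, i.e.\ $\pcfunc_\genericunit(\margdetvarsval') > 0$; let $\margdetvarsval = \restr{\margdetvarsval'}{\margdetvars}$ be its restriction to $\margdetvars$. By the stated abuse of notation, both quantities depend only on the scope, so the relevant comparison is between $\pcfunc_\genericunit$ marginalized to $\margdetvars \cap \scope(\genericunit)$ and to $\margdetvars' \cap \scope(\genericunit)$. Since $\margdetvars \subseteq \margdetvars'$, the coarser marginal $\pcfunc_\genericunit(\margdetvarsval)$ is obtained from the finer one by additionally summing out the variables in $(\margdetvars' \setminus \margdetvars) \cap \scope(\genericunit)$; concretely, $\pcfunc_\genericunit(\margdetvarsval)$ is a sum over all completions of those extra variables, one term of which is exactly $\pcfunc_\genericunit(\margdetvarsval')$. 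As every leaf function — and hence, by the recursive definition, every node function — is non-negative, all terms in this sum are $\geq 0$, so $\pcfunc_\genericunit(\margdetvarsval) \geq \pcfunc_\genericunit(\margdetvarsval') > 0$, giving $\margdetvarsval \in \supp_{\margdetvars}(\genericunit)$.

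Next I would apply this to the children of $\sumunit$ via the contrapositive. Suppose $\sumunit$ were \emph{not} $\margdetvars'$-deterministic; then there are distinct children $\genericunit_i, \genericunit_j \in \pcchild(\sumunit)$ and a common point $\margdetvarsval' \in \supp_{\margdetvars'}(\genericunit_i) \cap \supp_{\margdetvars'}(\genericunit_j)$. Applying the projection lemma to each child separately, the single restriction $\margdetvarsval = \restr{\margdetvarsval'}{\margdetvars}$ lies in both $\supp_{\margdetvars}(\genericunit_i)$ and $\supp_{\margdetvars}(\genericunit_j)$, so these marginalized supports overlap — contradicting the assumption that $\sumunit$ is $\margdetvars$-deterministic. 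Hence $\sumunit$ must be $\margdetvars'$-deterministic, as claimed.

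The only delicate point is the bookkeeping in the projection lemma: correctly splitting the marginalization $\scope(\genericunit) \setminus \margdetvars = \big( (\margdetvars' \setminus \margdetvars) \cap \scope(\genericunit) \big) \cup \big( \scope(\genericunit) \setminus \margdetvars' \big)$ and keeping track of which variables are summed over, together with the harmless handling of variables of $\margdetvars'$ and $\margdetvars$ lying outside $\scope(\genericunit)$, which the abuse of notation simply ignores. Non-negativity is exactly what makes the inequality $\pcfunc_\genericunit(\margdetvarsval) \geq \pcfunc_\genericunit(\margdetvarsval')$ hold, so no structural property beyond that of a valid PC is needed; in particular the result requires neither smoothness, decomposability, nor structured decomposability.
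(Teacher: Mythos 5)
Your proposal is correct and follows essentially the same route as the paper: the paper's proof observes that $\dist_{\genericunit_i}(\margdetvarsval) = \sum_{\margdetvars' \setminus \margdetvars} \dist_{\genericunit_i}(\margdetvarsval, \margdetvars' \setminus \margdetvars) = 0 \implies \dist_{\genericunit_i}(\margdetvarsval') = 0$, which is precisely the contrapositive of your projection lemma based on non-negativity of the summed-out terms. Your write-up is merely more explicit about the bookkeeping and the final disjointness-of-supports step.
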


Using Proposition \ref{prop:supset}, it now follows that $\vlabel(\vnode)$ must take the form $\{\margdetvars' | \margdetvars \subseteq \margdetvars' \subseteq \pcvars\}$ for some $\margdetvars$. As a result, we can just label our vtree node $\vnode$ with $\margdetvars$, i.e. $\vlabel(\vnode) = \margdetvars$. 

This motivates our characterization of structural marginal determinism based on the concept of a \textit{md-vtree}, which provides a means of specifying the support properties that a structured decomposable circuit satisfies.

\begin{definition} [md-vtree]
A md-vtree $\mdvtree = (\vtree, \vlabel)$ for a set of variables $\pcvars$ consists of a vtree $\vtree = (\vnodes, \vedges)$ over $\pcvars$, together with a labelling function $\vlabel$.

The labelling function maps a vtree node $\vnode \in \vnodes$ to some element in $\powerset(\scope(\vnode)) \cup \{\vlabelno\}$, where $\vlabelno$ is the \textit{universal set}.\footnote{The universal set satisfies, for any set $S$, $\vlabelno \supseteq S$, $\vlabelno \not\subseteq S$ (unless $S$ is $\vlabelno$), $\vlabelno \cap S = S$, and $\vlabelno \cup S = \vlabelno$.}
\end{definition}

\begin{definition} [PC respecting md-vtree]
Let $\circuit$ be a PC and $\mdvtree = (\vtree, \vlabel)$ be a md-vtree, both over variables $\pcvars$. Then we say that $\circuit$ respects $\mdvtree$ if 1) $\circuit$ respects $\vtree$; and 2) for any sum unit $\sumunit \in \circuit$, $\sumunit$ is marginally deterministic with respect to $\vlabel(\vnode)$, where $\vnode$ is the vtree node such that $\scope(\sumunit) = \scope(\vnode)$.
    
We denote the class of circuits respecting $\mdvtree$ by $\circuitclass_{\mdvtree}$.
\end{definition}

Intuitively, md-vtrees capture both structured decomposability, as well as a marginal determinism ``pattern'' that the circuit must follow:

\begin{definition} [Implied $\margdetvars$-Determinisms]
For any set $\margdetvars \subseteq \pcvars$, we say that $\margdetvars$-determinism is implied by a md-vtree $\mdvtree$ if, for every vtree node $\vnode \in \vnodes$ such that $\scope(\vnode) \cap \margdetvars \neq \emptyset$, it is the case that $\margdetvars \supseteq \vlabel(\vnode)$. We write $\margdetset(\mdvtree)$ to denote the set of all sets $\margdetvars$ s.t. $\margdetvars$-determinism is implied by $\mdvtree$. 
\end{definition}

\begin{restatable}[Validity of Implied $\margdetvars$-Determinisms]{proposition}{propimplied} 
For any PC $\circuit$ respecting md-vtree $\mdvtree$, both over $\pcvars$, and any $\margdetvars \subseteq \pcvars$ s.t. $\mdvtree$ implies $\margdetvars$-determinism, it follows that $\circuit$ is $\margdetvars$-deterministic. 
\end{restatable}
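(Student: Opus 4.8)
The plan is to unfold the definition of $\margdetvars$-determinism of a PC, which quantifies over every sum node, and to verify the required disjunction (namely ``$\scope_{\margdetvars}(\sumunit) = \emptyset$ or $\sumunit$ is $\margdetvars$-deterministic'') one sum node at a time. Since both the conclusion (a property of $\circuit$) and the hypothesis (that $\mdvtree$ implies $\margdetvars$-determinism) are stated as conditions ranging over all sum/vtree nodes, it suffices to fix an arbitrary sum node $\sumunit$ and show it falls into one of the two cases, exploiting the matching between sum-node scopes and vtree-node scopes furnished by $\circuit$ respecting $\vtree$.

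First I would dispatch the trivial sum nodes: a sum node with a single child is vacuously $\margdetvars$-deterministic (there are no two distinct children whose marginalized supports could intersect), so it satisfies the second disjunct regardless of $\margdetvars$. For a non-trivial sum node $\sumunit$, the fact that $\circuit$ respects $\vtree$ guarantees a vtree node $\vnode$ with $\scope(\sumunit) = \scope(\vnode)$, and I would then split on whether $\scope(\vnode)$ meets $\margdetvars$. If $\scope(\vnode) \cap \margdetvars = \emptyset$, then $\scope_{\margdetvars}(\sumunit) = \scope(\sumunit) \cap \margdetvars = \emptyset$ and the first disjunct holds. Otherwise $\scope(\vnode) \cap \margdetvars \neq \emptyset$, so the hypothesis that $\mdvtree$ implies $\margdetvars$-determinism gives $\margdetvars \supseteq \vlabel(\vnode)$; since $\circuit$ respects $\mdvtree$, the node $\sumunit$ is $\vlabel(\vnode)$-deterministic, and Proposition~\ref{prop:supset} (applied with $\vlabel(\vnode) \subseteq \margdetvars \subseteq \pcvars$) lifts this to $\margdetvars$-determinism, giving the second disjunct.

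The one point needing care is the universal-set label $\vlabelno$, where I would argue that the problematic configuration simply cannot arise under the hypothesis. If $\vlabel(\vnode) = \vlabelno$ and $\scope(\vnode) \cap \margdetvars \neq \emptyset$, then implied determinism would demand $\margdetvars \supseteq \vlabelno$, which forces $\margdetvars = \vlabelno$ and contradicts $\margdetvars \subseteq \pcvars$; hence any vtree node carrying the label $\vlabelno$ must have scope disjoint from $\margdetvars$ and is already covered by the first disjunct, so Proposition~\ref{prop:supset} is only ever invoked with an honest subset $\vlabel(\vnode) \subseteq \margdetvars$. The remaining bookkeeping --- that trivial sum nodes need not possess a matching vtree node, and that smoothness ensures the children of $\sumunit$ share the scope $\scope(\vnode)$ so that ``distinct marginalized support'' is well-posed --- is routine, so I expect the proof to be short once the case split is set up correctly, with the universal-set convention being the only genuine subtlety.
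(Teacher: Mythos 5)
Your proof is correct and follows essentially the same route as the paper's: match each sum node to its vtree node via scope, case-split on whether $\scope(\vnode) \cap \margdetvars = \emptyset$, and otherwise use the implied-determinism hypothesis to get $\margdetvars \supseteq \vlabel(\vnode)$ and lift $\vlabel(\vnode)$-determinism to $\margdetvars$-determinism via Proposition~\ref{prop:supset}. Your explicit handling of trivial sum nodes and of the universal-set label $\vlabelno$ is a welcome tightening of edge cases the paper glosses over, but it does not change the argument.
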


Notice that there is a trade-off between generality (expressivity) of the PC class, and the support properties it supports. Increasing the size of the labelling sets will improve the former, but hurt the latter. 

\begin{restatable}[Generality-Tractability Tradeoff]{theorem}{propstrength} \label{prop:strength}
Let $\mdvtree = (\vtree, \vlabel)$ and $\mdvtree' = (\vtree, \vlabel')$ be two md-vtrees, such that $\vlabel'(\vnode) \supseteq \vlabel(\vnode)$ for all $\vnode \in \vnodes$. Then we have that $\margdetset(\mdvtree) \supseteq \margdetset(\mdvtree')$, and $\circuitclass_{\mdvtree} \subseteq \circuitclass_{\mdvtree'}$.
\end{restatable}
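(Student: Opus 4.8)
The plan is to prove the two inclusions separately, exploiting that $\mdvtree$ and $\mdvtree'$ share the \emph{same} underlying vtree $\vtree$. Consequently the node scopes $\scope(\vnode)$, and the association between a sum node $\sumunit$ and the vtree node $\vnode$ with $\scope(\sumunit) = \scope(\vnode)$, are identical for both md-vtrees; the only difference is the pointwise relation $\vlabel'(\vnode) \supseteq \vlabel(\vnode)$. Each inclusion then reduces to a monotonicity argument in the label sets, and the whole proof is essentially transitivity of $\supseteq$ together with Proposition \ref{prop:supset}.

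For the first inclusion $\margdetset(\mdvtree) \supseteq \margdetset(\mdvtree')$, I would take an arbitrary $\margdetvars \in \margdetset(\mdvtree')$ and verify $\margdetvars \in \margdetset(\mdvtree)$ straight from the definition of implied determinism. Fix any vtree node $\vnode$ with $\scope(\vnode) \cap \margdetvars \neq \emptyset$. Since $\margdetvars \in \margdetset(\mdvtree')$, we have $\margdetvars \supseteq \vlabel'(\vnode)$; combining with the hypothesis $\vlabel'(\vnode) \supseteq \vlabel(\vnode)$ gives $\margdetvars \supseteq \vlabel(\vnode)$ by transitivity. As this holds for every overlapping node, $\mdvtree$ implies $\margdetvars$-determinism, so $\margdetvars \in \margdetset(\mdvtree)$.

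For the second inclusion $\circuitclass_{\mdvtree} \subseteq \circuitclass_{\mdvtree'}$, I would take any $\circuit \in \circuitclass_{\mdvtree}$ and check that it respects $\mdvtree'$. Condition~(1), respecting $\vtree$, is immediate because both md-vtrees carry the same vtree. For condition~(2), fix a sum node $\sumunit$ and let $\vnode$ be the (common) vtree node with $\scope(\sumunit) = \scope(\vnode)$. By assumption $\sumunit$ is $\vlabel(\vnode)$-deterministic; since $\vlabel(\vnode) \subseteq \vlabel'(\vnode)$, Proposition \ref{prop:supset} upgrades this to $\vlabel'(\vnode)$-determinism, which is exactly what respecting $\mdvtree'$ demands.

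The one point requiring care, and the main obstacle, is the universal label $\vlabelno$, since Proposition \ref{prop:supset} is stated only for genuine subsets $\margdetvars' \subseteq \pcvars$. I would dispatch it by case analysis on $\vlabel(\vnode)$ and $\vlabel'(\vnode)$. When both are genuine subsets, the arguments above apply verbatim. When $\vlabel'(\vnode) = \vlabelno$, marginal determinism with respect to $\vlabelno$ is the weakest (vacuous) requirement, so condition~(2) for $\mdvtree'$ holds trivially. When $\vlabel(\vnode) = \vlabelno$, the property that $\vlabelno \not\subseteq S$ unless $S = \vlabelno$ forces $\vlabel'(\vnode) = \vlabelno$, reducing to the previous case. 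The same observation settles the universal case in the first inclusion: if $\margdetvars \subseteq \pcvars$ overlapped a node with $\vlabel'(\vnode) = \vlabelno$, then $\margdetvars \supseteq \vlabelno$ would force $\margdetvars = \vlabelno$, a contradiction, so the transitivity step never actually meets the universal set.
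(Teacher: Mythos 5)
Your proof is correct and follows essentially the same route as the paper's: the first inclusion by transitivity of $\supseteq$ through the definition of implied determinism, and the second by upgrading $\vlabel(\vnode)$-determinism of each sum node to $\vlabel'(\vnode)$-determinism via Proposition \ref{prop:supset}. Your explicit case analysis for the universal label $\vlabelno$ is a welcome bit of extra care that the paper's proof glosses over, but it does not change the argument.
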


It is worth commenting on the two extremes of possible labels; namely, the universal set, and the empty set. The role of the universal set label $\vlabelno$ is to indicate that no $\margdetvars$-determinism properties hold (including normal determinism). On the other hand, the empty set indicates that any sum node $\sumunit$ corresponding to $\vnode$ can only have one child $\genericunit_i$ which is not zero, i.e. $\pcfunc_{\genericunit_i} \equiv 0$; thus the sum node must be trivial. In practice, this means that it must represent a factorized distribution with factors corresponding to the scopes of the children of $\vnode$.

\paragraph{Examples} To the best of our knowledge, the only concrete type of probabilistic circuit proposed in the literature that imposes non-trivial marginal determinism constraints (i.e., not just normal determinism) is the probabilistic sentential decision diagram (PSDD) \citep{kisapsdd2014}.
PSDDs satisfy structured decomposability and a property known as strong determinism. In the language of md-vtrees, this corresponds to requiring that, for any non-leaf vtree node $\vnode$, and children $\vnode_1, \vnode_2$ of $\vnode$, the label $\vlabel(\vnode)$ is either $\scope(\vnode_1)$ or $\scope(\vnode_2)$ (which is then referred to as the \emph{left} child). For example, for the vtree over $\pcvars = \{\pcvarsingle_1, \pcvarsingle_2, \pcvarsingle_3, \pcvarsingle_4\}$, shown in Figure \ref{fig:psdd_example_scope} together with the scopes for each vtree node, the label function $\vlabel^{(psdd)}$ is given on the right in Figure \ref{fig:psdd_example_label}. 

Despite implementing strong determinism, recent work has shown that almost all of the tractable queries and operations that PSDDs support require only structured decomposability and determinism \citep{dang2020strudel}. This raises the question of whether strong determinism adds anything. To analyse this, in Figure \ref{fig:psdd_example_det} we show the labelling function $\vlabel^{(\text{det})}$ which defines a deterministic circuit.
With these representations, we can deduce the $\margdetvars$-determinism properties that any PSDD, or structured decomposable and deterministic circuit, must satisfy, by finding the set $\margdetset(\mdvtree)$ of sets $\margdetvars$ which its md-vtree implies. In this example, by enumerating all sets $\margdetvars \subseteq \pcvars$ and checking the condition, we can see that $\margdetset(\mdvtree^{\text{psdd}}) = \{\{\pcvarsingle_1, \pcvarsingle_2\}, \{\pcvarsingle_1, \pcvarsingle_2, \pcvarsingle_3\}, \{\pcvarsingle_1, \pcvarsingle_2, \pcvarsingle_3, \pcvarsingle_4\}\}$, while $\margdetset(\vtree^{\text{det}}) = \{\{\pcvarsingle_1, \pcvarsingle_2, \pcvarsingle_3, \pcvarsingle_4\}\}$. This shows that PSDDs do have additional $\margdetvars$-determinisms, which means, for example, that they are more tractable with regards to MMAP queries. In fact, in the particular case of (P)SDDs, the implied  $\margdetvars$-determinisms $\margdetset(\mdvtree^{\text{psdd}})$ coincide with the definition of $\margdetvars$-constrained vtrees \citep{oztok2016sdp}.

\subsection{Regular md-vtrees}

Given the trade-off between expressivity and tractability for md-vtrees, one might ask how to choose the labelling function in practice. One reasonable strategy would be to enforce some marginal determinism properties that we require for tractability of some inference task, and optimize for expressivity within this constraint. 

\begin{problem}[Labelling Selection]
Given a vtree $\vtree$, and a set $\targetmdset$ of subsets $\margdetvars \subseteq \pcvars$, choose a labelling function $\vlabel$ such that $\mdvtree = (\vtree, \vlabel)$ implies $\margdetvars$-determinism for all $\margdetvars \in \targetmdset$, i.e. $\margdetset(\mdvtree) \supseteq \targetmdset$, while maximizing expressivity. 
\end{problem}

\begin{algorithm}[t]
\SetAlgoLined
\KwInput{vtree $\vtree = (\vnodes, \vedges)$, required set of marginal determinisms $\targetmdset$
}
\KwResult{labelling function $\vlabel$ for $\vtree$}

\For{$\vnode \in \vnodes$} { 
    $\vlabel(\vnode) \gets \vlabelno$
    
    \For{$\margdetvars \in \targetmdset$} { \label{algline:qloopbegin}
        \If{$\margdetvars \cap \scope(\vnode) \neq \emptyset$} {
            $\vlabel(\vnode) \gets \vlabel(\vnode) \cap \margdetvars$ \label{algline:qloopend}
        }
    }
    \If{$\vlabel(\vnode) \neq \vlabelno$} { \label{algline:edgecasebegin}
        $\vlabel(\vnode) \gets \vlabel(\vnode) \cap \scope(\vnode)$ \label{algline:edgecaseend}
    }
}

\textbf{Return} $\mdvtree = (\vtree, \vlabel)$

\caption{Optimal Labelling}
\label{alg:optlabel}
\end{algorithm}

We propose a simple algorithm to tackle this problem (Algorithm \ref{alg:optlabel}), which directly enforces the necessary labels for each vtree nodes; it can be seen that it is optimally expressive, in the sense that increasing the size of any labelling set will result in some losing $\margdetvars$-determinism for some $\margdetvars \in \targetmdset$. It turns out that such labelling functions have a very specific structure:

\begin{restatable}[Regular md-vtree]{definition}{defnregular}\label{defn:regular}
We say that a md-vtree $\mdvtree = (\vtree, \vlabel)$ is regular if for every non-leaf node $\vnode$, and its children $\vnode_1, \vnode_2$, it holds that either $\vlabel(\vnode) = \vlabel(\vnode_1)$, $\vlabel(\vnode) = \vlabel(\vnode_2)$, or $\vlabel(\vnode) = \vlabel(\vnode_1) \cup \vlabel(\vnode_2)$.
\end{restatable}

\begin{restatable}[Regularity of Algorithm \ref{alg:optlabel}]{proposition}{propalg}
The output of Algorithm \ref{alg:optlabel} is a regular md-vtree.
\end{restatable}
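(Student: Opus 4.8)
The plan is to first reduce Algorithm~\ref{alg:optlabel} to a closed form for the label it produces, and then argue regularity by set algebra. For a vtree node $\vnode$, write $\targetmdset_\vnode := \{\margdetvars \in \targetmdset : \margdetvars \cap \scope(\vnode) \neq \emptyset\}$ for the target sets that pass the \texttt{if} test. Unrolling the inner loop (starting from $\vlabelno$ and using $\vlabelno \cap \margdetvars = \margdetvars$) and then applying the final conditional, one obtains $\vlabel(\vnode) = \vlabelno$ if $\targetmdset_\vnode = \emptyset$, and $\vlabel(\vnode) = \big(\bigcap_{\margdetvars \in \targetmdset_\vnode}\margdetvars\big) \cap \scope(\vnode)$ otherwise. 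I would record this as a short lemma proved by induction on loop iterations; it is the only place the algorithm's control flow enters, and everything afterward is purely set-theoretic.

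Next I would fix a non-leaf node $\vnode$ with children $\vnode_1,\vnode_2$ and establish two facts. Since the vtree leaves partition $\pcvars$, the child scopes are disjoint and $\scope(\vnode) = \scope(\vnode_1) \sqcup \scope(\vnode_2)$; hence a target set meets $\scope(\vnode)$ iff it meets one of the child scopes, giving $\targetmdset_\vnode = \targetmdset_{\vnode_1} \cup \targetmdset_{\vnode_2}$. The second fact is the crux: I would use that the target determinisms are comparable (a chain under inclusion, the regime singled out by Proposition~\ref{prop:support} to avoid incidental support restrictions), so that the relevant family $\targetmdset_\vnode$ is totally ordered and $\bigcap_{\margdetvars \in \targetmdset_\vnode}\margdetvars$ is simply its smallest member. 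Indexing the chain as $\margdetvars^{(1)} \subseteq \cdots \subseteq \margdetvars^{(k)}$ and letting $i(\vnode)$ be the least index whose set meets $\scope(\vnode)$ ($i(\vnode)=\infty$ and $\vlabel(\vnode)=\vlabelno$ when none do), disjointness gives $i(\vnode) = \min(i(\vnode_1), i(\vnode_2))$ and, for finite $i(\vnode)$, $\vlabel(\vnode) = \margdetvars^{(i(\vnode))} \cap \scope(\vnode)$.

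The proof then closes by a case split on $i(\vnode_1)$ versus $i(\vnode_2)$. If both are $\infty$, all three labels are $\vlabelno$ and regularity (Definition~\ref{defn:regular}) holds trivially. If $i(\vnode_1) < i(\vnode_2)$ (including $i(\vnode_2)=\infty$), then $\margdetvars^{(i(\vnode))} = \margdetvars^{(i(\vnode_1))}$ is disjoint from $\scope(\vnode_2)$ by minimality of $i(\vnode_2)$, so intersecting with $\scope(\vnode) = \scope(\vnode_1) \sqcup \scope(\vnode_2)$ kills the right-hand part and leaves $\vlabel(\vnode) = \margdetvars^{(i(\vnode_1))} \cap \scope(\vnode_1) = \vlabel(\vnode_1)$; the symmetric inequality gives $\vlabel(\vnode) = \vlabel(\vnode_2)$. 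If $i(\vnode_1) = i(\vnode_2)$, the common smallest set distributes over the disjoint union, yielding $\vlabel(\vnode) = (\margdetvars^{(i(\vnode))}\cap\scope(\vnode_1)) \cup (\margdetvars^{(i(\vnode))}\cap\scope(\vnode_2)) = \vlabel(\vnode_1)\cup\vlabel(\vnode_2)$. Each case lands in one of the three admissible forms, so the output is regular.

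The main obstacle is the second structural fact and the coincident-index case it feeds. Nestedness of the target determinisms is genuinely needed here: without it, $\bigcap_{\margdetvars\in\targetmdset_\vnode}\margdetvars$ need not equal any single relevant set, and two incomparable targets meeting the two different child scopes can shrink $\vlabel(\vnode)$ strictly below $\vlabel(\vnode_1)\cup\vlabel(\vnode_2)$ --- in the extreme to $\emptyset$ while both children carry nonempty labels --- which would break regularity. So the substantive work is justifying the reduction to the smallest relevant set (equivalently, importing the comparability restriction of Proposition~\ref{prop:support}); the remaining steps are routine, and I would separately check that the $\vlabelno$ conventions (in particular $\vlabelno \cup \margdetvars = \vlabelno$) make the boundary cases with an absent child consistent with the disjointness argument above.
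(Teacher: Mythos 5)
Your proof follows the same skeleton as the paper's --- unroll the loop into the closed form $\vlabel(\vnode)=\scope(\vnode)\cap\bigl(\bigcap_{\margdetvars\in\targetmdset_\vnode}\margdetvars\bigr)$ (or $\vlabelno$ when $\targetmdset_\vnode=\emptyset$), observe $\targetmdset_\vnode=\targetmdset_{\vnode_1}\cup\targetmdset_{\vnode_2}$ from disjointness of child scopes, then case-split --- but it diverges at the crux, and your version is the more careful one. In the case where both children have relevant target sets, the paper asserts the identity $\scope(\vnode_1)\cap\bigl(\bigcap_{\margdetvars\in\targetmdset_{\vnode}}\margdetvars\bigr)=\scope(\vnode_1)\cap\bigl(\bigcap_{\margdetvars\in\targetmdset_{\vnode_1}}\margdetvars\bigr)$ and concludes $\vlabel(\vnode)=\vlabel(\vnode_1)\cup\vlabel(\vnode_2)$ by pure set algebra, with no appeal to nestedness. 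That identity fails whenever some $\margdetvars\in\targetmdset_{\vnode_2}\setminus\targetmdset_{\vnode_1}$ is disjoint from $\scope(\vnode_1)$: intersecting with it annihilates the left-hand side. Your warning is exactly on target --- take $\pcvars=\{\pcvarsingle_1,\pcvarsingle_2\}$, a root with leaf children of scopes $\{\pcvarsingle_1\}$ and $\{\pcvarsingle_2\}$, and $\targetmdset=\{\{\pcvarsingle_1\},\{\pcvarsingle_2\}\}$; the algorithm outputs $\vlabel(\vnode)=\emptyset$ with $\vlabel(\vnode_1)=\{\pcvarsingle_1\}$, $\vlabel(\vnode_2)=\{\pcvarsingle_2\}$, which is not regular. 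So the proposition as stated is false for arbitrary $\targetmdset$, and the paper's proof of the third bullet is where it breaks.

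Your chain-indexing argument ($i(\vnode)=\min(i(\vnode_1),i(\vnode_2))$, with the three subcases $i(\vnode_1)<i(\vnode_2)$, symmetric, and equal) correctly closes every case, including the boundary cases with $\vlabelno$, but only under the hypothesis that $\targetmdset$ is totally ordered by inclusion. That hypothesis is not part of the Labelling Selection problem or the proposition statement; it is only \emph{motivated} by Proposition~\ref{prop:support}'s discussion of per-node labels, and all of the paper's downstream uses of Algorithm~\ref{alg:optlabel} (PSDD, backdoor, frontdoor, napkin) do happen to supply chains. So treat your proof as establishing the corrected statement ``the output is regular whenever $\targetmdset$ is nested,'' and state that hypothesis explicitly rather than importing it silently --- as written, the proposal proves a slightly different proposition than the one asked, which in this instance is a feature, not a bug.
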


The following Theorem shows that regular md-vtrees are optimal in the sense that for any given marginal determinism requirement $\targetmdset$, (one of) the most expressive md-vtree is always a regular md-vtree. This means that we can restrict our attention to the much smaller space of regular md-vtrees. In particular, the labelling function of a regular md-vtree is entirely determined by the labelling of each leaf node $\vlabel(\vnode_{\textnormal{leaf}})$, and a ternary variable over values $\{f, s, b\}$ for each non-leaf node, indicating whether the label depends on the label of the first child, second child, or both.

\begin{restatable}[Optimal md-vtrees]{theorem}{thmoptimal} \label{thm:regularoptimal}
Let $\mdvtree = (\vtree, \vlabel)$ be any md-vtree. Then there exists a regular md-vtree $\mdvtree' = (\vtree, \vlabel')$ such that $\margdetset(\mdvtree) = \margdetset(\mdvtree')$, and $\circuitclass_{\mdvtree'} \supseteq \circuitclass_{\mdvtree}$.
\end{restatable}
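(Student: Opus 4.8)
The plan is to construct the regular md-vtree $\mdvtree'$ explicitly by running Algorithm \ref{alg:optlabel} on the given vtree $\vtree$ together with the target set $\targetmdset := \margdetset(\mdvtree)$, and then verify the three required properties. Regularity of the output is immediate from the proposition asserting that Algorithm \ref{alg:optlabel} always produces a regular md-vtree. Unwinding the loop, the resulting label at each node is
\[
\vlabel'(\vnode) = \scope(\vnode) \cap \bigcap\{\margdetvars \in \targetmdset : \margdetvars \cap \scope(\vnode) \neq \emptyset\},
\]
with the convention that this equals $\vlabelno$ exactly when the indexing collection is empty (the final intersection with $\scope(\vnode)$ is skipped in that case, matching the guard $\vlabel(\vnode) \neq \vlabelno$ in the algorithm).

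The key lemma I would establish is the pointwise containment $\vlabel'(\vnode) \supseteq \vlabel(\vnode)$ for every vtree node $\vnode$. Fix any $\margdetvars \in \targetmdset = \margdetset(\mdvtree)$ that contributes to the intersection, i.e. with $\margdetvars \cap \scope(\vnode) \neq \emptyset$. Since $\margdetvars$ is implied by $\mdvtree$ and overlaps $\scope(\vnode)$, the definition of implied determinism gives $\margdetvars \supseteq \vlabel(\vnode)$. Hence every set in the intersection contains $\vlabel(\vnode)$, so their intersection does too; and because $\vlabel(\vnode) \subseteq \scope(\vnode)$ (when it is a genuine subset), intersecting with $\scope(\vnode)$ preserves this, yielding $\vlabel'(\vnode) \supseteq \vlabel(\vnode)$. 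The one case needing care is $\vlabel(\vnode) = \vlabelno$: here no real subset $\margdetvars \subseteq \pcvars$ can satisfy $\margdetvars \supseteq \vlabelno$, so the collection is empty, giving $\vlabel'(\vnode) = \vlabelno = \vlabel(\vnode)$, consistent with the universal-set conventions.

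With this containment in hand, the Generality-Tractability Tradeoff (Theorem \ref{prop:strength}) immediately yields both $\circuitclass_{\mdvtree'} \supseteq \circuitclass_{\mdvtree}$ and one inclusion $\margdetset(\mdvtree') \subseteq \margdetset(\mdvtree)$. For the reverse inclusion $\margdetset(\mdvtree) \subseteq \margdetset(\mdvtree')$ I would argue directly from the construction: take any $\margdetvars \in \targetmdset$ and any node $\vnode$ with $\scope(\vnode) \cap \margdetvars \neq \emptyset$; then $\margdetvars$ is itself one of the sets in the intersection defining $\vlabel'(\vnode)$, so $\vlabel'(\vnode) \subseteq \margdetvars$, which is precisely the condition for $\margdetvars$ to be implied by $\mdvtree'$. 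This gives $\margdetset(\mdvtree') = \margdetset(\mdvtree)$ and completes the argument.

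The main obstacle is less a deep difficulty than careful bookkeeping: correctly characterizing the output of Algorithm \ref{alg:optlabel} as the displayed intersection (including the empty-collection edge case where the label stays $\vlabelno$), and threading the universal-set conventions through the containment $\vlabel'(\vnode) \supseteq \vlabel(\vnode)$ so that Theorem \ref{prop:strength} can be invoked cleanly. The conceptual crux is recognizing that applying Algorithm \ref{alg:optlabel} to $\targetmdset = \margdetset(\mdvtree)$ simultaneously (i) only enlarges labels, handing us expressivity and one direction of the determinism equality for free via the tradeoff theorem, while (ii) by its optimal construction still implies every determinism in $\margdetset(\mdvtree)$, supplying the other direction.
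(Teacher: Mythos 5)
Your proof is correct, but it takes a genuinely different route from the paper's. You instantiate Algorithm~\ref{alg:optlabel} with $\targetmdset := \margdetset(\mdvtree)$ and then combine three ingredients: the regularity of the algorithm's output, its correctness (which gives $\margdetset(\mdvtree') \supseteq \margdetset(\mdvtree)$ directly from the closed-form label $\vlabel'(\vnode) = \scope(\vnode) \cap \bigcap\{\margdetvars \in \targetmdset : \margdetvars \cap \scope(\vnode) \neq \emptyset\}$), and the new pointwise containment $\vlabel'(\vnode) \supseteq \vlabel(\vnode)$, which feeds Theorem~\ref{prop:strength} to yield both $\circuitclass_{\mdvtree'} \supseteq \circuitclass_{\mdvtree}$ and the reverse inclusion $\margdetset(\mdvtree') \subseteq \margdetset(\mdvtree)$. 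That containment lemma is the one genuinely new step, and your argument for it is sound: any $\margdetvars \in \margdetset(\mdvtree)$ overlapping $\scope(\vnode)$ must contain $\vlabel(\vnode)$ by the definition of implied determinism, and your handling of the $\vlabelno$ edge case (no real subset of $\pcvars$ can contain $\vlabelno$, so the indexing collection is empty) is consistent with the paper's universal-set conventions. The paper instead proceeds constructively and locally, defining two label-rewriting operations, $\texttt{ECL}$ (pushing parent labels down to children) and $\texttt{EPL}$ (pulling active children's labels up to the parent), proving via two lemmas that each preserves $\margdetset$ while only enlarging $\circuitclass$, and showing that a topological sweep of each yields a regular md-vtree. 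Your approach is shorter and reuses machinery already proved for Algorithm~\ref{alg:optlabel}; the paper's buys a local, operation-by-operation picture of exactly where a non-regular labelling wastes expressivity, without ever materializing the (potentially exponentially large) set $\margdetset(\mdvtree)$ --- though since the theorem is a pure existence claim, that cost is immaterial to correctness.
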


While Algorithm 1 always returns an optimal labelling for a given vtree satisfying the required marginal determinisms, the expressivity of the circuit class may differ depending on the vtree. For example, for some vtrees, Algorithm 1 may output a labelling function such that $\vlabel(\vnode)$ is empty for some vtree nodes $\vnode$, i.e. a factorized distribution. We leave designing optimally expressive vtrees for a given set $\targetmdset$ of marginal determinisms as an open problem.  

\paragraph{Examples} Let us return to the PSDD example from Figure \ref{fig:psdd_example}. It can be easily checked that the corresponding md-vtree $\mdvtree^{\text{psdd}}$ in Figure \ref{fig:psdd_example_label} is not regular. Following Algorithm \ref{alg:optlabel}, we therefore construct in Figure \ref{fig:psdd_example2_optimal} an regular md-vtree $\mdvtree^{\text{opt}}$ that retains the same $\margdetset(\mdvtree^{\text{opt}}) = \margdetset(\mdvtree^{\text{psdd}}) = \{\{\pcvarsingle_1, \pcvarsingle_2\}, \{\pcvarsingle_1, \pcvarsingle_2, \pcvarsingle_3\}, \{\pcvarsingle_1, \pcvarsingle_2, \pcvarsingle_3, \pcvarsingle_4\}\}$. 
We should prefer $\mdvtree^{\text{opt}}$ over $\mdvtree^{\text{psdd}}$ as it imposes less constraints/more circuits respect $\mdvtree^{\text{opt}}$. In other words, PSDDs impose more constraints than they ``need to'' to obtain their marginal determinism properties.

\section{MDNETS: A PRACTICAL ARCHITECTURE FOR MD-VTREES}

In this section, we show how to construct and learn a probabilistic circuit that respects a particular md-vtree. It is worth noting that, as special cases of md-vtrees, we can use existing architectures and learning algorithms for PCs such as PSDDs \citep{liang2017learnpsdd} and structured decomposable and deterministic circuits \citep{dang2020strudel, dimauro2021random}. However, we have seen that PSDDs are not optimally expressive, and to enforce tractability, we may need to target md-vtrees which do not fall into these existing categories, such as those generated from Algorithm \ref{alg:optlabel}. We thus propose a novel PC architecture, MDNet, which enforces a given regular md-vtree \textit{by design}.

The key component of MDNets is the \textit{node group}, which is a vector of sum nodes with the same scope (i.e. corresponding to the same vtree node $\vnode$) with the property that the nodes in the group have disjoint marginalized support $\supp_{\vlabel(\vnode)}(\genericunit)$.
Intuitively, the sum nodes in a group provide a partition of the domain of $\vlabel(\vnode)$, which we use as an invariant in order to enforce the required marginal determinisms throughout the circuit.
More formally, suppose that we have a non-leaf vtree node $\vnode$, and let $\vnode_l, \vnode_r$ be its children. We refer to all sum nodes corresponding to a vtree node as being a \textit{layer}, and assign $G$ groups $\sumunits_1, ... \sumunits_G$ to the layer for vtree node $\vnode$, and similarly $\sumunits_1^{(l)}, ... \sumunits^{(l)}_{G_l}$, $\sumunits^{(r)}_1, ... \sumunits^{(r)}_{G_r}$ to the layers for $\vnode_l, \vnode_r$. For regular md-vtrees, the label $\vlabel(\vnode)$ is either equal to $\vlabel(\vnode_l)$ or $\vlabel(\vnode_r)$, or is their union. We handle these cases separately, as \textit{mixing} and \textit{synthesizing} layers.

\paragraph{Mixing Layer} If $\vlabel(\vnode) = \vlabel(\vnode_l)$ or $\vlabel(\vnode_r)$, we implement a \textit{mixing} layer. W.l.o.g. we assume $\vlabel(\vnode) = \vlabel(\vnode_l)$. For each group $\sumunits_i = (\sumunit_{i, 1}, ..., \sumunit_{i, k})$, and for each sum node $\sumunit_{i, k}$ in the group, we assign a set of product nodes $\productunits_{i, k}$ to $\sumunit_{i, k}$. Each product node has two children; the left child being a node from $\sumunits_1^{(l)}, ... \sumunits^{(l)}_{G_l}$ and the right child being 
a node from $\sumunits^{(r)}_1, ... \sumunits^{(r)}_{G_r}$. We place the following restriction: the left children of the product nodes $\cup_k \productunits_{i, k}$ must all be distinct, and all come from a single group. This ensures that all of the sum nodes are $\vlabel(\vnode) = \vlabel(\vnode_l)$-deterministic, and further that each group $\sumunits_i$ satisfies the invariant.

\paragraph{Synthesizing Layer} If $\vlabel(\vnode) = \vlabel(\vnode_l) \cup \vlabel(\vnode_r)$, we assign product nodes $\productunits_{i, k}$ to each sum node $\sumunit_{i, k}$ as before, but with a different restriction; we now require that both the left children and right children of the product nodes $\cup_k \productunits_{i, k}$ are nodes coming from a single group from their respective layers, and that each product node has a unique combination of children from these groups. 

Intuitively, mixing layers achieve their marginal determinism by ``copying'' the marginal determinism of one of their child layers, while mixing over groups in the other child layer. On the other hand, synthesizing layers enforce marginal determinism by combining, or synthesizing, the marginal determinism properties of both of their children. 

For simplicity, we propose to learn MDNets exploiting recent advancements in random structures for PC learning \citep{peharz2020einet, peharz2020ratspn, dimauro2021random}: in particular, we propose to choose the MDNet structure randomly within the constraints, and then learn the parameters using standard MLE estimation if the md-vtree implies ($\pcvars$-)determinism, or use EM otherwise \citep{peharz2015foundations}.

\section{COMPOSITIONAL INFERENCE USING STRUCTURED MARGINAL DETERMINISM} \label{sec:inference}

In this section, we will describe a methodology that exploits our md-vtree framework as a language for deriving tractability conditions for arbitrary compositions of basic operations on probabilistic circuits. In particular, we build upon the work of \citet{vergari2021atlas}, showing how to extend their analysis to compositional queries which include marginalization (integration) operations at arbitrary points in the pipeline, and allow for maximization queries (e.g. marginal MAP).

\subsection{Support Properties in Compositional Inference}

\begin{table}[]
\centering
\begin{tabular}{@{}lllll@{}}
\toprule
\textbf{Operation} & \textbf{Requirements} & \textbf{Output Encodes} \\ \midrule
     $\marg(\circuit; \varsubset)  $  & -       &      $\sum_{\varsubset}\pcfunc_\circuit(\pcvars)$    \\
     $\inst(\circuit; \varsubsetval) $    &      -          &   $ \pcfunc_\circuit(\varsubsetval, \pcvars \setminus \varsubset)$        \\ \midrule
    $\product(\circuit_1, \circuit_2)$   & Cmp. Vtrees       &  $\pcfunc_{\circuit_1}(\pcvars) \times \pcfunc_{\circuit_2}(\pcvars)$       \\  
          \midrule
        $\powop(\circuit; \power)$         & Det            &       $\restr{\pcfunc_{\circuit}(\pcvars)^\power}{\supp(\circuit)}$      \\
        $\maxop(\circuit)$            & Det           &       $\max_{\pcvars}\pcfunc_{\circuit}(\pcvars)$        \\
        $\logop(\circuit)$             & Det     &     $ \restr{\log \pcfunc_{\circuit}(\pcvars)}{\supp(\circuit)}$   \\ \bottomrule
\end{tabular} 
\caption{Definitions of basic operations.}
\label{tbl:basic_ops}
\end{table}

In Table \ref{tbl:basic_ops}, we define the basic probabilistic inference operations, including marginalization, products, instantiation, powers, maximization, and logarithms, along with the properties (\textit{requirements}) under which there exist efficient (polytime) algorithms for computing them on PCs \citep{vergari2021atlas}; note that we assume decomposability and smoothness by default. These operations produce a circuit encoding the specified function (or scalar in the case of $\maxop$). We use $\restr{}{\supp(\circuit)}$ to denote the \textit{restricted} power/logarithm, as these functions are not defined at $0$:
\begin{equation*}
\restr{f(\pcfunc_{\circuit}(\pcvars))}{\supp(\circuit)} := 
\begin{cases}
    f(\pcfunc_{\circuit}(\pcvars)) & \textnormal{if } \pcfunc_{\circuit}(\pcvars) > 0 \\
    0 & \textnormal{otherwise} \\
\end{cases}
\end{equation*}

We refer to the basic operations in the bottom half of the table as \textit{deterministic} operations, as their tractability depends on the input circuit being deterministic; they are NP-complete ($\maxop$) or \#P-hard ($\powop, \logop$) otherwise \citep{choi2017determinism, vergari2021atlas}. 

Many complex inference queries can be expressed as compositions of these basic operations; we show a selection of examples in Table \ref{tbl:examples}. We can interpret such compositions as  \textit{pipelines}, or computational graphs, which specify an algorithm for computing the query that uses the efficient algorithms for each basic operation. To show tractability of a pipeline for given input circuits, one needs to show that the inputs to intermediate operations satisfy the requirement for tractability of that operation. For this purpose, \citet{vergari2021atlas} derive \textit{input-output conditions} for basic operations, which specify a pair of properties such that the output of the operation is guaranteed to satisfy the output property if the input satisfies the input property. This allows properties to be soundly propagated through the pipeline, from the input circuits.

However, there remains a significant unresolved challenge for analyzing general compositions of basic operations; namely, analyzing how operations affect \textit{support properties} of the circuit beyond just determinism.
For example, the marginal MAP problem (MMAP) \citep{huang2006map, choi2020probcirc} in Table \ref{tbl:examples} is a canonical inference task that can be decomposed into a composition of a $\marg$ operation $\sum_{\pcvars \setminus \varsubset} \pcfunc_\circuit(\varsubset, \pcvars \setminus \varsubset)$, and a $\maxop$ operation $\max_{\varsubsetval} \pcfunc_{\circuit}(\varsubset)$. The $\maxop$ operation $\max_{\varsubsetval} \pcfunc_\circuit(\varsubset)$ is known to be tractable for any deterministic input circuit. Unfortunately, however, ensuring that the output of $\marg(\circuit; \pcvars \setminus \varsubset)$ is deterministic is known to be NP-hard \citep{shen2016operations}, even if $\circuit$ is deterministic. For similar reasons, any compositional inference task in which a deterministic operation appears after a marginalization operation cannot be currently analyzed, notable examples of which we show in the lower half of Table \ref{tbl:examples}.

\begin{table*}[t]
\centering
\begin{tabular}{@{}lllll@{}}
\toprule
\textbf{Task} & \textbf{Computation} & \textbf{Operations} & \textbf{Condition} \\ \midrule
      Cross-Entropy     &  $- \sum_{\pcvars}\pcfunc_{\circuit^{(1)}}(\pcvars) \log (\pcfunc_{\circuit^{(2)}}(\pcvars)) $                    &    $\marg, \product, \logop$          & $\circuit^{(1)}, \circuit^{(2)}$ Cmp.; $\circuit^{(2)}$ Det.  \\
      Var. Elim.          &   $\sum_{\varsubset} \pcfunc_{\circuit^{(1)}}(\pcvars) \pcfunc_{\circuit^{(2)}}(\pcvars)$                &    $\marg, \product$          & $\circuit^{(1)}, \circuit^{(2)}$ Cmp.          \\ \midrule
      MMAP          &      $\max_{\varsubsetval} \sum_{\pcvars \setminus \varsubset}\pcfunc_{\circuit}(\varsubsetval, \pcvars \setminus \varsubset)$                 &     $\marg, \maxop$         &  $\circuit$ Mdet. wrt. $\varsubset$       \\ 
      Mut. Inf. & $\pcfunc_{\circuit}(\bm{X}, \bm{Y}) \log \frac{\pcfunc_{\circuit}(\bm{X}, \bm{Y})}{\pcfunc_{\circuit}(\bm{X}) \pcfunc_{\circuit}(\bm{Y})}$ & $\marg, \product, \logop, \powop$ & - \\
      BD Adj.         &   $\sum_{\adjust} \pcfunc_{\circuit}(\outcome|\treat, \adjust) \pcfunc_{\circuit}(\adjust)$                   &    $\marg, \product, \powop$          &  $\circuit$ Mdet. wrt. $\treat \cup \adjust$; Str. Dec.      \\ 
      \bottomrule
\end{tabular}
\caption{Examples of complex inference tasks expressed as compositions of basic operations.}
\label{tbl:examples}
\end{table*}

\subsection{Operations on md-vtrees} \label{sec:forward}

To tackle these challenges, we apply our md-vtree framework as a unified language for scope and support in structured decomposable circuits. The first problem that we would like to address is that of determining whether a pipeline is tractable for given input md-vtrees:

\begin{problem}[Forward Problem]
Given a query expressed as a pipeline of basic operations, and md-vtree(s) that the input circuits respect, determine if the pipeline is tractable.
\end{problem}

To solve this problem, we propose to derive input-output conditions in terms of md-vtrees. In other words, if we have input circuit(s) that respect some given md-vtree(s), can we obtain a md-vtree that the output of a basic operation is guaranteed to respect? 
In the Appendix, we detail a set of algorithms for each of the operations in Table \ref{tbl:basic_ops}, which take as input md-vtree(s) and return an output md-vtree that provides exactly such a guarantee, based upon the corresponding algorithms on circuits. Then, given any pipeline, and input md-vtree(s), we can determine if the compositional query is tractable, simply by propagating md-vtree(s) \textit{forward} through the pipeline, and checking that the input md-vtree(s) to any intermediate operation satisfy the requirements in Table \ref{tbl:basic_ops}. Importantly, this can be done \textit{without doing the computation of the query pipeline itself}; all of our algorithms run in polytime in the number of variables $|\pcvars|$, which is much smaller than the circuits themselves.

\subsection{The MD-calculus} \label{sec:backward}

\begin{table*}[t]
\centering
\begin{tabular}{@{}lllll@{}}
\toprule
\textbf{Operation} & \textbf{Requirement} & \textbf{Input Condition} & \textbf{Output Condition}  \\ \midrule
     $\marg(\circuit; \varsubset)  $            &        -              &      $\margdetvars$-det        &    $\margdetvars$-det                 \\
     $\inst(\circuit; \varsubsetval) $         &       -               &     $\exists \varsubset' \subseteq \varsubset: (\margdetvars \cup \varsubset')\textnormal{-det} $        &   $\margdetvars$-det                    \\ \midrule
    \multirow{3}{*}{$\product(\circuit^{(1)}, \circuit^{(2)})$}     &   $\circuit^{(1)}, \circuit^{(2)}$  respect             &  $\exists \margdetvars^{(1)}, \margdetvars^{(2)}: \margdetvars^{(1)}\textnormal{-det}, \margdetvars^{(2)}\textnormal{-det}$, and:
    
         &      \multirow{3}{*}{$\margdetvars$-det}            \\ 
             &       compatible  vtrees               &   \textbullet\ Either (a) $\margdetvars \subseteq \pcvars^{(1)} \cap \pcvars^{(2)}$ and $\margdetvars^{(1)} = \margdetvars^{(2)} = \margdetvars$;          &                 &                     \\ 
             & & \textbullet\ Or (b) $\margdetvars^{(1)}, \margdetvars^{(2)} \supseteq \pcvars^{(1)} \cap \pcvars^{(2)}$ and $\margdetvars = \margdetvars^{(1)} \cup \margdetvars^{(2)}$ & & \\ \midrule
        $\powop(\circuit; \power)$           &      Det                &       $\margdetvars$-det         &      $\margdetvars$-det             \\
        $\maxop(\circuit)$           &      Det                &       N/A        &      N/A (scalar output)                 \\
        $\logop(\circuit)$          &      Det               &        -        &       -                 \\ \bottomrule
\end{tabular} 
\caption{MD-calculus: sufficient input-output conditions for each basic operation}
\label{tbl:mdcalc}
\end{table*}

The forward problem allows us to reason about tractability if we already have the input circuits. However, when learning  circuits from data, we have the freedom to choose the md-vtree(s) in order to enable tractable inference:

\begin{problem}[Backward Problem]
Given any query expressed as a pipeline of basic operations, derive input md-vtree(s) such that the pipeline is tractable.
\end{problem}

To this end, in Table \ref{tbl:mdcalc} we show a set of input-output conditions called the \textit{MD-calculus}. These are sufficient conditions on the input(s) to an operation to guarantee that the output is $\margdetvars$-deterministic. The MD-calculus forms a set of rules that we can apply backwards from deterministic operations (which require $\pcvars$-determinism), in order to determine a sufficient set of marginal determinisms $\targetmdset$ for each intermediate circuit. Finally, we can enforce those marginal determinisms on the input md-vtree(s) using Algorithm \ref{alg:optlabel}. We defer proofs of the MD-calculus rules to the Appendix.

\begin{restatable}[MD-calculus]{theorem}{thmmdcalc}
The conditions in Table \ref{tbl:mdcalc} hold.
\end{restatable}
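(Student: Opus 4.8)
The plan is to prove the table row by row, since each row asserts an independent input--output guarantee: assuming the stated \textbf{Requirement} and \textbf{Input Condition} on the input circuit(s), the corresponding operation of Table~\ref{tbl:basic_ops} runs in polynomial time and its output circuit satisfies the stated \textbf{Output Condition}. Throughout, the object to track is the marginalized support $\supp_{\margdetvars}(\genericunit)$ of each node, and the central observation is that every operation acts on the circuit by a \emph{local} transformation (modifying leaves and parameters, or pairing nodes) whose effect on marginalized supports can be computed node-wise.

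For $\marg(\circuit;\varsubset)$ and $\powop(\circuit;\power)$ I would argue that the transformation preserves the marginalized support of every node (for any $\margdetvars$ disjoint from $\varsubset$ in the former case). Marginalization sums out $\varsubset$ at the leaves without altering the sum/product structure; since $\sum_{\scope(\genericunit)\setminus\margdetvars}\sum_{\varsubset}\pcfunc_{\genericunit} = \sum_{\scope(\genericunit)\setminus\margdetvars}\pcfunc_{\genericunit}$ when $\margdetvars\cap\varsubset=\emptyset$, the set $\supp_{\margdetvars}(\genericunit)$ is unchanged, and both the scope-overlap and disjoint-support clauses of $\margdetvars$-determinism carry over. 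For $\powop$, determinism lets the power distribute through sums (only one child is active on the support), so the graph is unchanged and each leaf/parameter is merely raised to $\power$; since raising to the power $\power$ and restricting to $\supp(\circuit)$ preserves positivity and zeros, $\supp_{\margdetvars}(\genericunit)$ is again invariant, giving output $\margdetvars$-determinism. The $\maxop$ and $\logop$ rows are immediate: $\maxop$ returns a scalar (nothing to check), and $\logop$ carries no marginal-determinism claim, so only the determinism requirement from Table~\ref{tbl:basic_ops} is invoked.

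For $\inst(\circuit;\varsubsetval)$ the argument is a support-slicing one. Fixing $\varsubset=\varsubsetval$ shrinks each node's marginalized support: I would show $\supp_{\margdetvars}(\genericunit^{\mathrm{inst}}_i)\subseteq\{\margdetvarsval:(\margdetvarsval,\varsubsetval|_{\varsubset'})\in\supp_{\margdetvars\cup\varsubset'}(\genericunit_i)\}$, because a single positive completion of the summed-out variables already witnesses membership in the coarser marginalized support. Hence if $\circuit$ is $(\margdetvars\cup\varsubset')$-deterministic, the disjointness of $\supp_{\margdetvars\cup\varsubset'}(\genericunit_i)$ over the children passes to disjointness of $\supp_{\margdetvars}(\genericunit^{\mathrm{inst}}_i)$ (they land in disjoint $\varsubsetval|_{\varsubset'}$-slices), yielding output $\margdetvars$-determinism.

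The main obstacle is the $\product$ rule, where I must analyze the marginalized support of the product nodes of the circuit obtained by multiplying two circuits on compatible vtrees. Each output sum node pairs sum nodes $\sumunit^{(1)},\sumunit^{(2)}$ whose common-variable scopes coincide, so its children are product nodes $\genericunit^{\times}_{ij}$ computing $\pcfunc_{\genericunit^{(1)}_i}\pcfunc_{\genericunit^{(2)}_j}$ over disjoint private scopes sharing only $\commonvars$. The crux is that marginalizing down to $\margdetvars$ factorizes across the disjoint private parts. In case (b), where $\margdetvars^{(1)},\margdetvars^{(2)}\supseteq\commonvars$ and $\margdetvars=\margdetvars^{(1)}\cup\margdetvars^{(2)}$, no common variable is summed out, so $\supp_{\margdetvars}(\genericunit^{\times}_{ij})$ is exactly the set of $\margdetvarsval$ whose projections lie in $\supp_{\margdetvars^{(1)}}(\genericunit^{(1)}_i)$ and $\supp_{\margdetvars^{(2)}}(\genericunit^{(2)}_j)$; in case (a), where $\margdetvars\subseteq\commonvars$, I instead obtain the inclusion $\supp_{\margdetvars}(\genericunit^{\times}_{ij})\subseteq\supp_{\margdetvars}(\genericunit^{(1)}_i)\cap\supp_{\margdetvars}(\genericunit^{(2)}_j)$. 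In either case, for $(i,j)\neq(i',j')$ we have w.l.o.g. $i\neq i'$, and the $\margdetvars^{(1)}$-determinism of $\circuit^{(1)}$ forces the relevant projections into disjoint sets, so $\supp_{\margdetvars}(\genericunit^{\times}_{ij})\cap\supp_{\margdetvars}(\genericunit^{\times}_{i'j'})=\emptyset$ and the output sum node is $\margdetvars$-deterministic. I expect the careful bookkeeping of which variables are summed out versus retained --- and verifying that the factorization genuinely uses decomposability of the product circuit on compatible vtrees --- to be the delicate part, and I would also confirm, via the explicit product algorithm of the Appendix, that every output sum node indeed arises in this paired form.
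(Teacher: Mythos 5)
Your proposal is correct in substance, but it is organized differently from the paper's proof. The paper proves the theorem in two layers: it first defines, for each basic operation, an algorithm on md-vtrees (a sound abstraction of the circuit algorithm) whose label-update rules are justified by node-level support arguments --- for $\marg$ the label is kept when $\vlabel(\vnode)\cap\varsubset=\emptyset$ and otherwise set to $\vlabelno$, for $\inst$ it becomes $\vlabel(\vnode)\setminus\varsubset$, and for $\product$ it is $\emptyset$, $\vlabel^{(1)}(\vnode^{(1)})$, or $\vlabel^{(1)}(\vnode^{(1)})\cup\vlabel^{(2)}(\vnode^{(2)})$ according to the compatibility case --- and then the theorem itself is proved by a purely set-theoretic chase of the ``implied $\margdetvars$-determinism'' condition through these scope and label updates (using, for $\product$, the invariant that $\scope^{(1)}_{\commonvarsglobal}(\vnode^{(1)})=\scope^{(2)}_{\commonvarsglobal}(\vnode^{(2)})$ at every recursive call). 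You instead argue directly on circuit nodes and their marginalized supports, which reproduces the mathematical core of the paper's algorithm-level justifications (your $\inst$ slicing argument and your case (a)/(b) analysis of $\supp_{\margdetvars}$ of the paired product children are essentially the paper's arguments) while bypassing the md-vtree abstraction; this is more elementary and self-contained, but the paper's layering buys reusability, since the same label algorithms also solve the forward problem. Two loose ends you should tie up: (i) in the $\product$ case, not every output sum node is a paired node over the local common variables --- sum nodes copied unchanged from the private parts of $\circuit^{(1)}$ or $\circuit^{(2)}$ must also be checked, which requires Proposition 2 (superset determinisms) together with the observation that $\margdetvars\setminus\margdetvars^{(1)}$ lies outside $\pcvars^{(1)}$ in case (b); and (ii) in your $\inst$ argument the instantiated children all land in the \emph{same} $\varsubsetval$-slice (not disjoint ones), and it is the disjointness of the original $(\margdetvars\cup\varsubset')$-supports within that common slice that forces disjointness of the projections.
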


We show examples of compositional queries in the bottom half of Table \ref{tbl:examples}, with the marginal determinism condition on the input circuit $\circuit$ derived using this approach. For MMAP, for the $\maxop$ operation, we require $\marg(\circuit; \pcvars \setminus \varsubset)$ to be $\varsubset$-deterministic. Using the MD-calculus rule for $\marg$, we can see that it is sufficient for $\circuit$ to also be $\varsubset$-deterministic. For mutual information, applying a similar approach we obtain that $\circuit$ should be both $\bm{X}$-deterministic and $\bm{Y}$-deterministic, but we have seen in Proposition \ref{prop:support} that this is not possible without restricting support.

\section{APPLICATION: CAUSAL INFERENCE}

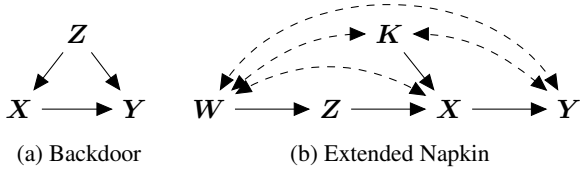
\begin{figure}
\begin{subfigure}[b]{0.3\linewidth}
\centering
\begin{tikzpicture}

\node[] (X) {$\treat$};
\node[right=0.2cm of X, yshift=1cm] (Z) {$\adjust$};
\node[right=0.2cm of Z, yshift=-1cm] (Y) {$\outcome$};

\edge {Z} {X};
\edge {Z} {Y};
\edge {X} {Y};

\end{tikzpicture}
\caption{Backdoor}
\label{fig:backdoor}
\end{subfigure}
\begin{subfigure}[b]{0.68\linewidth}
\centering
\begin{tikzpicture}
\node[] (W) {$\bm{W}$};
\node[right=of W] (Z) {$\adjust$};
\node[right=0.2cm of Z, yshift=1cm] (K) {$\bm{K}$};
\node[right=of Z] (X) {$\treat$};
\node[right=of X] (Y) {$\outcome$};

\edge {W} {Z};
\edge {Z} {X};
\edge {K} {X};
\edge {X} {Y};

\draw[<->, dashed] (W) to [out=60, in=120] (Y);
\draw[<->, dashed] (W) to [out=30, in=150] (X);
\draw[<->, dashed] (W) to [out=40, in=180](K);
\draw[<->, dashed] (K) to [out=0, in=140] (Y);
\end{tikzpicture}
\caption{Extended Napkin}
\label{fig:extnapkin}
\end{subfigure}
\caption{Examples of causal diagrams}
\label{fig:causal_diagrams}
\end{figure}

In this section, we use our md-vtree framework to analyse tractability conditions for \textit{exact} causal inference for PCs. The typical setup in causal inference is that we have access to an observed distribution $\p(\edg)$, and some domain assumptions, often conveniently expressed using a \textit{causal diagram} \citep{pearl09causality}, and we are interested in computing some \textit{interventional distribution} $p_{\treat}(\outcome)$, where $\treat, \outcome \subseteq \edg$ are disjoint subsets of the observed variables.
$p_{\treat}(\outcome)$ is said to be \textit{identifiable} if the assumptions are sufficient for it to be uniquely determined; and if it is identifiable, a causal formula/estimand can be obtained using the \textit{do-calculus} \citep{pearl1995do, shpitser2006complete}. 

\subsection{Hardness of Backdoor Adjustment on Circuits}

We assume that the observed data distribution is modelled by a PC $\circuit$ (perhaps learned from data), and we wish to compute some causal query. One of the most common cases where the interventional distribution is identifiable is when there exists a valid \textit{backdoor adjustment set} $\adjust \subseteq \edg \setminus (\treat \cup \outcome)$ (also known as the conditional exchangability/ignorability assumption), an example of which is illustrated in Figure \ref{fig:backdoor}. Whenever such a set exists, the interventional distribution $\pcfunc_{\circuit, \treat}(\outcome)$ is given by the backdoor adjustment formula\footnote{As commonly done in causal inference we assume positivity, i.e. $\pcfunc_{\circuit}(\pcvars) \geq 0$ such that the conditional is well defined.}:
\begin{align*} \label{eqn:backdoor}
     \sum_{\adjust} \pcfunc_{\circuit}(\adjust) \pcfunc_{\circuit}(\outcome|\treat, \adjust) = \sum_{\adjust} \pcfunc_{\circuit}(\adjust) \frac{\pcfunc_{\circuit}(\outcome, \treat, \adjust)}{\pcfunc_{\circuit}(\treat, \adjust)}
\end{align*}
Unfortunately, we now show that the backdoor adjustment is not tractable for existing classes of probabilistic circuits:

\begin{restatable}[Hardness of Backdoor Query]{theorem}{backdoorhard}
\label{thm:backdoorhard}
The backdoor query for decomposable and smooth PCs is \textsc{\#P}-hard, even if the PC is structured decomposable and deterministic.
\end{restatable}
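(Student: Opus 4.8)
The goal is to prove that the backdoor adjustment query is \textsc{\#P}-hard even for structured decomposable and deterministic PCs. The natural strategy is a reduction from a known \textsc{\#P}-hard problem, and the most promising target is model counting (or weighted model counting) of a Boolean formula, or equivalently the computation of a hard marginal/partition quantity. The plan is to encode a \textsc{\#P}-hard counting instance into the backdoor formula $\sum_{\adjust} \pcfunc_{\circuit}(\adjust) \frac{\pcfunc_{\circuit}(\outcome, \treat, \adjust)}{\pcfunc_{\circuit}(\treat, \adjust)}$ in such a way that evaluating this expression at a chosen point $(\treatval, \outcomeval)$ recovers the count. The key tension to exploit is that, while structured decomposability and determinism make the individual terms $\pcfunc_{\circuit}(\adjust)$, $\pcfunc_{\circuit}(\outcome, \treat, \adjust)$, and $\pcfunc_{\circuit}(\treat, \adjust)$ each tractable, the \emph{summation over $\adjust$ of a product involving a reciprocal} is precisely the operation whose marginal determinism requirement (per the $\product$ and $\marg$ rules of the MD-calculus) cannot be met for free.

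\emph{Key steps, in order.} First I would fix the source problem: take \textsc{\#SAT} (or a weighted variant) over a formula $\phi$ whose satisfying assignments I wish to count, and recall that there exist structured decomposable and deterministic circuits (e.g. compiled SDDs) representing $\phi$ of size polynomial in favorable cases — but crucially, the hardness must come from the \emph{query}, not from compilation, so I would instead start from a distribution that is itself easy to represent as a deterministic structured-decomposable PC, and show the query still encodes a hard count. Second, I would construct a specific PC $\circuit$ over variables partitioned into $\treat$, $\outcome$, and $\adjust$ such that: (i) $\circuit$ is structured decomposable and deterministic (this constrains the gadget and is where care is needed); (ii) the conditional $\pcfunc_{\circuit}(\outcome \mid \treat, \adjust)$, when summed against $\pcfunc_{\circuit}(\adjust)$ at the chosen instantiation, produces a sum that counts satisfying assignments of $\phi$. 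Third, I would verify that each building block of the backdoor formula individually respects the required structural properties, so that the only source of intractability is the composite query itself. Finally, I would argue that a polynomial-time algorithm for the backdoor query would yield a polynomial-time algorithm for the underlying \textsc{\#P}-hard count, giving the hardness.

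\emph{Main obstacle.} The hard part will be designing the gadget so that determinism and structured decomposability are \emph{simultaneously} maintained while the backdoor summation still encodes a genuinely \textsc{\#P}-hard quantity. Determinism is a strong constraint: it forces the supports of a sum node's children to be disjoint, so I cannot simply superimpose many overlapping assignment-indicator branches. The cleanest route is probably to make $\circuit$ encode a product-form or tree-structured distribution over $(\treat, \outcome, \adjust)$ for which each factor is trivially deterministic, and push all the combinatorial difficulty into the \emph{interaction} between the reciprocal $1/\pcfunc_{\circuit}(\treat,\adjust)$ and the marginalization over $\adjust$ — since Proposition-level reasoning and the cited NP-hardness of making a marginalized circuit deterministic \citep{shen2016operations} already signal that this juncture is where tractability breaks. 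I would likely adapt an existing reduction for a related hard marginal query (such as those underlying the \#P-hardness of $\powop$ and $\logop$ after marginalization, cf. \citet{choi2017determinism, vergari2021atlas}) and show the backdoor formula inherits that hardness, taking particular care that the determinism of $\circuit$ itself is preserved throughout the construction.
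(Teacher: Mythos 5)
Your plan points in the right direction --- a reduction into the backdoor sum, with the hardness localized at the interaction between the reciprocal $1/\pcfunc_{\circuit}(\treat,\adjust)$ and the marginalization over $\adjust$, and with the gadget kept fully factorized so that determinism and structured decomposability come for free. That is exactly the structural insight the paper's proof rests on. But as written there is a genuine gap: you never actually produce the reduction, and the source problem you reach for first, \textsc{\#SAT}/weighted model counting, does not work directly. To make the count of satisfying assignments of $\phi$ appear in $\sum_{\adjust}\dist_{\circuit}(\adjust)\,\dist_{\circuit}(\outcomeval\mid\treatval,\adjust)$ you would have to embed $\phi$ somewhere in $\circuit$, and a CNF does not in general admit a polynomial-size structured decomposable and deterministic circuit --- so the hardness would leak back into the compilation step, which you yourself flag as disallowed. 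Your pivot to ``a distribution that is itself easy to represent'' is the right move, but it leaves the central question unanswered: which \textsc{\#P}-hard quantity has the shape of a sum over $\adjustval$ of a \emph{ratio} of two quantities that each factorize over the $\adjustm_i$?

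The paper's answer is the \texttt{EXPLR} problem of \citetSM{vandenbroeck2022explr}: computing $\sum_{\adjustval} \bigl(1+e^{-(w_0+\sum_i w_i z_i)}\bigr)^{-1}$, the expectation of a logistic regression model, which is \textsc{\#P}-hard. The sigmoid is a ratio $a/(a+b)$ with $b=e^{-(w_0+\sum_i w_i z_i)}=\prod_i e^{-w_i z_i}\cdot e^{-w_0}$ a product of univariate factors, so it is encoded by a chain of product nodes over leaves $e^{-w_i \adjustm_i}$ --- no sum nodes at all, hence trivially deterministic and structured decomposable. The paper then arranges indicator leaves on $\treatm$ and $\outcomem$ so that $\dist_{\circuit}(\treatmval,\outcomemval,\adjustval)=1$ and $\dist_{\circuit}(\treatmval,\adjustval)=1+e^{-(w_0+\sum_i w_i z_i)}$, making the conditional $\dist_{\circuit}(\outcomemval\mid\treatmval,\adjustval)$ exactly the sigmoid, and chooses $\dist_{\circuit}(\adjustval)$ so that the backdoor sum evaluates to $2^{n_{\adjust}}+2\cdot\texttt{EXPLR}(\bm{w})$. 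The two sum nodes introduced in this outer gadget are deterministic because the $\outcomem$- and $\treatm$-indicators have disjoint supports. This selection of source problem and the explicit four-branch circuit are the substantive content of the proof; your proposal correctly identifies where they must live but does not supply them.
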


We can also view this result as placing a theoretical barrier on interpreting probabilistic circuits as causal models \citep{zhao2015spnbn, papantonis2020causal}; namely that, even if such an interpretation exists, it is not possible to tractably perform causal inference with the causal PC. Thus, whether we interpret a PC as itself expressing causality, or merely a model of the observational probability distribution, new PC classes are required. In the following, we take the latter perspective, employing our md-vtrees.

\subsection{MD-calculus for Causal Formulae}

We now employ the MD-calculus to derive tractability conditions for the backdoor query, given $\circuit(\pcvars)$ as input. Note that there is only one deterministic operation, namely the reciprocal $\powop(\cdot; -1)$. The input to this operation is $\circuit(\treat, \adjust)$, which we thus require to be deterministic, that is, $(\treat \cup \adjust)$-deterministic. We then have that $\circuit(\treat, \adjust) = \marg(\circuit(\pcvars); \pcvars \setminus (\treat \cup \adjust))$, so, using the MD-calculus rule for $\marg$, it is sufficient for $\circuit(\pcvars)$ to be $(\treat \cup \adjust)$-deterministic. Given a vtree, we can then use Algorithm \ref{alg:optlabel} to derive an md-vtree for the input circuit $\circuit(\pcvars)$.

It is worth noting that, unlike typical causal inference approaches which derive a scalar $\pcfunc_{\circuit, \treatval}(\outcomeval)$ (or some causal effect) for a particular intervention $\treat = \treatval$, the output of the backdoor query here is a \textit{circuit} over variables $\treat, \outcome$ encoding $\pcfunc_{\circuit, \treat}(\outcome)$. This means that we can do further downstream reasoning over different values of $\treat, \outcome$.

As a second example, consider the \textit{extended napkin} causal diagram in Figure \ref{fig:extnapkin}. We can use the do-calculus to derive the following expression for $\pcfunc_{\circuit, \treat}(\outcome)$ in this case \footnote{Here, $\adjustval$ can be any instantiation of $\adjust$, and $f(\circuit, \bm{K})$ is a expression over $\circuit, \bm{K}$ that we omit here for clarity.}:
\begin{align*}
    \centering
    \sum_{\bm{K}} &f(\circuit, \bm{K}) \frac{\sum_{\bm{W}} \pcfunc_{\circuit}(\treat, \outcome| \bm{K}, \adjustval, \bm{W}) \pcfunc_{\circuit}(\bm{W}, \bm{K})}{\sum_{\bm{W}} \pcfunc_{\circuit}(\treat| \bm{K}, \adjustval, \bm{W}) \pcfunc_{\circuit}(\bm{W}, \bm{K})}
\end{align*}

For this formula, we can try to apply the MD-calculus as usual. The denominator (input to the $\powop(\cdot; -1)$ operation) is a function of $\{\treat, \bm{K}\}$; thus, passing through the marginalization operation, we require $\product(\circuit(\treat| \bm{K}, \adjustval, \bm{W}) ,\circuit(\bm{W}, \bm{K}))$ to be $(\treat \cup \bm{K})$-deterministic. However, if we then look at the conditions for the $\product$ operation, we see this cannot be achieved. In particular, $\circuit(\treat| \bm{K}, \adjustval, \bm{W})$ and $\circuit(\bm{W}, \bm{K})$ are circuits over $\pcvars^{(1)} = \{\treat, \bm{K}, \bm{W}\}$ and $\pcvars^{(2)} = \{\bm{K}, \bm{W}\}$ respectively, and there are no possible $\margdetvars^{(1)}, \margdetvars^{(2)}$ such that condition (a) or (b) holds.

However, we \textit{can} derive a tractable pipeline for $\pcfunc_{\circuit, \treatval}(\outcome)$ where we commit to a specific intervention $\treat = \treatval$. In this case, the new denominator is a function of just $\bm{K}$, and so we require $\product(\circuit(\treatval| \bm{K}, \adjustval, \bm{W}), \circuit(\bm{W}, \bm{K}))$  to be $(\treat \cup \bm{K}) \setminus (\treat) = \bm{K}$-deterministic. This can be achieved by choosing $\margdetvars^{(1)} = \margdetvars^{(2)} = \{\bm{K}\}$, which satisfies condition (a) for $\product$. 
By analyzing the rest of the formula, we can derive a set $\targetmdset$ of marginal determinisms for $\circuit(\pcvars)$ that is sufficient for the extended napkin query, and apply Algorithm \ref{alg:optlabel}; we defer the full derivations (and analysis of the frontdoor formula) to the Appendix.

\section{EMPIRICAL EVALUATION}

\begin{table}[]
\begin{tabular}{@{}lccccc@{}}
\toprule
\multirow{2}{*}{\textbf{Dataset}} & \multirow{2}{*}{$|\bm{Z}|$} & \multicolumn{2}{c}{\textbf{Error}} & \multicolumn{2}{c}{\textbf{Time}} \\
                              &    & MD  & Counting & MD  & Counting \\ \midrule
Asia                   &   4       &  0.269             &  \textbf{0.0143}       &    \textbf{0.5}         &      1.0                          \\
Sachs                   &   4       &    0.180           &  \textbf{0.0219}       &     1.7      &     \textbf{ 0.7}                          \\
Child                  &   13    &     \textbf{0.0802}          &  0.135       &      1.9       &      \textbf{0.9}                          \\
Win95pts                   &   59     &        \textbf{0.0044 }       &  0.0511       &      3.2       &      \textbf{0.9}                          \\
Andes                     &    202          &   \textbf{0.0382}       &       0.0982      &           7.9    &          \textbf{1.3}            \\ \bottomrule
\end{tabular}
\caption{Backdoor Estimation (averaged over 10 runs)}
\label{tbl:results}
\end{table}

In this section, we empirically evaluate our tractable algorithm for backdoor adjustment derived using MD-calculus. We generate datasets by sampling $1000$ datapoints from the (discrete variable) Bayesian network (BN) models in the \textit{bnrepository} \citep{bnrepo}, and learn a MDNet over all variables $\pcvars$ from data. For each Bayesian network (causal graph), we select a single treatment variable $\treatm$ and single outcome variable $\outcomem$, as well as a set of variables $\adjust$ forming a valid backdoor adjustment set for $(\treatm, \outcomem)$, and seek to estimate $\sum_{\adjust} p(\outcomem|\treatm, \adjust) p(\adjust)$. We manually select a vtree which splits the scope into $(\treatm \cup \adjust)$ and $\outcomem$ at the root, and thereafter generates the rest of the vtree randomly. Given a vtree, we use Algorithm \ref{alg:optlabel} to generate a labelling, i.e. regular md-vtree. The required tractability properties for backdoor adjustment are then enforced through the structure of the corresponding MDNet. 

The results are shown in Table \ref{tbl:results}; for comparison, we show also results for the \textit{counting} approach, where we estimate $p(\outcomem|\treatm, \adjust)$ as $\frac{N_{\outcomem, \treatm, \adjust}}{N_{\treatm, \adjust}}$, where $N$ refers to the number of datapoints with the subscripted assignment of variables (this is set to 0 if $N_{\treatm, \adjust} = 0$).
It can be seen that, while the counting approach is generally more robust in lower dimensions, the advantage in terms of learning a full model becomes apparent with the higher-dimensional adjustment sets, as shown by the lower error on the Win95pts and Andes datasets. Remarkably, as the size of the adjustment set $\bm{Z}$ increases, the time taken for the algorithm based on the MD-calculus increases only approximately \textit{linearly} in the dimension.  This illustrates the attractiveness of tractable probabilistic modelling, in that we can systematically control the computational cost of exact inference by restricting the size of the PC model.

\section{CONCLUSION}

In summary, we introduced the md-vtree framework for support properties in structured decomposable circuits, and showed how it can be employed for reasoning about tractability of compositional inference queries using our MD-calculus rules. Our unifying framework naturally provides insight into the properties of previously proposed PC classes such as PSDDs, as well as inspiring our newly designed MDNet architecture. Nonetheless, there remain a number of interesting challenges for future work. For example, for more challenging datasets, how can we also learn the vtree and/or MDNet structure from data while maintaining tractability? For what other inference queries can we derive tractability conditions using the MD-calculus? We hope that our work will help lay the theoretical foundations for tackling these questions.

\subsubsection*{Acknowledgements}

This project was funded by the ERC under the European Union’s Horizon 2020 research and innovation programme (FUN2MODEL, grant agreement No.834115).

\bibliographystyle{apalike}
\bibliography{citations.bib}

\newpage
\clearpage
\newpage

\appendix
\thispagestyle{empty}
\onecolumn

\section{Md-vtree Proofs} \label{apx:proofs}

In this section, we provide proofs of the results in Section \ref{sec:mdvtree} regarding md-vtrees.

\subsection{$\margdetvars$-determinism Results}
We begin with the results regarding (structured) marginal determinism.

\propsupport*

\begin{proof}
Since the sum node $\sumunit$ is non-trivial, it has at least two children. Let $\genericunit_1, \genericunit_2$ be two distinct children of $\sumunit$. Let $I_1, I_2$ be the sets of values $\margdetvarsval$ of $\margdetvars$ such that $\dist_{\genericunit_1}(\margdetvarsval) > 0, \dist_{\genericunit_2}(\margdetvarsval) > 0$ respectively. Define $I'_1, I'_2$ similarly for $\margdetvars'$. Note that, by $\margdetvars$-determinism and $\margdetvars'$-determinism, $I_1, I_2$ are disjoint, and similarly $I'_1, I'_2$ are disjoint.

Now, we claim that there exists values $\margdetvarsval \in I_1$ and $\margdetvarsval' \in I_2'$ such that they agree over the intersection $\margdetvars \cap \margdetvars'$. If not, then $\dist_{\genericunit_1}$ and $\dist_{\genericunit_2}$ are non-zero for disjoint subsets of values of $(\margdetvars \cap \margdetvars')$, which implies $(\margdetvars \cap \margdetvars')$-determinism, which is a contradiction of the assumption of the Proposition. Now consider the value $\margdetvarsval \cup \margdetvarsval'$ of $\margdetvars \cup \margdetvars'$. $\dist_{\genericunit_1}(\margdetvarsval \cup \margdetvarsval') = 0$ since $\dist_{\genericunit_1}(\margdetvarsval') = 0$ (by the disjointness of $I'_1, I'_2$), and similarly $\dist_{\genericunit_2}(\margdetvarsval \cup \margdetvarsval') = 0$ since $\dist_{\genericunit_2}(\margdetvarsval) = 0$. For any other child $\genericunit_3$ of $\sumunit$, we have that $I_1$ and $I_3$ are disjoint by $\margdetvars$-determinism, so $\dist_{\genericunit_3}(\margdetvarsval) = 0$ and we get $\dist_{\genericunit_3}(\margdetvarsval \cup \margdetvarsval') = 0$. Putting it all together, $\dist_{\sumunit}(\margdetvarsval \cup \margdetvarsval') = 0$ and thus the circuit $\circuit_{\sumunit}$ does not have full support.

\end{proof}

An important corollary of this result is that any \emph{circuit} that is $\margdetvars$-deterministic and $\margdetvars'$-deterministic cannot have full support, as the root sum node $\rootunit$ of the circuit must be $\margdetvars$-deterministic and $\margdetvars'$-deterministic. 

\propsubset*

\begin{proof}
By definition, a sum node $\sumunit$ is $\margdetvars$-deterministic if for any instantiation $\margdetvarsval$ of $\margdetvars$, at most one of its children $\genericunit_i$ evaluate to a nonzero output under $\margdetvarsval$. If $\margdetvars' \supseteq \margdetvars$, then any instantiation $\margdetvarsval'$ of $\margdetvars'$ will imply a specific instantiation of $\margdetvarsval$, and so at most one of the children of $\sumunit$ evaluate to a nonzero output under $\margdetvarsval'$. More formally, $\dist_{\genericunit_i}(\margdetvarsval) = \sum_{\margdetvars' \setminus \margdetvars} \dist_{\genericunit_i}(\margdetvarsval, \margdetvars' \setminus \margdetvars) = 0 \implies \dist_{\genericunit_i}(\margdetvarsval') = 0$.
\end{proof}

\propimplied*

\begin{proof}
Since $\circuit$ respects $\mdvtree$, every sum unit $\sumunit \in \circuit$ has scope $\scope(\sumunit) = \scope(\vnode)$ for some $\vnode \in \vnodes$, and $\sumunit$ is $\vlabel(\vnode)$-deterministic. Further, since $\mdvtree$ implies $\margdetvars$-determinism, we have that  $\scope(\vnode) \cap \margdetvars = \emptyset$, or else $\margdetvars \supseteq \vlabel(\vnode)$. Combining these statements, we see that for all sum units $\sumunit \in \circuit$, either $\scope(\sumunit) \cap \margdetvars = \emptyset$, or else $\sumunit$ is $\vlabel(\vnode)$-deterministic and thus (by Proposition \ref{prop:supset}) $\margdetvars$-deterministic. This shows that $\circuit$ is $\margdetvars$-deterministic.
\end{proof}

The following theorem justifies the intuition that having smaller labels $\vlabel(\vnode)$ corresponds to a stronger restriction on the circuit, such that less circuits respect the md-vtree, but more marginal determinisms are implied:

\propstrength*

\begin{proof}
For the first part, suppose $\margdetvars \in \margdetset(\mdvtree')$. Then for all $\vnode \in \vnodes$, it holds that $\scope(\vnode) \cap \margdetvars = \emptyset$, or else $\margdetvars \supseteq \vlabel'(\vnode)$. Since $\vlabel'(\vnode) \supseteq \vlabel(\vnode)$, it holds that $\scope(\vnode) \cap \margdetvars = \emptyset$, or else $\margdetvars \supseteq \vlabel(\vnode)$ also, so $\margdetvars \in \margdetset(\mdvtree)$. This shows that $\margdetset(\mdvtree) \supseteq \margdetset(\mdvtree')$.

For the second part, suppose that $\circuit \in \circuitclass_{\mdvtree}$. Then, for any sum unit $\sumunit \in \circuit$, there is an $\vnode \in \vnodes$ that $\sumunit$ is marginally deterministic w.r.t. $\vlabel(\vnode)$. As $\vlabel'(\vnode) \supseteq \vlabel(\vnode)$, this means that $\sumunit$ is also marginally deterministic w.r.t. $\vlabel'(\vnode)$. Thus $\circuit$ respects $\mdvtree'$ also, i.e. $\circuit \in \circuitclass_{\mdvtree'}$.
\end{proof}

\subsection{Regular and Optimal md-vtrees}

We now move to the results regarding regular and optimal md-vtrees. Recall that Algorithm \ref{alg:optlabel} is designed to return a labelling of a given vtree that satisfies some desired set of marginal determinisms. We begin with a formal correctness proof of Algorithm \ref{alg:optlabel}, i.e. that it does indeed return an md-vtree which implies the given set $\targetmdset$ of marginal determinisms:

\begin{proposition}[Correctness of Algorithm 1]
    For any input vtree $\vtree = (\vnodes, \vedges)$ and set of marginal determinisms $\targetmdset$, Algorithm \ref{alg:optlabel} returns an md-vtree that implies $\margdetvars$-determinism for all $\margdetvars \in \targetmdset$.
\end{proposition}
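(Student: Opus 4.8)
The plan is to first compute, in closed form, the label $\vlabel(\vnode)$ that Algorithm \ref{alg:optlabel} assigns to each vtree node $\vnode$, and then verify the implied-determinism condition directly from its definition. For a fixed $\vnode$, I would introduce $\targetmdset_{\vnode} := \{\margdetvars \in \targetmdset : \margdetvars \cap \scope(\vnode) \neq \emptyset\}$, the required determinisms whose variables overlap $\scope(\vnode)$; these are exactly the sets triggering the intersection on line \ref{algline:qloopend}. Tracing the inner loop using the universal-set identity $\vlabelno \cap S = S$, I expect the dichotomy: if $\targetmdset_{\vnode} = \emptyset$ then no intersection fires, the guard on line \ref{algline:edgecasebegin} fails, and $\vlabel(\vnode) = \vlabelno$; otherwise, after the loop $\vlabel(\vnode) = \bigcap_{\margdetvars \in \targetmdset_{\vnode}} \margdetvars$, a concrete (non-universal) set, so the guard passes and the final label is $\vlabel(\vnode) = \scope(\vnode) \cap \bigcap_{\margdetvars \in \targetmdset_{\vnode}} \margdetvars$.

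Next I would check that the output is a legitimate md-vtree, i.e. $\vlabel(\vnode) \in \powerset(\scope(\vnode)) \cup \{\vlabelno\}$ for every $\vnode$. This is immediate from the closed form: in the first case the label is $\vlabelno$, and in the second case the intersection with $\scope(\vnode)$ on lines \ref{algline:edgecasebegin}--\ref{algline:edgecaseend} forces $\vlabel(\vnode) \subseteq \scope(\vnode)$. This identifies the role of that edge-case step: it only shrinks the label, so it does not change which determinisms are implied, but it guarantees the label is a valid subset of the scope.

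Finally, to verify the implied-determinism condition, I would fix an arbitrary $\margdetvars^{*} \in \targetmdset$ and an arbitrary node $\vnode$ with $\scope(\vnode) \cap \margdetvars^{*} \neq \emptyset$, and show $\vlabel(\vnode) \subseteq \margdetvars^{*}$. The overlap condition says precisely that $\margdetvars^{*} \in \targetmdset_{\vnode}$, so $\targetmdset_{\vnode} \neq \emptyset$ and we are in the second case, giving
\[
\vlabel(\vnode) \;=\; \scope(\vnode) \cap \bigcap_{\margdetvars \in \targetmdset_{\vnode}} \margdetvars \;\subseteq\; \bigcap_{\margdetvars \in \targetmdset_{\vnode}} \margdetvars \;\subseteq\; \margdetvars^{*},
\]
where the last inclusion holds because $\margdetvars^{*}$ is one of the intersected sets. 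By the definition of implied $\margdetvars$-determinism, this establishes that $\mdvtree$ implies $\margdetvars^{*}$-determinism for every $\margdetvars^{*} \in \targetmdset$, as required.

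The argument is essentially bookkeeping rather than a deep result, so the main obstacle is purely one of care: tracking the universal-set semantics correctly through the two loops (to justify the closed form), and recognizing that the final intersection with $\scope(\vnode)$ is needed only for well-formedness of the label and is harmless for the implication, since intersection only makes the label smaller and hence easier to contain in any $\margdetvars^{*}$ that overlaps the scope.
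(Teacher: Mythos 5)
Your proposal is correct and follows essentially the same route as the paper's proof: both observe that whenever $\margdetvars \in \targetmdset$ overlaps $\scope(\vnode)$ the label is intersected with $\margdetvars$, and that all subsequent updates (further intersections, including the final one with $\scope(\vnode)$) can only shrink the label, so $\vlabel(\vnode) \subseteq \margdetvars$ holds at termination. Your explicit closed form for $\vlabel(\vnode)$ is slightly more detailed bookkeeping than the paper's by-inspection argument, but it is the same closed form the paper itself derives in the subsequent regularity proof, so there is no substantive difference.
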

\begin{proof}
    We need to show that, for each $\vnode \in \vnodes$ and $\margdetvars \in \targetmdset$, if $\scope(\vnode) \cap \margdetvars \neq \emptyset$, then $\margdetvars \supseteq \vlabel(\vnode)$.

    This can be seen by inspection of the Algorithm. For every $\vnode \in \vnodes$, and every $\margdetvars \in \targetmdset$, if $\scope(\vnode) \cap \margdetvars \neq \emptyset$, then in line \ref{algline:qloopend} of the Algorithm, we intersect the label with the $\margdetvars$. At each iteration of the loop in lines \ref{algline:qloopbegin}-\ref{algline:qloopend}, no elements can be added to $\vlabel(\vnode)$ as we are taking an intersection, and in lines \ref{algline:edgecasebegin}-\ref{algline:edgecaseend} we also take an intersection (with the scope). Thus, it follows that for all $\margdetvars \in \targetmdset$ such that $\scope(\vnode) \cap \margdetvars \neq \emptyset$, the label at the end of the outer loop will satisfy $\margdetvars \supseteq \vlabel(\vnode)$. 
\end{proof}

Now, we move on to properties of Algorithm \ref{alg:optlabel}. We begin by showing that the output md-vtree is regular; that is, the labelling function satisfies a certain structure as defined in Definition \ref{defn:regular} and reproduced below.

\defnregular*

\propalg*

\begin{proof}
By examining the Algorithm, we can see that the  label of each vtree node node is given by:
\begin{equation}
\vlabel(\vnode) = 
\begin{cases}
    \vlabelno \;\;\;\;\;\;\;\;\;\;\;\;\;\;\;\;\;\;\;\;\;\;\;\;\;\;\;\;\;\;\;\;\;\;\;\;\;\;\;\;\;\;\;\; \textnormal{if } \margdetvars \cap \scope(\vnode)= \emptyset \;\; \forall \margdetvars \in \targetmdset \\
    \scope(\vnode) \cap (\bigcap_{\margdetvars \in \targetmdset: \margdetvars \cap \scope(\vnode) \neq \emptyset} \margdetvars ) \;\;\;\;\;\; \textnormal{otherwise} 
\end{cases}
\end{equation}

For any non-leaf vtree-node $\vnode$, let its children be $\vnode_1, \vnode_2$, and note that by definition, $\scope(\vnode) = \scope(\vnode_1) \cup \scope(\vnode_2)$ with $\scope(\vnode_1), \scope(\vnode_2)$ disjoint. Firstly, if $\vlabel(\vnode) = \vlabelno$, this means that no $\margdetvars \in \targetmdset$ has non-empty intersection with $\scope(\vnode)$, and since the scopes of the children are subsets of $\scope(\vnode)$, we must also have $\vlabel(\vnode_1) = \vlabel(\vnode_2) = \vlabelno$ and the regularity condition is satisfied.

Otherwise, we have that $\vlabel(\vnode) = \scope(\vnode) \cap (\bigcap_{\margdetvars \in \targetmdset_{\vnode}} \margdetvars)$, where have written $\targetmdset_{\vnode} := \{\margdetvars \in \targetmdset: \margdetvars \cap \scope(\vnode) \neq \emptyset\}$ to denote the subset of $\targetmdset$ that has non-empty intersection of $\scope(\vnode)$. Note that $\targetmdset_{\vnode}$ is non-empty. Since $\scope(\vnode) = \scope(\vnode_1) \cup \scope(\vnode_2)$, it must be the case that every $\margdetvars \in \targetmdset_{\vnode}$ also intersects with at least one of $\scope(\vnode_1)$, $\scope(\vnode_2)$. More formally, $\targetmdset_{\vnode} = \targetmdset_{\vnode_{1}} \cup \targetmdset_{\vnode_{2}}$. We consider three cases separately:

\begin{itemize}
    \item If $\targetmdset_{\vnode_{2}}$ is empty, then we must have $\targetmdset_{\vnode} = \targetmdset_{\vnode_{1}}$. Thus $\targetmdset_{\vnode_{1}}$ is non-empty, meaning that $\vlabel(\vnode_1) = \scope(\vnode_1) \cap (\bigcap_{\margdetvars \in \targetmdset_{\vnode_{1}}} \margdetvars)$ in the definition. We can then derive that $\vlabel(\vnode_1) = \scope(\vnode_1) \cap (\bigcap_{\margdetvars \in \targetmdset_{\vnode_{1}}} \margdetvars) = \scope(\vnode_1) \cap (\bigcap_{\margdetvars \in \targetmdset_{\vnode}} \margdetvars) = \scope(\vnode) \cap (\bigcap_{\margdetvars \in \targetmdset_{\vnode}} \margdetvars ) = \vlabel(\vnode_1)$, where the second-to-last equality follows since all $\margdetvars \in \targetmdset_{\vnode}$ have empty intersection with $\scope(\vnode_2)$. This satisfies the regularity condition.
    \item If $\targetmdset_{\vnode_{1}}$ is empty, by similar reasoning we have that $\vlabel(\vnode_2) = \vlabel(\vnode)$.
    \item If neither $\targetmdset_{\vnode_{1}}, \targetmdset_{\vnode_{2}}$ is empty, then we have that $\vlabel(\vnode) = \scope(\vnode) \cap (\bigcap_{\margdetvars \in \targetmdset_{\vnode}} \margdetvars) = \left(\scope(\vnode_1) \cap (\bigcap_{\margdetvars \in \targetmdset_{\vnode}} \margdetvars)\right) \cup \left(\scope(\vnode_2) \cap (\bigcap_{\margdetvars \in \targetmdset_{\vnode}} \margdetvars)\right) = \left(\scope(\vnode_1) \cap (\bigcap_{\margdetvars \in \targetmdset_{\vnode_{1}}} \margdetvars)\right) \cup \left(\scope(\vnode_2) \cap (\bigcap_{\margdetvars \in \targetmdset_{\vnode_{2}}} \margdetvars)\right)$. Since both $\targetmdset_{\vnode_{1}}, \targetmdset_{\vnode_{2}}$ are non-empty, we have that $\vlabel(\vnode_1) = \scope(\vnode_1) \cap (\bigcap_{\margdetvars \in \targetmdset_{\vnode_{1}}} \margdetvars)$ and $\scope(\vnode_2) \cap (\bigcap_{\margdetvars \in \targetmdset_{\vnode_{2}}} \margdetvars)$ in the definition. Combining, we have that $\vlabel(\vnode) = \vlabel(\vnode_1) \cup \vlabel(\vnode_2)$, which satisfies the regularity condition.
\end{itemize} 

Thus we have shown that the output md-vtree is regular.

\end{proof}

In the following theorem, we show that the significance of regularity, in that regular md-vtrees are ``optimally expressive'' among all md-vtrees with the same vtree.

\thmoptimal*

We will prove this Theorem explicitly by constructing a regular md-vtree with these properties. To do this, we prove two Lemmas which define operations which do not change $\margdetset(\mdvtree)$, while keeping the same or increasing the set $\circuitclass_{\mdvtree}$; the result of iterative application of the two operations being a regular md-vtree.

\begin{definition}[Expand Child Labels] \label{defn:ecl}
    Given a md-vtree $\mdvtree = (\vtree, \vlabel)$, and any vtree nodes $\vnode_{\text{pa}^*}, \vnode_{\text{ch}^*}$ such that $\vnode_{\text{ch}^*}$ is a child of $\vnode_{\text{pa}^*}$, the operation $\texttt{ECL}(\mdvtree, \vnode_{\text{pa}^*}, \vnode_{\text{ch}^*})$ returns a new md-vtree $\mdvtree' = (\vtree, \vlabel')$, defined as follows:
    \begin{equation}
    \vlabel'(\vnode) = 
        \begin{cases}
            \vlabel(\vnode_{\text{ch}^*}) \cup (\vlabel(\vnode_{\text{pa}^*}) \cap \scope(\vnode_{\text{ch}^*}) ) & \textnormal{if } \vnode = \vnode_{ch}\\
            \vlabel(\vnode) & \textnormal{otherwise}
        \end{cases}
    \end{equation}
\end{definition}

\begin{lemma} \label{lem:pasubsetch}
 The output $\mdvtree' = \texttt{ECL}(\mdvtree, \vnode_{\text{pa}^*}, \vnode_{\text{ch}^*})$ satisfies $\margdetset(\mdvtree') = \margdetset(\mdvtree)$, and $\circuitclass_{\mdvtree} \subseteq \circuitclass_{\mdvtree'}$.
\end{lemma}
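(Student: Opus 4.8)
The plan is to prove the two assertions using Theorem \ref{prop:strength} to handle everything except one inclusion, which I verify directly from the definition of $\margdetset$. The key preliminary observation is that \texttt{ECL} can only \emph{enlarge} labels: at $\vnode_{\text{ch}^*}$ we have $\vlabel'(\vnode_{\text{ch}^*}) = \vlabel(\vnode_{\text{ch}^*}) \cup (\vlabel(\vnode_{\text{pa}^*}) \cap \scope(\vnode_{\text{ch}^*})) \supseteq \vlabel(\vnode_{\text{ch}^*})$, and $\vlabel'(\vnode) = \vlabel(\vnode)$ at every other node, so $\vlabel'(\vnode) \supseteq \vlabel(\vnode)$ for all $\vnode \in \vnodes$ (one should check this also respects the universal-set conventions, e.g.\ $\vlabelno \cup S = \vlabelno$, so that $\mdvtree'$ is a well-defined md-vtree). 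Applying Theorem \ref{prop:strength} to the pair $\vlabel' \supseteq \vlabel$ immediately yields both $\margdetset(\mdvtree) \supseteq \margdetset(\mdvtree')$ and $\circuitclass_{\mdvtree} \subseteq \circuitclass_{\mdvtree'}$. This settles the circuit-class claim outright, and leaves only the reverse inclusion $\margdetset(\mdvtree) \subseteq \margdetset(\mdvtree')$ to establish the claimed equality.

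For that reverse inclusion I would fix an arbitrary $\margdetvars \in \margdetset(\mdvtree)$ and verify the defining condition for membership in $\margdetset(\mdvtree')$, namely that $\margdetvars \supseteq \vlabel'(\vnode)$ for every $\vnode$ with $\scope(\vnode) \cap \margdetvars \neq \emptyset$. For all nodes $\vnode \neq \vnode_{\text{ch}^*}$ the label is unchanged, so the condition transfers verbatim from $\mdvtree$. The only node that needs genuine work is $\vnode_{\text{ch}^*}$: assuming $\scope(\vnode_{\text{ch}^*}) \cap \margdetvars \neq \emptyset$, I must show $\margdetvars \supseteq \vlabel(\vnode_{\text{ch}^*}) \cup (\vlabel(\vnode_{\text{pa}^*}) \cap \scope(\vnode_{\text{ch}^*}))$.

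The crux is this single case, and it hinges on the vtree scope relation $\scope(\vnode_{\text{ch}^*}) \subseteq \scope(\vnode_{\text{pa}^*})$. From $\scope(\vnode_{\text{ch}^*}) \cap \margdetvars \neq \emptyset$ and this containment, I get $\scope(\vnode_{\text{pa}^*}) \cap \margdetvars \neq \emptyset$ as well. Now I invoke $\margdetvars \in \margdetset(\mdvtree)$ at \emph{both} nodes: at $\vnode_{\text{ch}^*}$ it gives $\margdetvars \supseteq \vlabel(\vnode_{\text{ch}^*})$, and at $\vnode_{\text{pa}^*}$ it gives $\margdetvars \supseteq \vlabel(\vnode_{\text{pa}^*}) \supseteq \vlabel(\vnode_{\text{pa}^*}) \cap \scope(\vnode_{\text{ch}^*})$. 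Taking the union of the two containments produces exactly $\margdetvars \supseteq \vlabel'(\vnode_{\text{ch}^*})$, completing the inclusion and hence the lemma.

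I do not expect a substantive obstacle here; the argument is essentially bookkeeping against the definitions. The one place where care is required — and the step I would flag as delicate — is recognizing that the newly inserted term $\vlabel(\vnode_{\text{pa}^*}) \cap \scope(\vnode_{\text{ch}^*})$ is absorbed by applying the implied-determinism condition at the \emph{parent} $\vnode_{\text{pa}^*}$ rather than at $\vnode_{\text{ch}^*}$, which is precisely where the parent/child scope inclusion is needed. (As a minor subtlety, when $\vlabel(\vnode_{\text{pa}^*}) = \vlabelno$ the premise $\scope(\vnode_{\text{pa}^*}) \cap \margdetvars \neq \emptyset$ cannot hold for any $\margdetvars \in \margdetset(\mdvtree)$, so the $\vnode_{\text{ch}^*}$ case is then vacuous; the argument goes through uniformly under the universal-set conventions.)
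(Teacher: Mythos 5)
Your proposal is correct and follows essentially the same route as the paper's proof: one direction of the equality plus the circuit-class inclusion via Theorem \ref{prop:strength} applied to the pointwise containment $\vlabel \subseteq \vlabel'$, and the reverse inclusion by invoking the implied-determinism condition at both $\vnode_{\text{ch}^*}$ and $\vnode_{\text{pa}^*}$ (using $\scope(\vnode_{\text{ch}^*}) \subseteq \scope(\vnode_{\text{pa}^*})$) and taking the union of the resulting containments. The only differences are cosmetic (your explicit handling of the $\vlabelno$ edge case is a small addition the paper leaves implicit).
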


\begin{proof}
The only difference between $\mdvtree$ and $\mdvtree'$ is the label of $\vnode_{\text{ch}^*}$. Suppose that $\margdetvars \in \margdetset(\mdvtree)$, then we have that either $\scope(\vnode_{\text{ch}^*}) \cap \margdetvars = \emptyset$, or else $\margdetvars \supseteq \vlabel(\vnode_{\text{ch}^*})$. In the former case, since the vtrees and thus scopes are the same between $\mdvtree, \mdvtree$, it follows that $\margdetvars \in \margdetset(\mdvtree')$ also. In the latter case, since the scope of the parent $\scope(\vnode_{\text{pa}^*}) \supseteq \scope(\vnode_{\text{ch}^*})$, $\margdetvars$ overlaps with the parent scope as well, implying that that $\margdetvars \supseteq \vlabel(\vnode_{\text{pa}^*})$. Thus we have that $\margdetvars \supseteq \vlabel(\vnode_{\text{ch}^*}) \cup \vlabel(\vnode_{\text{pa}^*}) \supseteq  \vlabel(\vnode_{\text{ch}^*}) \cup (\vlabel(\vnode_{\text{pa}^*}) \cap \scope(\vnode_{\text{ch}^*}) ) = \vlabel'(\vnode_{\text{ch}^*})$. Thus, $\margdetvars \in \margdetset(\mdvtree')$ also. That is, $\margdetset(\mdvtree) \subseteq \margdetset(\mdvtree')$. 

To complete the result, note that $\vlabel(\vnode) \subseteq \vlabel'(\vnode)$ for all vtree nodes $\vnode$. Thus by Theorem \ref{prop:strength}, it follows that $\margdetset(\mdvtree) \supseteq \margdetset(\mdvtree')$ and $\circuitclass_{\mdvtree} \subseteq \circuitclass_{\mdvtree'}$. Combining with the paragraph above we have shown that $\margdetset(\mdvtree) = \margdetset(\mdvtree')$.
\end{proof}

Intuitively, this operation "pushes down" elements of $\vlabel(\vnode_{\text{pa}^*})$ to its children. If we apply this operation to all pairs of parent/child vtree nodes $(\vnode_{\text{pa}}, \vnode_{\text{ch}})$, then it can be seen that the new labels will have the property that \emph{all elements of the parent label that are contained in the scope of a child, will be in the label of that child}. More formally,
$\vlabel'(\vnode_{\text{pa}}) \cap \scope(\vnode_{\text{ch}}) = \vlabel'(\vnode_{\text{pa}}) \cap \vlabel'(\vnode_{\text{ch}})$.
This is the starting point for the next operation:

\begin{definition}[Expand Parent Labels] \label{defn:epl}
Let $\mdvtree = (\vtree, \vlabel)$ be a md-vtree such that $\vlabel(\vnode_{\text{pa}}) \cap \scope(\vnode_{\text{ch}}) = \vlabel(\vnode_{\text{pa}}) \cap \vlabel(\vnode_{\text{ch}})$ holds for all pairs of parents $\vnode_{\text{pa}}$ and children $\vnode_{\text{ch}}$. Then, given any non-leaf vtree node $\vnode_{\text{pa}^*}$, the operation $\texttt{EPL}(\mdvtree, \vnode_{\text{pa}^*})$ returns a new md-vtree $\mdvtree' = (\vtree, \vlabel')$, defined as follows:
\begin{equation}
    \vlabel'(\vnode) = 
        \begin{cases}
             \bigcup_{\vnode_{\text{ch}^*} \in \vnodes_{\text{active}}} \vlabel(\vnode_{\text{ch}^*}) & \textnormal{if } \vnode = \vnode_{pa}\\
            \vlabel(\vnode) & \textnormal{otherwise}
        \end{cases}
    \end{equation}
where we define $\vnodes_{\text{active}} = \{\vnode_{\text{ch}^*} | \vnode_{\text{ch}^*} \in \vnodes_{\text{ch}^*}, \vlabel(\vnode_{\text{ch}^*}) \cap \vlabel(\vnode_{\text{pa}^*}) \neq \emptyset\}$ to be the set of all children whose labellings have non-empty intersection with the labelling of the parent.
\end{definition}

\begin{lemma} \label{lem:paunionch}
 The output $\mdvtree' = \texttt{EPL}(\mdvtree, \vnode_{\text{pa}^*})$ satisfies $\margdetset(\mdvtree') = \margdetset(\mdvtree)$, and $\circuitclass_{\mdvtree} \subseteq \circuitclass_{\mdvtree'}$.

Further, the property that $\vlabel'(\vnode_{\text{pa}}) \cap \scope(\vnode_{\text{ch}}) = \vlabel'(\vnode_{\text{pa}}) \cap \vlabel'(\vnode_{\text{ch}})$ holds for all pairs of parents $\vnode_{\text{pa}}$ and children $\vnode_{\text{ch}}$ in $\mdvtree'$ (i.e. is maintained in $\mdvtree'$).
\end{lemma}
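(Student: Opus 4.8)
The plan is to treat the two claimed set relations separately, exploiting that $\texttt{EPL}$ modifies only the label of the single node $\vnode_{\text{pa}^*}$, so that every verification reduces to local checks around $\vnode_{\text{pa}^*}$ and its immediate neighbours. The first step is to show that the label only grows, i.e. $\vlabel(\vnode) \subseteq \vlabel'(\vnode)$ for every $\vnode \in \vnodes$; since only $\vnode_{\text{pa}^*}$ changes, it suffices to prove $\vlabel(\vnode_{\text{pa}^*}) \subseteq \vlabel'(\vnode_{\text{pa}^*}) = \bigcup_{\vnode_{\text{ch}^*} \in \vnodes_{\text{active}}} \vlabel(\vnode_{\text{ch}^*})$. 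I would take any $x \in \vlabel(\vnode_{\text{pa}^*})$; since the children scopes partition $\scope(\vnode_{\text{pa}^*})$, $x$ lies in $\scope(\vnode_{\text{ch}^*})$ for exactly one child, and the precondition $\vlabel(\vnode_{\text{pa}^*}) \cap \scope(\vnode_{\text{ch}^*}) = \vlabel(\vnode_{\text{pa}^*}) \cap \vlabel(\vnode_{\text{ch}^*})$ then forces $x \in \vlabel(\vnode_{\text{ch}^*})$, which in turn makes $\vnode_{\text{ch}^*}$ active, so $x$ appears in the union. Once $\vlabel(\vnode) \subseteq \vlabel'(\vnode)$ is established, Theorem \ref{prop:strength} immediately yields both $\margdetset(\mdvtree) \supseteq \margdetset(\mdvtree')$ and the circuit-class inclusion $\circuitclass_{\mdvtree} \subseteq \circuitclass_{\mdvtree'}$, so two of the three obligations come for free.

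The substantive direction is the reverse containment $\margdetset(\mdvtree) \subseteq \margdetset(\mdvtree')$, i.e. that enlarging the parent label to the union of its active children's labels does not destroy any implied marginal determinism. I would fix $\margdetvars \in \margdetset(\mdvtree)$ and, since only $\vnode_{\text{pa}^*}$ changed, only re-verify the implication at $\vnode_{\text{pa}^*}$ in $\mdvtree'$: assuming $\scope(\vnode_{\text{pa}^*}) \cap \margdetvars \neq \emptyset$, show $\margdetvars \supseteq \vlabel'(\vnode_{\text{pa}^*})$. The key observation is that each active child $\vnode_{\text{ch}^*}$ shares some element $y$ with $\vlabel(\vnode_{\text{pa}^*})$; because $\margdetvars \in \margdetset(\mdvtree)$ already forces $\margdetvars \supseteq \vlabel(\vnode_{\text{pa}^*})$ (the parent constraint in $\mdvtree$ fires, as $\margdetvars$ overlaps $\scope(\vnode_{\text{pa}^*})$), that $y$ lies in both $\margdetvars$ and $\scope(\vnode_{\text{ch}^*})$. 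Hence $\margdetvars$ overlaps $\scope(\vnode_{\text{ch}^*})$, so the child constraint for $\vnode_{\text{ch}^*}$ in $\mdvtree$ fires and gives $\margdetvars \supseteq \vlabel(\vnode_{\text{ch}^*})$. Taking the union over active children yields $\margdetvars \supseteq \vlabel'(\vnode_{\text{pa}^*})$; combined with the previous paragraph this gives $\margdetset(\mdvtree) = \margdetset(\mdvtree')$.

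For preservation of the structural property, I would argue by cases on how a parent-child pair $(\vnode_{\text{pa}}, \vnode_{\text{ch}})$ relates to $\vnode_{\text{pa}^*}$, the property being unchanged for pairs involving neither. When $\vnode_{\text{pa}} = \vnode_{\text{pa}^*}$, I intersect $\vlabel'(\vnode_{\text{pa}^*}) = \bigcup_{\vnode_{\text{ch}^*} \in \vnodes_{\text{active}}} \vlabel(\vnode_{\text{ch}^*})$ with a fixed child scope and use disjointness of the children scopes to collapse the union to the single relevant term, so both sides equal $\vlabel(\vnode_{\text{ch}})$ (if $\vnode_{\text{ch}}$ is active) or $\emptyset$ (otherwise). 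When $\vnode_{\text{pa}^*}$ is itself the child of some grandparent $\vnode_{\text{gp}}$, the grandparent label is unchanged and I reduce the desired equality to $\vlabel(\vnode_{\text{gp}}) \cap \vlabel(\vnode_{\text{pa}^*}) = \vlabel(\vnode_{\text{gp}}) \cap \vlabel'(\vnode_{\text{pa}^*})$, whose nontrivial inclusion follows by taking $z$ in the right-hand side, locating it in some active child's label (hence in $\scope(\vnode_{\text{pa}^*})$), and invoking the old property for the pair $(\vnode_{\text{gp}}, \vnode_{\text{pa}^*})$ to push $z$ back into $\vlabel(\vnode_{\text{pa}^*})$.

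The main obstacle I anticipate is the boundary behaviour of the universal label $\vlabelno$, since every argument above implicitly treats labels as honest subsets of their scope. When all relevant labels are genuine subsets the reasoning goes through verbatim, and when some active child carries $\vlabelno$ the union $\bigcup_{\vnode_{\text{ch}^*} \in \vnodes_{\text{active}}} \vlabel(\vnode_{\text{ch}^*})$ collapses to $\vlabelno$ (using $\vlabelno \cup S = \vlabelno$), so the parent label is unchanged and the claims are immediate. The delicate configuration is a node $\vnode_{\text{pa}^*}$ carrying $\vlabelno$ whose children all carry proper-subset labels, which the precondition forces to equal the children's full scopes, so that $\vlabel'(\vnode_{\text{pa}^*})$ becomes a proper subset rather than $\vlabelno$; I would analyse this case on its own using only the formal top-element properties $\vlabelno \supseteq S$, $\vlabelno \cap S = S$, and $\vlabelno \cup S = \vlabelno$ in place of element-wise arguments, and I expect this corner, rather than the generic reasoning, to be where the real care is required.
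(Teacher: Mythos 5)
Your three main steps are, modulo phrasing, exactly the paper's proof: the paper also first establishes $\vlabel(\vnode_{\text{pa}^*}) \subseteq \vlabel'(\vnode_{\text{pa}^*})$ (via the set identity $\bigcup_{\vnode_{\text{ch}^*}}\left(\vlabel(\vnode_{\text{pa}^*}) \cap \scope(\vnode_{\text{ch}^*})\right) = \vlabel(\vnode_{\text{pa}^*})$ rather than your element chase, but it is the same argument) and invokes Theorem~\ref{prop:strength}; it proves $\margdetset(\mdvtree) \subseteq \margdetset(\mdvtree')$ by exactly your ``each active child shares an element with the parent label, hence $\margdetvars$ meets its scope and the child constraint fires'' step; and it handles the structural property by the same two-case analysis on pairs touching $\vnode_{\text{pa}^*}$, using disjointness of child scopes in case (a) and the old property for $(\vnode_{\text{gp}}, \vnode_{\text{pa}^*})$ in case (b). So for proper-subset labels your proposal is correct and matches the paper.

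The one place you go beyond the paper is the $\vlabelno$ discussion, and your instinct there is better than your proposed fix. The configuration you single out --- $\vlabel(\vnode_{\text{pa}^*}) = \vlabelno$ with the precondition forcing $\vlabel(\vnode_{\text{ch}}) = \scope(\vnode_{\text{ch}})$ for every child --- is not merely delicate: the lemma as stated fails there, so no amount of careful manipulation of the top-element axioms will close it. Concretely, take a vtree with root $\vnode_{\text{pa}^*}$ and two leaves, $\vlabel(\vnode_{\text{pa}^*}) = \vlabelno$ and each leaf labelled by its scope; the precondition holds, $\texttt{EPL}$ replaces $\vlabelno$ by $\scope(\vnode_{\text{pa}^*}) = \pcvars$, and then $\pcvars \in \margdetset(\mdvtree') \setminus \margdetset(\mdvtree)$ (in $\mdvtree$ no nonempty $\margdetvars$ can satisfy $\margdetvars \supseteq \vlabelno$), while $\circuitclass_{\mdvtree'}$ \emph{shrinks} because the root sum nodes must now be $\scope(\vnode_{\text{pa}^*})$-deterministic. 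Both conclusions of the lemma break, and ``labels only grow'' breaks with them, since $\vlabelno \not\subseteq \scope(\vnode_{\text{pa}^*})$ under the paper's conventions. The paper's own proof has the same blind spot (its step $\vlabel(\vnode_{\text{pa}^*}) \cap \scope(\vnode_{\text{pa}^*}) = \vlabel(\vnode_{\text{pa}^*})$ is false for $\vlabelno$), so you should not expect to recover the statement by a separate case analysis; the repair has to amend the operation or the hypothesis, e.g.\ stipulate that $\texttt{EPL}$ leaves a $\vlabelno$ parent label unchanged (which is the semantically correct choice, since $\vlabelno$ already implies no marginal determinisms at that node and imposes no circuit constraint), or restrict the lemma to labels in $\powerset(\scope(\vnode))$. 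Your other $\vlabelno$ sub-case (an active child carrying $\vlabelno$, so the union collapses to $\vlabelno$) does go through as you say.
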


\begin{proof}
Firstly, we show that $\vlabel(\vnode_{\text{pa}^*}) \subseteq \vlabel'(\vnode_{\text{pa}^*})$. This follows by taking a union over children of both sides of the assumption $\vlabel(\vnode_{\text{pa}^*}) \cap \scope(\vnode_{\text{ch}^*}) = \vlabel(\vnode_{\text{pa}^*}) \cap \vlabel(\vnode_{\text{ch}^*})$, where the LHS becomes $\bigcup_{\vnode_{\text{ch}^*} \in \vnodes_{\text{ch}^*}} \left(\vlabel(\vnode_{\text{pa}^*}) \cap \scope(\vnode_{\text{ch}^*}) \right) 
= \vlabel(\vnode_{\text{pa}^*}) \cap \bigcup_{\vnode_{\text{ch}^*} \in \vnodes_{\text{ch}^*}} \scope(\vnode_{\text{ch}^*}) = \vlabel(\vnode_{\text{pa}^*}) \cap \scope(\vnode_{\text{pa}^*}) = \vlabel(\vnode_{\text{pa}^*})$, and the RHS becomes $\bigcup_{\vnode_{\text{ch}^*} \in \vnodes_{\text{ch}^*}} \left(\vlabel(\vnode_{\text{pa}^*}) \cap \vlabel(\vnode_{\text{ch}^*})\right) = \bigcup_{\vnode_{\text{ch}^*} \in \vnodes_{\text{active}}} \left(\vlabel(\vnode_{\text{pa}^*}) \cap \vlabel(\vnode_{\text{ch}^*})\right) = \vlabel(\vnode_{\text{pa}^*}) \cap \bigcup_{\vnode_{\text{ch}^*} \in \vnodes_{\text{active}}}  \vlabel(\vnode_{\text{ch}^*}) \subseteq \vlabel'(\vnode_{\text{pa}^*})$. Thus
$\vlabel(\vnode) \subseteq \vlabel'(\vnode)$ for all vtree nodes $\vnode$, and by Theorem \ref{prop:strength}, it follows that $\margdetset(\mdvtree) \supseteq \margdetset(\mdvtree')$ and $\circuitclass_{\mdvtree} \subseteq \circuitclass_{\mdvtree'}$.  

Now suppose $\margdetvars \in \margdetset(\mdvtree)$. We consider two cases. Firstly, if $\scope(\vnode_{\text{pa}^*}) \cap \margdetvars = \emptyset$, then since vtrees and scopes are the same between $\mdvtree, \mdvtree'$, we have $\margdetvars \in \margdetset(\mdvtree')$. Otherwise, we have  $\scope(\vnode_{\text{pa}^*}) \cap \margdetvars \neq \emptyset$ and $\margdetvars \supseteq \vlabel(\vnode_{\text{pa}^*})$. Now, for those children in $\vnodes_{\text{active}}$, we have that $\vlabel(\vnode_{\text{ch}^*}) \cap \vlabel(\vnode_{\text{pa}^*}) \neq \emptyset$ and so since $\margdetvars \supseteq \vlabel(\vnode_{\text{pa}^*})$ and $\scope(\vnode_{\text{ch}^*}) \supseteq \vlabel(\vnode_{\text{ch}^*})$, we have $\margdetvars \cap \scope(\vnode_{\text{ch}^*}) \neq \emptyset$. The marginal determinism property on these children then implies that $\margdetvars \supseteq \vlabel(\vnode_{\text{ch}^*})$ for all $\vnode_{\text{ch}^*} \in \vnodes_{\text{active}}$; and so $\margdetvars \supseteq \bigcup_{\vnode_{\text{ch}^*} \in \vnodes_{\text{active}}} \vlabel(\vnode_{\text{ch}^*}) = \vlabel'(\vnode_{\text{pa}^*})$. This shows that $\margdetvars \in \margdetset(\mdvtree')$ also. This gives $\margdetset(\mdvtree) \subseteq \margdetset(\mdvtree')$, and combined with the previous result, $\margdetset(\mdvtree) = \margdetset(\mdvtree')$.

Finally, we show that the property that $\vlabel'(\vnode_{\text{pa}}) \cap \scope(\vnode_{\text{ch}}) = \vlabel'(\vnode_{\text{pa}}) \cap \vlabel'(\vnode_{\text{ch}})$ holds for all pairs of parents $\vnode_{\text{pa}}$ and children $\vnode_{\text{ch}}$ in $\mdvtree'$. The only label which has changed is that of $\vnode_{\text{pa}}$, so we need only consider the pairs $(\vnode_{\text{pa}}, \vnode_{\text{ch}})$ with either (a) $\vnode_{\text{pa}} = \vnode_{\text{pa}^*}$ and $\vnode_{\text{ch}}$ is a child of $\vnode_{\text{pa}^*}$ or (b) $\vnode_{\text{ch}} = \vnode_{\text{pa}^*}$ and $\vnode_{\text{pa}}$ is the parent of $\vnode_{\text{pa}^*}$. 
\begin{itemize}
    \item In case (a), by definition we have that $\vlabel'(\vnode_{\text{pa}}) = \vlabel'(\vnode_{\text{pa}^*}) = \bigcup_{\vnode_{\text{ch}^*} \in \vnodes_{\text{active}}} \vlabel(\vnode_{\text{ch}^*})$, and $\vlabel'(\vnode_{\text{ch}}) = \vlabel'(\vnode_{\text{ch}})$. If $\vnodech$ is an active child of $\vnodepa^*$, then we have that the LHS of the property $\vlabel'(\vnode_{\text{pa}}) \cap \scope(\vnode_{\text{ch}}) = \bigcup_{\vnode_{\text{ch}^*} \in \vnodes_{\text{active}}} \vlabel(\vnode_{\text{ch}^*}) \cap \scope(\vnode_{\text{ch}}) = \vlabel(\vnodech)$, and the RHS of the property $\vlabel'(\vnode_{\text{pa}}) \cap \vlabel'(\vnode_{\text{ch}}) = \bigcup_{\vnode_{\text{ch}^*} \in \vnodes_{\text{active}}} \vlabel(\vnode_{\text{ch}^*}) \cap \vlabel'(\vnode_{\text{ch}}) = \vlabel(\vnodech)$. If $\vnodech$ is not an active child of $\vnodepa^*$, then both sides of the property correspond to the empty set. 
    \item In case (b), by definition we have $\vlabel'(\vnodepa) = \vlabel(\vnodepa)$, and $\vlabel'(\vnodech) = \vlabel'(\vnode_{\text{pa}^*})$. We have shown above that $\vlabel(\vnode_{\text{pa}^*}) \subseteq \vlabel'(\vnode_{\text{pa}^*})$, so $\vlabel(\vnodech) \subseteq \vlabel'(\vnodech)$.  By the precondition for applying the \texttt{EPL} operation, we have that $\vlabel(\vnode_{\text{pa}}) \cap \scope(\vnode_{\text{ch}}) = \vlabel(\vnode_{\text{pa}}) \cap \vlabel(\vnode_{\text{ch}})$. Substituting, we get $\vlabel'(\vnode_{\text{pa}}) \cap \scope(\vnode_{\text{ch}}) = \vlabel'(\vnode_{\text{pa}}) \cap \vlabel(\vnode_{\text{ch}}) \subseteq \vlabel'(\vnode_{\text{pa}}) \cap \vlabel'(\vnode_{\text{ch}})$. The other direction $\vlabel'(\vnode_{\text{pa}}) \cap \scope(\vnode_{\text{ch}}) \supseteq \vlabel'(\vnode_{\text{pa}}) \cap \vlabel'(\vnode_{\text{ch}})$ is immediate as the label of a node is contained in its scope. 
\end{itemize}

Thus, we have shown that $\vlabel'(\vnode_{\text{pa}}) \cap \scope(\vnode_{\text{ch}}) = \vlabel'(\vnode_{\text{pa}}) \cap \vlabel'(\vnode_{\text{ch}})$ holds for all pairs of parents $\vnode_{\text{pa}}$ and children $\vnode_{\text{ch}}$ in $\mdvtree'$, concluding the proof.
\end{proof}

Intuitively, this operation "pulls up" elements $\vlabel(\vnode_{\text{ch}})$ of the \emph{active} children to the parent.  After applying this operation to all nodes, we obtain a regular md-vtree, which has the same set $\margdetset$ of marginal determinisms, and is at least as expressive. More formally:

\begin{proof} 
(of Theorem) Starting from $\mdvtree$, apply the \texttt{ECL} operation to each pair of parent and child nodes, in a topological order starting from the root. For each $\vnode_{\text{pa}}, \vnode_{\text{ch}}$ pair, we have that $\vlabel'(\vnode_{\text{pa}}) = \vlabel(\vnode_{\text{pa}})$ and $\vlabel'(\vnode_{\text{ch}}) = \vlabel(\vnode_{\text{ch}}) \cup (\vlabel(\vnode_{\text{pa}}) \cap \scope(\vnode_{\text{ch}}) )$ by definition of the operation. Then $\vlabel'(\vnode_{\text{pa}}) \cap \vlabel'(\vnode_{\text{ch}}) = \vlabel(\vnode_{\text{pa}}) \cap \left(\vlabel(\vnode_{\text{ch}}) \cup (\vlabel(\vnode_{\text{pa}}) \cap \scope(\vnode_{\text{ch}}) )\right) = (\vlabel(\vnode_{\text{pa}}) \cap \vlabel(\vnode_{\text{ch}})) \cup (\vlabel(\vnode_{\text{pa}}) \cap \scope(\vnode_{\text{ch}}) ) = \vlabel(\vnode_{\text{pa}}) \cap \scope(\vnode_{\text{ch}}) = \vlabel'(\vnode_{\text{pa}}) \cap \scope(\vnode_{\text{ch}})$, which is the required property for applying the \texttt{EPL} operation. As we proceed in a topological order, and the operation only modifies the label of the child, it follows that the property $\vlabel'(\vnode_{\text{pa}}) \cap \vlabel'(\vnode_{\text{ch}}) = \vlabel'(\vnode_{\text{pa}}) \cap \scope(\vnode_{\text{ch}})$ holds for all parent/children pairs at the end.

This allows us to apply the \texttt{EPL} operation. We apply this operation to every non-leaf node, in a reverse topological order from the leaves to the root. The precondition for applying the operation holds at all points due to the result of Lemma \ref{lem:paunionch}. This operation only modifies the label of the parent, and so after we have modified all the labels, we have the property that $\vlabel'(\vnode_{\text{pa}}) = \bigcup_{\vnode_{\text{ch}} \in \vnodes_{\text{active}}} \vlabel(\vnode_{\text{ch}})$ for every non-leaf node $\vnode_{\text{pa}}$. That is, it satisfies the conditions to be a regular md-vtree, i.e. $\vlabel'(\vnode_{\text{pa}}) = \vlabel'(\vnode_1), \vlabel'(\vnode_2) \text{, or } \vlabel'(\vnode_1) \cup \vlabel'(\vnode_2)$, where $\vnode_1, \vnode_2$ are the children of $\vnode_{\text{pa}}$.

\end{proof}

\section{Operations and MD-Calculus} \label{apx:mdcalculus} 

In this section, we provide further details on the results in Section \ref{sec:inference} regarding inference on md-vtrees. First, for the \textit{forward} problem, we describe algorithms for soundly propagating the md-vtree forward under each basic circuit operation, as mentioned in Section \ref{sec:forward}. Then, for the \textit{backward} problem, we analyze these algorithms to prove the MD-calculus for propagating marginal determinisms backwards through operations. 

\begin{table*}[t]
\centering
\begin{tabular}{@{}lllll@{}}
\toprule
\textbf{Operation} & \textbf{Requirement} & \textbf{Input Condition} & \textbf{Output Condition}  \\ \midrule
     $\marg(\circuit; \varsubset)  $            &        -              &      $\margdetvars$-det        &    $\margdetvars$-det                 \\
     $\inst(\circuit; \varsubsetval) $         &       -               &     $\exists \varsubset' \subseteq \varsubset: (\margdetvars \cup \varsubset')\textnormal{-det} $        &   $\margdetvars$-det                    \\ \midrule
    \multirow{3}{*}{$\product(\circuit^{(1)}, \circuit^{(2)})$}     &   $\circuit^{(1)}, \circuit^{(2)}$  respect             &  $\exists \margdetvars^{(1)}, \margdetvars^{(2)}: \margdetvars^{(1)}\textnormal{-det}, \margdetvars^{(2)}\textnormal{-det}$, and:
    
         &      \multirow{3}{*}{$\margdetvars$-det}            \\ 
             &       compatible  vtrees               &   \textbullet\ Either (a) $\margdetvars \subseteq \pcvars^{(1)} \cap \pcvars^{(2)}$ and $\margdetvars^{(1)} = \margdetvars^{(2)} = \margdetvars$;          &                 &                     \\ 
             & & \textbullet\ Or (b) $\margdetvars^{(1)}, \margdetvars^{(2)} \supseteq \pcvars^{(1)} \cap \pcvars^{(2)}$ and $\margdetvars = \margdetvars^{(1)} \cup \margdetvars^{(2)}$ & & \\ \midrule
        $\powop(\circuit; \power)$           &      Det                &       $\margdetvars$-det         &      $\margdetvars$-det             \\
        $\maxop(\circuit)$           &      Det                &       N/A        &      N/A (scalar output)                 \\
        $\logop(\circuit)$          &      Det               &        -        &       -                 \\ \bottomrule
\end{tabular} 
\caption{MD-calculus: sufficient input-output conditions for each basic operation}
\label{tbl:mdcalc_repeated}
\end{table*}

\subsection{Algorithms and the Forward Problem}

For each of the basic operations, there exist efficient (polynomial time) algorithms for computing them on probabilistic circuits satisfying the requirement column in Table \ref{tbl:mdcalc_repeated} \citepSM{choi2020probcirc,vergari2021atlas}.
In this section, we will also describe, for each basic operation, an algorithm for computing the operation on md-vtrees $\mdvtree$, that is a sound abstraction of the corresponding algorithm on circuits. By \textit{sound}, we mean that, given an input md-vtree $\mdvtree$ and the output of the md-vtree algorithm $\mdvtree'$, it is guaranteed for any input PC respecting $\mdvtree$, the output of the corresponding PC algorithm will respect $\mdvtree'$. 

The construction of these md-vtrees algorithms is based upon the corresponding PC algorithm. Thus, we present the algorithms as applying to both the md-vtree and PC. For convenience, we assume that the PC satisfies the following condition, which we call \textit{exactly respecting} a md-vtree:

\begin{definition}[PC exactly respecting md-vtree]
A PC $\circuit$ exactly respects a md-vtree $\mdvtree$ if (1) it respects $\mdvtree$ and (2) the children of any sum node $\sumunit$ corresponding to a non-leaf vtree node $\vnode$ are all product nodes $\productunit$, where $\productunit$ has two children which are sum nodes, each corresponding to a child of $\vnode$.
\end{definition}
 PC architectures are typically designed with these alternating sum and product nodes, where the product nodes are binary; for example, both MDNets and PSDDs satisfy this property. Further, any PC which respects a md-vtree can be transformed into an equivalent PC which exactly respects the md-vtree, as follows. For every sum node $\sumunit$ which has a sum node child $\sumunit'$, we can directly attach the children of the $\sumunit'$ to $\sumunit$ (with the appropriate combination of weights). Then, for every product node $\productunit$ which has a product node child $\productunit'$, we can replace $\productunit'$ with a new single-child sum node $\sumunit'$, which has $\productunit'$ as its child. The resulting circuit still encodes the same function, and has the same marginal determinisms as the original circuit.

Given that a PC exactly respects a md-vtree, for each non-leaf vtree node $\vnode$, we can represent the corresponding PC layer simply as a vector of sum nodes $\sumunits_{\vnode}$ with length $\regionsize_{\vnode} := |\sumunits_{\vnode}| $, and a weight/parameter matrix $\pcparam_\vnode$ with shape $(\regionsize_{\vnode}, \regionsize_{\vnode_1}, \regionsize_{\vnode_2})$, with the semantics that $\dist_{\sumunit_{\vnode, i}}(\pcvars) = \sum_{jk} \pcparam_{\vnode, ijk} \dist_{\sumunit_{\vnode_1, j}}(\pcvars) \dist_{\sumunit_{\vnode_2, k}}(\pcvars)$ (where $\vnode_1, \vnode_2$ are the children of $\vnode$). Note that for any pair of sum nodes $\sumunit_{\vnode_1, j}, \sumunit_{\vnode_2, k}$ for which there isn't a product in the PC connected to $\sumunit_{\vnode, i}$, we can simply set the weight $\pcparam_{\vnode, ijk}$ to zero\footnote{While this is sufficient to represent any PC exactly respecting an md-vtree, it may be inefficient to represent $\pcparam_{\vnode, ijk}$ as a dense matrix if the connections in the PC are sparse, i.e. $\pcparam_{\vnode, ijk} = 0$ for many $i,j,k$. In the evaluation of a sum vector as a function of its child sum vectors, we only require the sum $\sum_{jk} \pcparam_{\vnode, ijk} \pcfunc_{\sumunit_{\vnode_1, j}} \pcfunc_{\sumunit_{\vnode_2, k}}$ to be computed, so this can be implemented in a sparse manner if that is more appropriate. Similar reasoning applies to the product algorithm.}. For leaf vtree nodes, the corresponding layer can consist of both sum and leaf nodes (with the sum nodes being mixtures over the leaf nodes, e.g. $0.7 \mathds{1}_{X = 0} + 0.3 \mathds{1}_{X = 1}$). In this case, we represent the sum and leaf nodes as a vector $\sumunits_{\vnode}$ with length $\regionsize_{\vnode} := |\sumunits_{\vnode}|$, and $\pcparam_\vnode$, and a weight matrix $\pcparam_{\vnode}$, with $\pcparam_{\vnode, ij} > 0$ iff $\sumunit_{\vnode, j}$ is a leaf node that is a child of sum node $\sumunit_{\vnode, i}$.

This characterization of a PC as a pair $\vparamfn(\vnode) := (\sumunits_{\vnode}, \pcparam_\vnode)$ for each vtree node
, which we call the \emph{parameter function}, allows us to efficiently describe algorithms for the basic operations. Thus, in the algorithms below we will represent $\circuit$ exactly respecting some md-vtree using the triple $\circuit = (\vtree, \vlabel, \vparamfn)$, where $\vtree$ is the vtree, $\vlabel$ is the labelling function, and $\vparamfn$ the parameter function.

\paragraph{$\marg(\cdot; \varsubset)$} The marginalization algorithm is depicted in Algorithm \ref{alg:marg}. For the marginalization operation, we can take advantage of the fact that marginalization commutes with both product and sum nodes in a decomposable and smooth PC (which is the basis of tractable marginal inference).
\begin{equation} \sum_{\varsubset} \dist_{\productunit}(\scope(\productunit)) = \sum_{\varsubset} \dist_{\genericunit_1}(\scope(\genericunit_1)) \dist_{\genericunit_2}(\scope(\genericunit_2)) = (\sum_{\varsubset} \dist_{\genericunit_1}(\scope(\genericunit_1))) (\sum_{\varsubset} \dist_{\genericunit_2}(\scope(\genericunit_2)))\end{equation}
\begin{equation} \sum_{\varsubset} \dist_{\sumunit}(\scope(\sumunit)) = \sum_{\varsubset} \sum_{\genericunit_i \in \children(\sumunit)} \pcparam_i \dist_{\genericunit_i}(\scope(\genericunit_i)) = \sum_{\genericunit_i \in \children(\sumunit)} \pcparam_i (\sum_{\varsubset} \dist_{\genericunit_i}(\scope(\genericunit_i)))\end{equation}
where the last equality on the first line holds because $\scope(\genericunit_1) \cap \scope(\genericunit_2) = \emptyset$ by decomposability. This means that, in order to marginalize a circuit, we simply need to marginalize the leaf nodes. In Algorithm \ref{alg:marg}, we show the (recursive) procedure of marginalizing a circuit represented as $(\vtree, \vlabel, \vparamfn)$. In lines \ref{algline:margleafstart}-\ref{algline:margleafend} we marginalize out $\varsubset$ from the leaf nodes in the PC, and in lines \ref{algline:margnonleafstart}-\ref{algline:margnonleafend}, we handle non-leaf vtree nodes simply by copying the existing circuit. To update the md-vtree, in line \ref{algline:margscope}, we update the scope of the vtree node, removing the marginalized variables, and in lines \ref{algline:marglabelstart}-\ref{algline:marglabelend}, we assign a label to the new vtree node $\vnode'$.  

The new label is justified as follows. Suppose we have a sum node $\sumunit \in \sumunit_{\vnode}$, with children $\genericunit_1, ..., \genericunit_n$; by definition, $\sumunit$ is marginally deterministic with respect to $\vlabel(\vnode)$. After marginalization, the function encoded by each child $\genericunit_i'$ satisfies $\dist_{\genericunit_i'}(\margdetvarsval) = \sum_{\varsubset} \dist_{\genericunit_i}(\margdetvarsval)$ for any value $\margdetvarsval$ of $\vlabel(\vnode)$ by definition. Now:
\begin{itemize}
    \item If $\vlabel(\vnode) \cap \varsubset = \emptyset$, then this is just proportional to $\dist_{\genericunit_i}(\margdetvarsval)$ and so the marginalized support will remain the same for each child, and $\sumunit'$ will maintain $\vlabel(\vnode)$-determinism.
    \item On the other hand, if $\vlabel(\vnode) \cap \varsubset \neq \emptyset$, then we do not have any such guarantee; in fact, we cannot be sure that $\sumunit'$ will be $\margdetvars$-deterministic for any $\margdetvars$, so we assign the universal set.
\end{itemize} 

\begin{algorithm}[t]
\SetAlgoLined
\KwInput{Input circuit $\circuit = (\vtree = (\vnodes, \vedges, \scope), \vlabel, \vparamfn)$; set of variables to be marginalized $\varsubset$
}
\KwResult{Output circuit $\circuit' = (\vtree', \vlabel', \vparamfn')$}

$\vnode \gets \texttt{root}(\vtree)$;

$\vnode' \gets \texttt{newnode}()$;
 
\If(\tcp*[f]{Update vtree structure and parameter function (leaf)}){$\vnode$ is leaf} {  \label{algline:margleafstart}
    $\vtree' \gets \texttt{createvtree}(\vnode')$; \tcp*{create vtree with single node}


    $\vparamfn'(\vnode') \gets (\marg(\leafunit; \varsubset) \textbf{ for } \leafunit \in \sumunits_{\vnode}, \pcparam_{\vnode})$; \label{algline:margleafend} \tcp*{marginalize leaf PC nodes}
    
        
}
\Else(\tcp*[f]{Update vtree structure and parameter function (non-leaf)}){ \label{algline:margnonleafstart}
    $\vnode_l, \vnode_r \gets \texttt{children}(\vnode)$; 
    
    $\vtree'_l, \vlabel'_l, \vparamfn'_l \gets \marg((\vtree_{\vnode_l}, \vlabel, \vparamfn), \varsubset)$; 
    
    $\vtree'_r, \vlabel'_r, \vparamfn'_r \gets \marg((\vtree_{\vnode_r}, \vlabel, \vparamfn), \varsubset)$; 

    $\vtree', \vlabel', \vparamfn' \gets \vtree'_l \cup \vtree'_r, \vlabel'_l \cup \vlabel'_r, \vparamfn'_l \cup \vparamfn'_r$; \tcp*{combine the vtrees/labelling fn/param fn}

    $\vtree' \gets \texttt{addnode}(\vtree'; \vnode'); \vtree' \gets \texttt{addchildren}(\vtree'; \vnode', \texttt{root}(\vtree'_l), \texttt{root}(\vtree'_r))$;
    
    
    $\vparamfn'(\vnode') \gets \vparamfn(\vnode)$; \label{algline:margnonleafend}
}

$\scope'(\vnode') \gets \scope(\vnode) \setminus \varsubset$;  \tcp*{Update scope function} \label{algline:margscope}

\If(\tcp*[f]{Update labelling function}){$\vlabel(\vnode) \cap \varsubset = \emptyset$} {  \label{algline:marglabelstart} 
    $\vlabel'(\vnode') \gets \vlabel(\vnode)$;
}
\Else{
    $\vlabel'(\vnode') \gets \vlabelno$; \label{algline:marglabelend}
}

\textbf{Return} $(\vtree', \vlabel', \vparamfn')$

\caption{$\marg(\circuit, \varsubset)$}
\label{alg:marg}
\end{algorithm}

\paragraph{$\inst(\cdot; \varsubsetval)$} For the instantiation operation, we have Algorithm \ref{alg:inst}. At first glance, this seems to be very similar to the marginalization operation; it changes the scope in the same way, and the changes to the circuit can be implemented through the leaf nodes. However, the crucial difference is in the label function. 

The new label of $\vlabel'(\vnode) = \vlabel(\vnode) \setminus \varsubset$ is justified as follows. Suppose that we have a sum node $\sumunit \in \sumunit_{\vnode}$, with children $\genericunit_1, ..., \genericunit_n$, with $\sumunit$  marginally deterministic with respect to $\vlabel(\vnode)$. After instantiation (of $\varsubset$ with the value $\varsubsetval$), the function encoded by each child $\genericunit_i'$ satisfies $\dist_{\genericunit_i'}(\margdetvarsval \setminus \varsubset) = \dist_{\genericunit_i}(\varsubsetval, \margdetvarsval \setminus \varsubset)$, for any value $\margdetvarsval$ of $\vlabel(\vnode)$ by definition\footnote{Note that we write $\margdetvarsval \setminus \varsubset$ to represent the value of $\vlabel(\vnode) \setminus \varsubset$ given by $\margdetvarsval$ restricted to this variable set.}.
 
Now, we claim that $\genericunit_i'$ is $(\vlabel(\vnode) \setminus \varsubset)$-deterministic, i.e. $\genericunit_i', \genericunit_j'$ have distinct marginalized support $\supp_{\vlabel(\vnode) \setminus \varsubset}(\genericunit_i')$, $\supp_{\vlabel(\vnode) \setminus \varsubset}(\genericunit_j')$ for $i \neq j$. Suppose for contradiction there exists a value $\margdetvarsval^* \setminus \varsubset$ of $(\vlabel(\vnode) \setminus \varsubset)$ such that 
$\dist_{\genericunit_i'}(\margdetvarsval^* \setminus \varsubset) > 0$ and $\dist_{\genericunit_j'}(\margdetvarsval^* \setminus \varsubset) > 0$. Then we have
\begin{align}
    \dist_{\genericunit_i'}(\margdetvarsval^* \setminus \varsubset) > 0 \textnormal{ and } \dist_{\genericunit_j'}(\margdetvarsval^* \setminus \varsubset) > 0 \\
    \dist_{\genericunit_i}(\varsubsetval, \margdetvarsval^* \setminus \varsubset) > 0 \textnormal{ and } \dist_{\genericunit_j}(\varsubsetval, \margdetvarsval^* \setminus \varsubset) > 0 \\
    \sum_{\varsubset \setminus \margdetvars} \dist_{\genericunit_i}(\varsubset \setminus \margdetvars, \varsubsetval \cap \margdetvars, \margdetvarsval^* \setminus \varsubset) > 0 \textnormal{ and } \sum_{\varsubset \setminus \margdetvars} \dist_{\genericunit_j}(\varsubset \setminus \margdetvars, \varsubsetval \cap \margdetvars,  \margdetvarsval^* \setminus \varsubset) > 0 \\
    \dist_{\genericunit_i}(\varsubsetval \cap \margdetvars, \margdetvarsval^* \setminus \varsubset) > 0 \textnormal{ and } \ \dist_{\genericunit_j}(\varsubsetval \cap \margdetvars, \margdetvarsval^* \setminus \varsubset) > 0
\end{align}

The second line follows by definition of the instantiation algorithm, the third line is a sum of non-negative terms including a positive term from the previous line (when $\varsubset \setminus \margdetvars = \varsubsetval \setminus \margdetvars$), and the fourth line rewrites the sum. Now we have a value $\margdetvarsval := (\varsubsetval \cap \margdetvars, \margdetvarsval^* \setminus \varsubset)$ of $\vlabel(\vnode)$, such that $\dist_{\genericunit_i}(\margdetvarsval) > 0$ and $\dist_{\genericunit_j}(\margdetvarsval) > 0$, which is a contradiction as $\sumunit$ is $\vlabel(\vnode)$-deterministic.

\begin{algorithm}[t]
\SetAlgoLined
\KwInput{Input circuit $\circuit = (\vtree = (\vnodes, \vedges, \scope), \vlabel, \vparamfn)$; instantiation $\varsubsetval$ of some subset of variables $\varsubset$
}
\KwResult{Output circuit $\circuit' = (\vtree', \vlabel', \vparamfn')$}

$\vnode \gets \texttt{root}(\vtree)$;

$\vnode' \gets \texttt{newnode}()$;
 
\If(\tcp*[f]{Update vtree structure and parameter function (leaf)}){$\vnode$ is leaf} {  \label{algline:instleafstart}
    $\vtree' \gets \texttt{createvtree}(\vnode')$; \tcp*{create vtree with single node}


    $\vparamfn'(\vnode') \gets (\inst(\leafunit; \varsubsetval) \textbf{ for } \leafunit \in \sumunits_{\vnode}, \pcparam_{\vnode})$; \label{algline:instleafend} \tcp*{instantiate leaf PC nodes}
    
        
}
\Else(\tcp*[f]{Update vtree structure and parameter function (non-leaf)}){ \label{algline:instnonleafstart}
    $\vnode_l, \vnode_r \gets \texttt{children}(\vnode)$; 
    
    $\vtree'_l, \vlabel'_l, \vparamfn'_l \gets \inst((\vtree_{\vnode_l}, \vlabel, \vparamfn), \varsubsetval)$; 
    
    $\vtree'_r, \vlabel'_r, \vparamfn'_r \gets \inst((\vtree_{\vnode_r}, \vlabel, \vparamfn), \varsubsetval)$; 

    $\vtree', \vlabel', \vparamfn' \gets \vtree'_l \cup \vtree'_r, \vlabel'_l \cup \vlabel'_r, \vparamfn'_l \cup \vparamfn'_r$; \tcp*{combine the vtrees/labelling fn/param fn}

    $\vtree' \gets \texttt{addnode}(\vtree'; \vnode'); \vtree' \gets \texttt{addchildren}(\vtree'; \vnode', \texttt{root}(\vtree'_l), \texttt{root}(\vtree'_r))$;
    
    
    $\vparamfn'(\vnode') \gets \vparamfn(\vnode)$; \label{algline:instnonleafend}
}

$\scope'(\vnode') \gets \scope(\vnode) \setminus \varsubset$; \tcp*{Update vtree scope function}

$\vlabel'(\vnode') \gets \vlabel(\vnode) \setminus \varsubset$; \tcp*{Update labelling function}

\textbf{Return} $(\vtree', \vlabel', \vparamfn')$

\caption{$\inst(\circuit, \varsubsetval)$}
\label{alg:inst}
\end{algorithm}

\paragraph{$\product(\cdot, \cdot)$} Now, we consider the product of two circuits exactly respecting \textit{compatible} vtrees. 

\begin{definition}[Vtree Compatibility]
    Let $\vtree^{(1)} = (\vnodes^{(1)}, \vedges^{(1)}, \scope^{(1)})$ and $ \vtree^{(2)} = (\vnodes^{(2)}, \vedges^{(2)}, \scope^{(2)})$ be two vtrees, with root nodes $\vnode^{(1)}, \vnode^{(2)}$ respectively. Define $\commonvars := \scope^{(1)}(\vnode^{(1)}) \cup \scope^{(2)}(\vnode^{(2)})$ to be the common variables.
    Then we say that $\vtree^{(1)}, \vtree^{(2)}$ are compatible if any of the following hold:
    \begin{enumerate}
        \item There are no common variables, $\commonvars = \emptyset$;
        \item Both of $\vnode^{(1)}, \vnode^{(2)}$ are leaf vtree nodes;
        \item One of the root nodes has the same restricted scope on $\commonvars$ as one of the children of the other root node, \textbf{and} the vtrees rooted at these nodes are compatible. For example, $\scope^{(1)}_{\commonvars}(\vnode^{(2)}) = \scope^{(1)}_{\commonvars}(\vnode^{(1)}_r)$, and $\vtree^{(2)}_{\vnode^{(2)}}$ and $\vtree^{(1)}_{\vnode^{(1)}_r}$ are compatible.
        \item The children of the root nodes have matching restricted scopes, \textbf{and} are compatible. For example, $\scope^{(1)}_{\commonvars}(\vnode^{(1)}_l) = \scope^{(2)}_{\commonvars}(\vnode^{(2)}_l)$ and $\vtree^{(1)}_{\vnode^{(1)}_l}, \vtree^{(2)}_{\vnode^{(2)}_l}$ are compatible, and $\scope^{(1)}_{\commonvars}(\vnode^{(1)}_r) = \scope^{(2)}_{\commonvars}(\vnode^{(2)}_r)$ and $\vtree^{(1)}_{\vnode^{(1)}_r}, \vtree^{(2)}_{\vnode^{(2)}_r}$ are compatible.
    \end{enumerate}
\end{definition}

This recursive definition allows for products of circuits not necessarily respecting the same vtree, but merely vtrees which ``essentially have the same structure'' over the shared variables. 
Intuitively, there are four cases that allow us to maintain (structured) decomposability in the output circuit, illustrated in Figure \ref{fig:compat}. The first two are base cases where the product is directly tractable: namely, when the root vtree nodes have disjoint scopes, or when they are both leaves. Note, in particular, that the product of a leaf region and a non-leaf region that have overlapping variables is considered intractable here (unless condition 3. holds). The last two are cases where we can recursively call the product algorithm on the children of the root vtree nodes.  Each of the four cases above correspond to a slightly different algorithm for computing the product of the corresponding PC sum nodes, which we will explain next.\footnote{The notion of compatibility between vtrees is somewhat similar to the notion of compatibility between circuits \citep{vergari2021atlas}, but acts at the level of groups of circuit nodes with the same scope (i.e. a vtree node) rather than individual circuit nodes. }

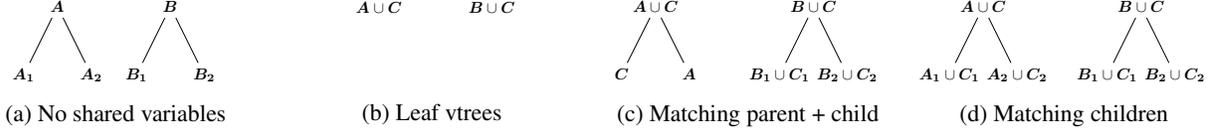
\begin{figure*}[t]
    \centering
    \begin{subfigure}{0.24\linewidth}
        \centering
        \scalebox{0.6}{
        \begin{tikzpicture}

\node[] (A) {$\bm{A}$};
\node[below= of A, xshift=-0.75cm] (A1) {$\bm{A_1}$};
\node[below= of A, xshift=0.75cm] (A2) {$\bm{A_2}$};
\node[xshift = 2.5cm] (B) {$\bm{B}$};
\node[below= of B, xshift=-0.75cm] (B1) {$\bm{B_1}$};
\node[below= of B, xshift=0.75cm] (B2) {$\bm{B_2}$};

\draw[-] (A) -- (A1);
\draw[-] (A) -- (A2);
\draw[-] (B) -- (B1);
\draw[-] (B) -- (B2);

\end{tikzpicture}
        }
        \caption{No shared variables}
        \label{fig:compat_1}
    \end{subfigure}
    \begin{subfigure}{0.24\linewidth}
        \centering
        \scalebox{0.6}{
        \begin{tikzpicture}

\node[] (A) {$\bm{A} \cup \bm{C}$};
\node[xshift = 2.5cm] (B) {$\bm{B} \cup \bm{C}$};

\node[below= of A, xshift=-0.75cm] (A1) {\vphantom{d}}; 
\node[below= of A, xshift=0.75cm] (A2) {};

\end{tikzpicture}
        }
        \caption{Leaf vtrees}
        \label{fig:compat_2}
    \end{subfigure}
    \begin{subfigure}{0.24\textwidth}
        \centering
        \scalebox{0.6}{
        \begin{tikzpicture}

\node[] (A) {$\bm{A} \cup \bm{C}$};
\node[below=of  A, xshift=-0.75cm] (A1) {$\bm{C}$};
\node[below=of  A, xshift=0.75cm] (A2) {$\bm{A}$};
\node[xshift = 3.5cm] (B) {$\bm{B} \cup \bm{C}$};
\node[below= of B, xshift=-0.75cm] (B1) {$\bm{B_1} \cup \bm{C_1}$};
\node[below= of B, xshift=0.75cm] (B2) {$\bm{B_2} \cup \bm{C_2}$};

\draw[-] (A) -- (A1);
\draw[-] (A) -- (A2);
\draw[-] (B) -- (B1);
\draw[-] (B) -- (B2);

\end{tikzpicture}
        }
        \caption{Matching parent + child}
        \label{fig:compat_3}
    \end{subfigure}
    \begin{subfigure}{0.24\textwidth}
        \centering
        \scalebox{0.6}{
        \begin{tikzpicture}

\node[] (A) {$\bm{A} \cup \bm{C}$};
\node[below= of A, xshift=-0.75cm] (A1) {$\bm{A_1} \cup \bm{C_1}$};
\node[below= of A, xshift=0.75cm] (A2) {$\bm{A_2} \cup \bm{C_2}$};
\node[xshift = 3.5cm] (B) {$\bm{B} \cup \bm{C}$};
\node[below= of B, xshift=-0.75cm] (B1) {$\bm{B_1} \cup \bm{C_1}$};
\node[below= of B, xshift=0.75cm] (B2) {$\bm{B_2} \cup \bm{C_2}$};

\draw[-] (A) -- (A1);
\draw[-] (A) -- (A2);
\draw[-] (B) -- (B1);
\draw[-] (B) -- (B2);

\end{tikzpicture}
        }
        \caption{Matching children}
        \label{fig:compat_4}
    \end{subfigure}
    \caption{Examples of (possibly) compatible vtrees, where $\bm{A} \cap \bm{B} = \emptyset$, and $\bm{C}$ are the shared variables}
    \label{fig:compat}
\end{figure*}

The product algorithm, depicted in Algorithm \ref{alg:prod}, (recursively) constructs a circuit $\circuit = (\vtree', \vlabel', \vparamfn')$ that is the product of the input circuits $\circuit^{(1)} = (\vtree^{(1)}, \vlabel^{(1)}, \vparamfn^{(1)})$ and $\circuit^{(2)} = (\vtree^{(2)}, \vlabel^{(2)}, \vparamfn^{(2)})$ respectively. In particular, at each recursive step, it computes the root node $\vnode'$ of the new vtree, its label $\vlabel'(\vnode')$, and the parameter function value $\vparamfn'(\vnode') = (\sumunits_{\vnode'}, \pcparam_{\vnode'})$ of that node\footnote{Consider the root nodes of the input vtrees $\vnode^{(1)}, \vnode^{(2)}$, and their parameter function values $\vparamfn^{(1)}(\vnode^{(1)}) = (\sumunits_{\vnode^{(1)}}, \pcparam_{\vnode^{(1)}}), \vparamfn^{(2)}(\vnode^{(2)}) = (\sumunits_{\vnode^{(2)}}, \pcparam_{\vnode^{(2)}})$.  In every case, $\sumunits_{\vnode'}$ will contain one node $\sumunit_{\vnode', i^{(1)}, i^{(2)}}$ corresponding to every \emph{pair of nodes} $\sumunit_{\vnode^{(1)}, i^{(1)}} \in \sumunits_{\vnode^{(1)}}$, $\sumunit_{\vnode^{(2)}, i^{(2)}} \in \sumunits_{\vnode^{(2)}}$; we thus notate it with two dimensions. Correspondingly, in general, $\pcparam_{\vnode'}$ will contain one weight $\pcparam_{\vnode', i^{(1)}i^{(2)}j^{(1)}j^{(2)}k^{(1)}k^{(2)}}$ for every \emph{combination of nodes} $\sumunit_{\vnode', i^{(1)}, i^{(2)}} \in \sumunits_{\vnode'}, \sumunit_{\vnode'_l, j^{(1)}, j^{(2)}} \in \sumunits_{\vnode'_l}, \sumunit_{\vnode'_r, k^{(1)}, k^{(2)}} \in \sumunits_{\vnode'_r}$, where $\vnode'_l$ and $\vnode'_r$ are the left and right children of $\vnode'$ in the \emph{new} md-vtree.}. 
We consider each of the four compatibility cases separately:

\begin{enumerate}
    \item Firstly, if there are no common variables, i.e. $\commonvars = \emptyset$, then we can simply introduce product nodes for each pair of sum nodes, while maintaining decomposability, as in Figure \ref{fig:pmd_1}.
    \begin{itemize}
        \item \emph{Vtree node:} We create a vtree node $\vnode'$ with $(\vnode'_l = \vnode^{(1)}$, $\vnode'_r = \vnode^{(2)})$ as children.
        \item \emph{Parameter function:} The parameter function value for the new node $\vparamfn'(\vnode') = (\sumunits_{\vnode'}, \pcparam_{\vnode'})$ is given as follows. For each pair of sum nodes $\sumunit_{\vnode^{(1)}, j} \in \sumunits_{\vnode^{(1)}}, \sumunit_{\vnode^{(2)}, k} \in \sumunits_{\vnode^{(2)}}$, we create a sum node $\sumunit_{\vnode', i^{(1)}i^{(2)}}$, representing the product of $\sumunit_{\vnode^{(1)}, j}, \sumunit_{\vnode^{(2)}, k}$. To achieve this, the weights $\pcparam_{\vnode', i^{(1)}i^{(2)}jk}$ are defined to be $1$ if $i^{(1)}=j$ and $i^{(2)}=k$, and $0$ otherwise. Note that $j, k$ only have a single index as the children $\vnode'_l, \vnode'_r$ correspond to the input circuits, which only have a single dimension.
        \item \emph{Md-vtree labelling:} The label is set to be $\vlabel'(\vnode') := \emptyset$; this is since all sum nodes only effectively have a single child, so they are trivially $\margdetvars$-deterministic for any $\margdetvars$. An example can be seen in Figure \ref{fig:pmd_1}.
    \end{itemize}
    \item Secondly, if both $\vnode^{(1)}, \vnode^{(2)}$ are leaves, then the PC nodes corresponding to these vtree nodes are also either leaves, or simple mixtures (sum nodes) of leaves. To compute the product of two sum nodes, we expand all combinations of the children of the sum nodes, as shown in Figure \ref{fig:pmd_3}.
        \begin{itemize}
            \item \emph{Vtree node:} We create a leaf vtree node $\vnode'$.
            \item \emph{Parameter function:} The parameter function value for the new node $\vparamfn'(\vnode') = (\sumunits_{\vnode'}, \pcparam_{\vnode'})$ is given as follows. For each pair of nodes $\genericunit_{\vnode^{(1)}, j} \in \sumunits_{\vnode^{(1)}}, \genericunit_{\vnode^{(2)}, k} \in \sumunits_{\vnode^{(2)}}$, we create a sum/leaf node $\genericunit_{\vnode', i^{(1)}i^{(2)}}$, representing the product of $\genericunit_{\vnode^{(1)}, j}, \genericunit_{\vnode^{(2)}, k}$. The weights are defined as $\pcparam_{\vnode', i^{(1)}, i^{(2)}, j^{(1)}, j^{(2)}} := \pcparam_{\vnode', i^{(1)}, j^{(1)}} \pcparam_{\vnode', i^{(2)}, j^{(2)}}$.
            \item \emph{Md-vtree labelling:} The label is set to be $\vlabel(\vnode') := \vlabel^{(1)}(\vnode^{(1)}) \cup \vlabel^{(2)}(\vnode^{(2)})$. This is best seen with an example; in Figure \ref{fig:pmd_3}, we see an example of the product of two sum nodes with leaf node children, where one node is $A$-deterministic and the other is $B$-deterministic. The resulting sum node in the output circuit has children corresponding to (the product of) each combination of the children of the original two sum nodes; as a result, each child of the output sum node corresponds to a different value of $(A, B)$, and so the output sum node is $\{A, B\}$-deterministic.
        \end{itemize}
    \item Thirdly, if one of the nodes has the same restricted scope as a child of the other node, e.g. $\scope^{(1)}_{\commonvars}(\vnode^{(2)}) = \scope^{(1)}_{\commonvars}(\vnode^{(1)}_r)$, we ``defer'' the product as shown in Figure \ref{fig:pmd_2}. 
    \begin{itemize}
        \item \emph{Vtree node:} We create a vtree node $\vnode'$ with $(\vnode'_l = \vnode^{(1)}_l$, $\vnode'_r = \product(\vtree^{(1)}_{\vnode^{(1)}_r}, \vtree^{(2)}_{\vnode^{(2)}}))$ as children.
        \item \emph{Parameter function:} The parameter function value for the new node $\vparamfn'(\vnode') = (\sumunits_{\vnode'}, \pcparam_{\vnode'})$ is given as follows. For every pair of sum nodes $\sumunit_{\vnode^{(1)}, i^{(1)}} \in \sumunits_{\vnode^{(1)}}$, $\sumunit_{\vnode^{(2)}, i^{(2)}} \in \sumunits_{\vnode^{(2)}}$, we create a sum node $\sumunit_{\vnode', i^{(1)}i^{(2)}}$. The weights are defined as $\pcparam_{\vnode', i^{(1)}i^{(2)}j^{(1)}k^{(1)}k^{(2)}} := \pcparam_{\vnode^{(1)}, i^{(1)}j^{(1)}k^{(1)}} \mathds{1}_{i^{(2)} = k^{(2)}}$. Note that the index $j$ only has a single index as the left child $\vnode'_l = \vnode^{(1)}_l$, and so the sum nodes $\sumunits_{\vnode'_l}$ are copies of the sum nodes from $\sumunits_{\vnode^{(1)}_l}$.
        \item \emph{Md-vtree labelling:} The label is set to be $\vlabel'(\vnode') := \vlabel^{(1)}(\vnode^{(1)})$, as the marginalized support of the children of the output sum nodes is a subset of the marginalized support of the corresponding sum node from $\vnode^{(1)}$, as can be seen in Figure \ref{fig:pmd_2}.
    \end{itemize}
    \item Finally, in any other case, the children of $\vnode^{(1)}, \vnode^{(2)}$ have matching restricted scopes, e.g. $\scope^{(1)}_{\newcommonvars_{12}}(\vnode^{(1)}_l) = \scope^{(2)}_{\newcommonvars_{12}}(\vnode^{(2)}_l)$ and $\scope^{(1)}_{\commonvars}(\vnode^{(1)}_r) = \scope^{(2)}_{\commonvars}(\vnode^{(2)}_r)$, we expand all combinations:
    \begin{itemize}
        \item \emph{Vtree node:} We create a vtree node $\vnode'$ with $(\vnode'_l = \product(\vnode^{(1)}_l, \vnode^{(2)}_l)$, $\vnode'_r = \product(\vnode^{(1)}_r, \vnode^{(2)}_r))$ as children.
        \item \emph{Parameter function:} The parameter function value for the new node $\vparamfn'(\vnode') = (\sumunits_{\vnode'}, \pcparam_{\vnode'})$ is given as follows. For every pair of sum nodes $\sumunit_{\vnode^{(1)}, i^{(1)}} \in \sumunits_{\vnode^{(1)}}$, $\sumunit_{\vnode^{(2)}, i^{(2)}} \in \sumunits_{\vnode^{(2)}}$, we create a sum node $\sumunit_{\vnode', i^{(1)}i^{(2)}}$. The weights are defined as $\pcparam_{\vnode', i^{(1)}i^{(2)}j^{(1)}j^{(2)}k^{(1)}k^{(2)}} := \pcparam_{\vnode^{(1)}, i^{(1)}j^{(1)}k^{(1)}} \pcparam_{\vnode^{(1)}, i^{(2)}j^{(2)}k^{(2)}}$.
        \item \emph{Md-vtree labelling:} The label is set to be $\vlabel(\vnode') := \vlabel^{(1)}(\vnode^{(1)}) \cup \vlabel^{(2)}(\vnode^{(2)})$, for similar reasons to the product of two leaf vtree nodes above.
    \end{itemize}
\end{enumerate}

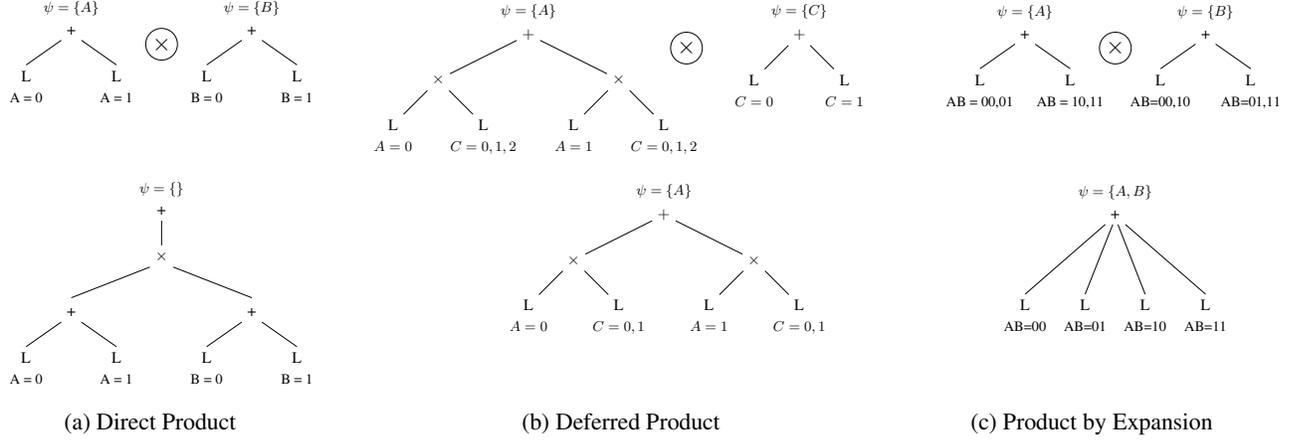
\begin{figure}
    \centering
    \begin{subfigure}[t]{0.24\linewidth}
        \centering
        \scalebox{0.6}{
        \begin{tikzpicture}
  \node[label=above:{$\vlabel = \{A\}$}] (plus1) at (0,0) {+};
  \node[label=below:{A = 0}] (A0) at (-1,-1) {L};
  \node[label=below:{A = 1}] (A1) at (1,-1) {L};
  \draw (plus1.south west) -- (A0.north);
  \draw (plus1.south east) -- (A1.north);

\node[latent] (C3) at (2, -0.3) {\scalebox{1.5}{\textbf{$\times$}}};

  \node[label=above:{$\vlabel = \{B\}$}] (plus2) at (4,0) {+};
  \node[label=below:{B = 0}] (B0) at (3,-1) {L};
  \node[label=below:{B = 1}] (B1) at (5,-1) {L};
  \draw (plus2.south west) -- (B0.north);
  \draw (plus2.south east) -- (B1.north);

  \node[label=above:{$\vlabel = \{\}$}] (plus3) at (2,-4) {+};
  \node[] (X) at (2,-5) {$\times$};
  \node (left) at (0,-7) {\begin{tikzpicture}
    \node[] (plus1) at (0,0) {+};
    \node[label=below:{A = 0}] (A0) at (-1,-1) {L};
    \node[label=below:{A = 1}] (A1) at (1,-1) {L};
    \draw (plus1.south west) -- (A0.north);
    \draw (plus1.south east) -- (A1.north);
  \end{tikzpicture}};
  \node (right) at (4,-7) {\begin{tikzpicture}
    \node[] (plus2) at (0,0) {+};
    \node[label=below:{B = 0}] (B0) at (-1,-1) {L};
    \node[label=below:{B = 1}] (B1) at (1,-1) {L};
    \draw (plus2.south west) -- (B0.north);
    \draw (plus2.south east) -- (B1.north);
  \end{tikzpicture}};
  \draw (plus3.south) -- (X.north);
  \draw (X.south west) -- (left.north);
  \draw (X.south east) -- (right.north);

\end{tikzpicture}
        }
        \caption{Direct Product}
        \label{fig:pmd_1}
    \end{subfigure}
    \begin{subfigure}[t]{0.48\linewidth}
        \centering
        \scalebox{0.6}{
        \begin{tikzpicture}
  \node[label=above:{$\vlabel=\{A\}$}] (A) at (0,0) {$+$};
  \node (X1) at (-2,-1) {$\times$};
  \node (X2) at (2,-1) {$\times$};
  \node[label=below:{$A=0$}] (Y1a) at (-3,-2) {L};
  \node[label=below:{$C=0,1,2$}] (Y1b) at (-1,-2) {L};
  \node[label=below:{$A=1$}] (Y2a) at (1,-2) {L};
  \node[label=below:{$C=0,1,2$}] (Y2b) at (3,-2) {L};

  \node[latent] (C3) at (3.5, -0.3) {\scalebox{1.5}{\textbf{$\times$}}};
  
  \draw (A) -- (X1) node[midway, left] {};
  \draw (A) -- (X2) node[midway, right] {};
  \draw (X1) -- (Y1a) node[midway, left] {};
  \draw (X1) -- (Y1b) node[midway, right] {};
  \draw (X2) -- (Y2a) node[midway, left] {};
  \draw (X2) -- (Y2b) node[midway, right] {};
  
  \node[label=above:{$\vlabel=\{C\}$}] (C) at (6,0) {$+$};
  \node[label=below:{$C=0$}] (L1) at (5,-1) {L};
  \node[label=below:{$C=1$}] (L2) at (7,-1) {L};
  
  \draw (C) -- (L1) node[midway, left] {};
  \draw (C) -- (L2) node[midway, right] {};

  \node[label=above:{$\vlabel=\{A\}$}] (Ap) at (3,-4) {$+$};
  \node (X1p) at (1,-5) {$\times$};
  \node (X2p) at (5,-5) {$\times$};
  \node[label=below:{$A=0$}] (Y1ap) at (0,-6) {L};
  \node[label=below:{$C=0,1$}] (Y1bp) at (2,-6) {L};
  \node[label=below:{$A=1$}] (Y2ap) at (4,-6) {L};
  \node[label=below:{$C=0,1$}] (Y2bp) at (6,-6) {L};
  
  \draw (Ap) -- (X1p) node[midway, left] {};
  \draw (Ap) -- (X2p) node[midway, right] {};
  \draw (X1p) -- (Y1ap) node[midway, left] {};
  \draw (X1p) -- (Y1bp) node[midway, right] {};
  \draw (X2p) -- (Y2ap) node[midway, left] {};
  \draw (X2p) -- (Y2bp) node[midway, right] {};
  
  \node[] (dummy) at (0, -7.75) {\vphantom{L}};
\end{tikzpicture}
        }
        \caption{Deferred Product}
        \label{fig:pmd_2}
        \begin{minipage}{.1cm}
            \vfill
            \end{minipage}
    \end{subfigure}
    \begin{subfigure}[t]{0.24\linewidth}
        \centering
        \scalebox{0.6}{
        \begin{tikzpicture}
  \node[label=above:{$\vlabel = \{A\}$}] (plus1) at (0,0) {+};
  \node[label=below:{AB = 00,01}] (A0) at (-1,-1) {L};
  \node[label=below:{AB = 10,11}] (A1) at (1,-1) {L};
  \draw (plus1.south west) -- (A0.north);
  \draw (plus1.south east) -- (A1.north);

  \node[latent] (C3) at (2, -0.3) {\scalebox{1.5}{\textbf{$\times$}}};

  \node[label=above:{$\vlabel = \{B\}$}] (plus2) at (4,0) {+};
  \node[label=below:{AB=00,10}] (B0) at (3,-1) {L};
  \node[label=below:{AB=01,11}] (B1) at (5,-1) {L};
  \draw (plus2.south west) -- (B0.north);
  \draw (plus2.south east) -- (B1.north);

  \node[label=above:{$\vlabel = \{A,B\}$}] (plus3) at (2,-4) {+};
  \node[label=below:{AB=00}] (L1) at (0,-6) {L};
  \node[label=below:{AB=01}] (L2) at (1.33,-6) {L};
  \node[label=below:{AB=10}] (L3) at (2.66,-6) {L};
  \node[label=below:{AB=11}] (L4) at (4,-6) {L};
  \draw (plus3.south west) -- (L1.north);
  \draw (plus3.255) -- (L2.north);
  \draw (plus3.285) -- (L3.north);
  \draw (plus3.south east) -- (L4.north);

    \node[] (dummy) at (0, -7.75) {\vphantom{L}};

\end{tikzpicture}
        }
        \caption{Product by Expansion}
        \label{fig:pmd_3}
        \begin{minipage}{.1cm}
            \vfill
            \end{minipage}
    \end{subfigure}
    \caption{Examples of product of the two sum nodes on the top half, with the result shown in the bottom half. The root sum node is labelled with the corresponding vtree node label, while the leaves are labelled with their support.}
    \label{fig:product_md}
\end{figure}

With this, we have shown how to each recursive step of the product algorithm. Now, taking a step back, we consider the entire run of the recursive algorithm. Starting from md-vtrees $\vtree^{(1)}, \vtree^{(2)}$ over variables $\pcvars^{(1)}, \pcvars^{(2)}$, with common variables $\commonvarsglobal := \pcvars^{(1)} \cap \pcvars^{(2)}$, in each recursive call, we reduce the common variables, until either we reach two vtree nodes that do not have common variables, or we reach leaf vtree nodes. One property of the algorithm, which will be important for the proof of the MD-calculus rule below, is that at each recursive step of the product algorithm, the two input vtree nodes have the same restricted scope over $\commonvarsglobal$. 

\begin{proposition}\label{prop:algprodinv}
    At each recursive step of Algorithm \ref{alg:prod}, we have that $\scope^{(1)}_{\commonvarsglobal}(\vnode^{(1)}) = \scope^{(2)}_{\commonvarsglobal}(\vnode^{(2)})$.
\end{proposition}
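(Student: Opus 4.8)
The plan is to prove the invariant by induction on the depth of the recursion in Algorithm~\ref{alg:prod}. For the base case, the top-level call receives the roots $\vnode^{(1)}, \vnode^{(2)}$ of $\vtree^{(1)}, \vtree^{(2)}$, whose scopes are all of $\pcvars^{(1)}, \pcvars^{(2)}$; since $\commonvarsglobal = \pcvars^{(1)} \cap \pcvars^{(2)}$ is contained in both $\pcvars^{(1)}$ and $\pcvars^{(2)}$, I get $\scope^{(1)}_{\commonvarsglobal}(\vnode^{(1)}) = \pcvars^{(1)} \cap \commonvarsglobal = \commonvarsglobal = \pcvars^{(2)} \cap \commonvarsglobal = \scope^{(2)}_{\commonvarsglobal}(\vnode^{(2)})$, so the invariant holds at the root.

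For the inductive step I assume the invariant $S := \scope^{(1)}_{\commonvarsglobal}(\vnode^{(1)}) = \scope^{(2)}_{\commonvarsglobal}(\vnode^{(2)})$ at the current pair, and show it is preserved by every recursive call the algorithm makes. The key preliminary fact, which I would establish first, is that the \emph{local} common variables from the compatibility definition, $\commonvars = \scope^{(1)}(\vnode^{(1)}) \cap \scope^{(2)}(\vnode^{(2)})$, coincide exactly with this common restricted scope $S$. One inclusion is immediate: as $\scope^{(2)}(\vnode^{(2)}) \subseteq \pcvars^{(2)}$, we have $\commonvars \subseteq \scope^{(1)}(\vnode^{(1)}) \cap \pcvars^{(2)} = \scope^{(1)}_{\commonvarsglobal}(\vnode^{(1)}) = S$. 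For the reverse, any $x \in S$ lies in $\scope^{(1)}(\vnode^{(1)}) \cap \commonvarsglobal$, hence in $\pcvars^{(2)}$, so by the invariant $x \in \scope^{(2)}_{\commonvarsglobal}(\vnode^{(2)}) \subseteq \scope^{(2)}(\vnode^{(2)})$, giving $x \in \commonvars$. A direct consequence, using that every child scope is contained in its parent's scope, is that for any child $\vnode_{\mathrm{ch}}$ of $\vnode^{(1)}$ or $\vnode^{(2)}$ the local and global restricted scopes agree: $\scope_{\commonvars}(\vnode_{\mathrm{ch}}) = \scope(\vnode_{\mathrm{ch}}) \cap \commonvars = \scope(\vnode_{\mathrm{ch}}) \cap S = \scope(\vnode_{\mathrm{ch}}) \cap \commonvarsglobal = \scope_{\commonvarsglobal}(\vnode_{\mathrm{ch}})$. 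This is the bridge that lets me translate the compatibility conditions, which are stated in terms of $\commonvars$, into statements about $\commonvarsglobal$.

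With this bridge in hand I finish by a case analysis over the four compatibility cases. Cases~1 and~2 ($\commonvars = \emptyset$, or both nodes leaves) are base cases making no recursive product call, so there is nothing to check. In Case~3 (matching parent and child, say $\scope^{(2)}_{\commonvars}(\vnode^{(2)}) = \scope^{(1)}_{\commonvars}(\vnode^{(1)}_r)$), the only recursive call is $\product(\vtree^{(1)}_{\vnode^{(1)}_r}, \vtree^{(2)}_{\vnode^{(2)}})$; using $S \subseteq \scope^{(2)}(\vnode^{(2)})$ gives $\scope^{(2)}_{\commonvars}(\vnode^{(2)}) = S = \scope^{(2)}_{\commonvarsglobal}(\vnode^{(2)})$, while the bridge gives $\scope^{(1)}_{\commonvars}(\vnode^{(1)}_r) = \scope^{(1)}_{\commonvarsglobal}(\vnode^{(1)}_r)$, so the compatibility equality upgrades to $\scope^{(1)}_{\commonvarsglobal}(\vnode^{(1)}_r) = \scope^{(2)}_{\commonvarsglobal}(\vnode^{(2)})$, exactly the invariant for the recursive pair. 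In Case~4 (matching children) the recursive calls are on $(\vnode^{(1)}_l, \vnode^{(2)}_l)$ and $(\vnode^{(1)}_r, \vnode^{(2)}_r)$, and applying the bridge to each child on both sides turns the two compatibility equalities $\scope^{(1)}_{\commonvars}(\vnode^{(1)}_l) = \scope^{(2)}_{\commonvars}(\vnode^{(2)}_l)$ and $\scope^{(1)}_{\commonvars}(\vnode^{(1)}_r) = \scope^{(2)}_{\commonvars}(\vnode^{(2)}_r)$ directly into the invariant for each recursive pair. This closes the induction.

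The main obstacle I anticipate is the bookkeeping around the local versus global common variable sets: the compatibility conditions driving the recursion are phrased relative to the local $\commonvars$ at each node, whereas the invariant concerns the fixed global $\commonvarsglobal$. Establishing $\commonvars = S$, and hence the child-scope identity, cleanly is therefore the crux; once that set-theoretic identity is nailed down, the case analysis is essentially mechanical. I would also take care to read the restricted-scope superscripts in the compatibility definition consistently, with the common variables being the intersection of the two current scopes, since it is precisely the invariant that guarantees these restricted scopes match up.
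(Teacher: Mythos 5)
Your proposal is correct and takes essentially the same route as the paper: the paper's proof is a terse "by inspection" that asserts exactly the identity $\commonvars = \scope^{(1)}_{\commonvarsglobal}(\vnode^{(1)}) = \scope^{(2)}_{\commonvarsglobal}(\vnode^{(2)})$ at each recursive step and notes that the common variables are matched up in the recursive calls. You have simply made the induction, the identification of the local common variables $\commonvars$ with the shared restricted scope $S$, and the case analysis over the compatibility conditions fully explicit.
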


\begin{proof}
    Proof is by inspection; in each recursive case (3, 4), we have that $\commonvars = \scope^{(1)}_{\commonvarsglobal}(\vnode^{(1)}) = \scope^{(2)}_{\commonvarsglobal}(\vnode^{(2)})$, and match up the common variables among the recursive call(s). 
\end{proof}

\begin{algorithm}[t]
\SetAlgoLined
\KwInput{Input circuits $\circuit^{(1)} = (\vtree^{(1)} = (\vnodes^{(1)}, \vedges^{(1)}, \scope^{(1)}), \vlabel^{(1)}, \vparamfn^{(1)}), \circuit^{(2)} = (\vtree^{(2)} = (\vnodes^{(2)}, \vedges^{(2)}, \scope^{(2)}), \vlabel^{(2)}, \vparamfn^{(2)})$
}
\KwResult{Output circuit $\circuit' = (\vtree', \vlabel', \vparamfn')$}

$\vnode^{(1)}, \vnode^{(2)} \gets \texttt{root}(\vtree^{(1)}), \texttt{root}(\vtree^{(2)})$;

$(\vnode^{(1)}_l, \vnode^{(1)}_r), (\vnode^{(2)}_l, \vnode^{(2)}_r) \gets \texttt{children}(\vnode^{(1)}), \texttt{children}(\vnode^{(2)})$; \tcp*{null if $\vnode^{(1)}$/$\vnode^{(2)}$ are leaf}

$\vnode' \gets \texttt{newvtreenode}()$; 


$\newcommonvars \gets \scope^{(1)}(\vnode^{(1)}) \cap \scope^{(2)}(\vnode^{(2)})$;

\If{$\newcommonvars = \emptyset$} {
    $\vtree'_l, \vlabel'_l, \vparamfn'_l \gets \vtree^{(1)}, \vlabel^{(1)}, \vparamfn^{(1)}$;
    
    $\vtree'_r, \vlabel'_r, \vparamfn'_r \gets \vtree^{(2)}, \vlabel^{(2)}, \vparamfn^{(2)}$;
 

    $\vtree', \vlabel', \vparamfn' \gets \vtree'_l \cup \vtree'_r, \vlabel'_l \cup \vlabel'_r, \vparamfn'_l \cup \vparamfn'_r$; \tcp*{combine the vtrees/labelling fn/param fn}


    $\vlabel'(\vnode') \gets \emptyset$; \tcp*{Update label function}
    
    $\sumunits_{\vnode'} \gets \left(\sumunit_{\vnode', i^{(1)}i^{(2)}} \textbf{ for } i^{(1)} = 1, ..., |\sumunits_{\vnode^{(1)}}|, i^{(2)} = 1, ... |\sumunits_{\vnode^{(2)}}|\right)$;

    $\pcparam_{\vnode', i^{(1)}i^{(2)}jk} \gets \mathds{1}_{i^{(1)}=j, i^{(2)}=k}$;

    $\vparamfn'(\vnode') \gets (\sumunits_{\vnode'}, \pcparam_{\vnode'})$ \tcp*{Update parameter function}

    
        
}
\ElseIf{$\vnode^{(1)}$ and $\vnode^{(2)}$ are leaves} {
    $\vtree', \vlabel', \vparamfn' \gets \texttt{createemptyvtree()}$;

    $\vlabel'(\vnode') = \vlabel^{(1)}(\vnode^{(1)}) \cup \vlabel^{(2)}(\vnode^{(2)})$;

    $\sumunits_{\vnode'} \gets \left(\sumunit_{\vnode', i^{(1)}i^{(2)}} \textbf{ for } i^{(1)} = 1, ..., |\sumunits_{\vnode^{(1)}}|, i^{(2)} = 1, ... |\sumunits_{\vnode^{(2)}}|\right)$;

    $\pcparam_{\vnode', i^{(1)}i^{(2)}j^{(1)}j^{(2)}k^{(1)}k^{(2)}} \gets \pcparam_{\vnode^{(1)}, i^{(1)}j^{(1)}k^{(1)}} \pcparam_{\vnode^{(1)}, i^{(2)}j^{(2)}k^{(2)}}$;

    $\vparamfn'(\vnode') \gets (\sumunits_{\vnode'}, \pcparam_{\vnode'})$ \tcp*{Update parameter function}
}
\ElseIf{$\scope^{(1)}_{\commonvars}(\vnode^{(2)}) = \scope^{(1)}_{\commonvars}(\vnode^{(1)}_r)$}{
    $\vtree'_l, \vlabel'_l, \vparamfn'_l \gets \vtree^{(1)}_{\vnode^{(1)}_l}, \vlabel^{(1)}, \vparamfn^{(1)}$;
    
    $\vtree'_r, \vlabel'_r, \vparamfn'_r \gets \product(\vnode^{(1)}_r, \vnode^{(2)})$;

    $\vtree', \vlabel', \vparamfn' \gets \vtree'_l \cup \vtree'_r, \vlabel'_l \cup \vlabel'_r, \vparamfn'_l \cup \vparamfn'_r$; \tcp*{combine the vtrees/labelling fn/param fn}

    $\vlabel'(\vnode') = \vlabel^{(1)}(\vnode^{(1)})$; \tcp*{Update label function}
 
    $\sumunits_{\vnode'} \gets \left(\sumunit_{\vnode', i^{(1)}i^{(2)}} \textbf{ for } i^{(1)} = 1, ..., |\sumunits_{\vnode^{(1)}}|, i^{(2)} = 1, ... |\sumunits_{\vnode^{(2)}}|\right)$;

    $\pcparam_{\vnode', i^{(1)}i^{(2)}j^{(1)}k^{(1)}k^{(2)}} \gets \theta_{\vnode^{(1)}, i^{(1)}j^{(1)}k^{(1)}} \mathds{1}_{i^{(2)} = k^{(2)}}$;

    $\vparamfn'(\vnode') \gets (\sumunits_{\vnode'}, \pcparam_{\vnode'})$  \tcp*{Update parameter function}



    
    }
\ElseIf{$\scope^{(1)}_{\newcommonvars}(\vnode^{(1)}_l) = \scope^{(2)}_{\newcommonvars}(\vnode^{(2)}_l)$ and $\scope^{(1)}_{\newcommonvars}(\vnode^{(2)}_r) = \scope^{(1)}_{\newcommonvars}(\vnode^{(2)}_r)$} {
    $\vtree'_l, \vlabel'_l, \vparamfn'_l \gets \product(\vnode^{(1)}_l, \vnode^{(2)}_l)$;
    
    $\vtree'_r, \vlabel'_r, \vparamfn'_r \gets \product(\vnode^{(1)}_r, \vnode^{(2)}_r)$;

    $\vtree', \vlabel', \vparamfn' \gets \vtree'_l \cup \vtree'_r, \vlabel'_l \cup \vlabel'_r, \vparamfn'_l \cup \vparamfn'_r$; \tcp*{combine the vtrees/labelling fn/param fn}

    $\vlabel'(\vnode') = \vlabel^{(1)}(\vnode^{(1)}) \cup \vlabel^{(2)}(\vnode^{(2)})$;

    $\sumunits_{\vnode'} \gets \left(\sumunit_{\vnode', i^{(1)}i^{(2)}} \textbf{ for } i^{(1)} = 1, ..., |\sumunits_{\vnode^{(1)}}|, i^{(2)} = 1, ... |\sumunits_{\vnode^{(2)}}|\right)$;

    $\pcparam_{\vnode', i^{(1)}i^{(2)}j^{(1)}j^{(2)}k^{(1)}k^{(2)}} \gets \pcparam_{\vnode^{(1)}, i^{(1)}j^{(1)}k^{(1)}} \pcparam_{\vnode^{(1)}, i^{(2)}j^{(2)}k^{(2)}}$;

    $\vparamfn'(\vnode') \gets (\sumunits_{\vnode'}, \pcparam_{\vnode'})$ \tcp*{Update parameter function}
}
    \Else {
        \textbf{Return} fail (not compatible)
    }


$\scope'(\vnode') \gets \scope^{(1)}(\vnode^{(1)}) \cup \scope^{(2)}(\vnode^{(2)})$; \tcp*{Update scope function}

$\vtree' \gets \texttt{addnode}(\vtree'; \vnode')$; 

$\vtree' \gets \texttt{addchildren}(\vtree'; \vnode', \texttt{root}(\vtree'_l), \texttt{root}(\vtree'_r))$;

\textbf{Return} $(\vtree', \vlabel', \vparamfn')$

\caption{$\product(\circuit^{(1)}, \circuit^{(2)})$}
\label{alg:prod}
\end{algorithm}

\paragraph{$\powop(\cdot; \power)$} For the power operation, we have Algorithm \ref{alg:powop}. This algorithm simply inverts all the weights/parameters of the circuit, as well as replacing the leaves with their reciprocals. Provided that the input circuit is deterministic, the output circuit faithfully represents the reciprocal of the input circuit \citep{vergari2021atlas}. As the transformation is simply numerical (i.e. not affecting the support of any node), the labels of all nodes remain the same.

\paragraph{$\maxop(\cdot; \power)$} This operation returns a scalar.

\paragraph{$\logop(\cdot)$} This operation returns a circuit which respects the same vtree, but does not have (marginal) determinism \citep{vergari2021atlas}.







    
        
    
    
    
    



\begin{algorithm}[t]
\SetAlgoLined
\KwInput{Input circuit $\circuit = (\vtree = (\vnodes, \vedges, \scope), \vlabel, \vparamfn)$; power $\power$
}
\KwResult{Output circuit $\circuit' = (\vtree', \vlabel', \vparamfn')$}

$\vnode \gets \texttt{root}(\vtree)$;

$\vnode' \gets \texttt{newnode}()$;
 
\If(\tcp*[f]{Update vtree structure and parameter function (leaf)}){$\vnode$ is leaf} {  \label{algline:margleafstart}
    $\vtree' \gets \texttt{createvtree}(\vnode')$; \tcp*{create vtree with single node}


    $\vparamfn'(\vnode') \gets (\powop(\leafunit; \power) \textbf{ for } \leafunit \in \sumunits_{\vnode}, \pcparam_{\vnode})$; \label{algline:margleafend} \tcp*{apply power to leaf PC nodes}
    
        
}
\Else(\tcp*[f]{Update vtree structure and parameter function (non-leaf)}){ \label{algline:margnonleafstart}
    $\vnode_l, \vnode_r \gets \texttt{children}(\vnode)$; 
    
    $\vtree'_l, \vlabel'_l, \vparamfn'_l \gets \powop((\vtree_{\vnode_l}, \vlabel, \vparamfn), \power)$; 
    
    $\vtree'_r, \vlabel'_r, \vparamfn'_r \gets \powop((\vtree_{\vnode_r}, \vlabel, \vparamfn), \power)$; 

    $\vtree', \vlabel', \vparamfn' \gets \vtree'_l \cup \vtree'_r, \vlabel'_l \cup \vlabel'_r, \vparamfn'_l \cup \vparamfn'_r$; \tcp*{combine the vtrees/labelling fn/param fn}

    $\vtree' \gets \texttt{addnode}(\vtree'; \vnode'); \vtree' \gets \texttt{addchildren}(\vtree'; \vnode', \texttt{root}(\vtree'_l), \texttt{root}(\vtree'_r))$;
    
    
    $\vparamfn'(\vnode') \gets \vparamfn(\vnode)$; \label{algline:margnonleafend}
}

$\scope'(\vnode') \gets \scope(\vnode)$;  \tcp*{Update scope function} \label{algline:margscope}

$\vlabel'(\vnode') \gets \vlabel(\vnode)$; \tcp*{Update labelling function}

\textbf{Return} $(\vtree', \vlabel', \vparamfn')$

\caption{$\powop(\circuit, \power)$}
\label{alg:powop}
\end{algorithm}

\subsection{MD-calculus and the Backward Problem}

The algorithms for each of the basic operations above allow us to derive a md-vtree for the output circuit, given the md-vtree that the input circuit respects. The MD-calculus in Table \ref{tbl:mdcalc} (repeated for convenience in Table \ref{tbl:mdcalc_repeated}) turns these results into a series of straightforward rules that can easily be applied to derive sufficient (but possibly not necessary) conditions for tractability of compositions of operations.

\thmmdcalc*
\begin{proof} To state the result more formally, we claim that if the input circuit(s) respect md-vtrees(s) implying the input condition, then the result of the operation applied to the input circuit(s) will respect a md-vtree implying the output condition.

\paragraph{$\marg(\cdot; \varsubset)$}  For the marginalization operation, the output md-vtree is over variables $\vars \setminus \varsubset$. Thus, let $\margdetvars$ be any subset of $\vars \setminus \varsubset$.
\begin{itemize}
    \item \emph{Input Condition:} The input condition requires that the input md-vtree $\mdvtree$ implies $\margdetvars$-determinism; that is, for every vtree node $\vnode$, either $\scope_{\margdetvars}(\vnode) = \emptyset$, or else $\margdetvars \supseteq \vlabel(\vnode)$.
    \item \emph{Algorithm:} In Algorithm \ref{alg:marg}, every vtree node $\vnode'$ in the output md-vtree corresponds to a vtree node $\vnode'$ in the input md-vtree, such that $\scope'(\vnode') = \scope(\vnode) \setminus \varsubset$, and $\vlabel'(\vnode') = \vlabel(\vnode)$ if $\vlabel(\vnode) \cap \varsubset = \emptyset$, or $\vlabel'(\vnode')= \vlabelno$ otherwise.
    \item \emph{Proof for Output Condition:} For each vtree node $\vnode'$, if  $\scope'_{\margdetvars}(\vnode') \neq \emptyset$, then, we have that:
    \begin{align*}
        &\scope_{\margdetvars}(\vnode) \setminus \varsubset \neq \emptyset \tag*{(by effect of algorithm)}\\
        \implies &\scope_{\margdetvars}(\vnode) \neq \emptyset \tag*{(weaker statement)}\\
        \implies &\vlabel(\vnode) \subseteq \margdetvars \tag*{(by input condition)} \\
        \implies &\vlabel'(\vnode') \subseteq \margdetvars 
    \end{align*}
    The last line follows since $\margdetvars \cap \varsubset = \emptyset$, so $\vlabel(\vnode) \cap \varsubset = \emptyset$, and so we are in the algorithm case where the label is "copied". Thus, we have shown that the output md-vtree implies $\margdetvars$-determinism, as required.
\end{itemize}

\paragraph{$\inst(\cdot; \varsubsetval)$} For the instantiation operation, the output md-vtree is over variables $\vars \setminus \varsubset$. Thus, let $\margdetvars$ be any subset of $\vars \setminus \varsubset$.
\begin{itemize}
    \item \emph{Input Condition:} The input condition requires that the input md-vtree $\mdvtree$ implies $(\margdetvars \cup \varsubset')$-determinism for some $\varsubset' \subseteq \varsubset$; that is, for every vtree node $\vnode$, either $\scope_{\margdetvars \cup \varsubset'}(\vnode) = \emptyset$, or else $\margdetvars \cup \varsubset' \supseteq \vlabel(\vnode)$.
    \item \emph{Algorithm:} In Algorithm \ref{alg:inst}, every vtree node $\vnode'$ in the output md-vtree corresponds to a vtree node $\vnode'$ in the input md-vtree, such that $\scope'(\vnode') = \scope(\vnode) \setminus \varsubset$, and $\vlabel'(\vnode') = \vlabel(\vnode) \setminus \varsubset$.
    \item \emph{Proof for Output Condition:} For each vtree node $\vnode'$, if  $\scope'_{\margdetvars}(\vnode') \neq \emptyset$, then, we have that:
    \begin{align*}
        &\scope_{\margdetvars}(\vnode) \setminus \varsubset \neq \emptyset \tag*{(by effect of algorithm)}\\
        \implies &\scope_{\margdetvars}(\vnode) \neq \emptyset \tag*{(weaker statement)}\\
        \implies &\scope_{\margdetvars \cup \varsubset'}(\vnode) \neq \emptyset \tag*{(weaker statement)}\\
        \implies &\vlabel(\vnode) \subseteq \margdetvars \cup \varsubset' \tag*{(by input condition)} \\
        \implies &\vlabel'(\vnode') \subseteq \margdetvars 
    \end{align*}
    Here, the last line follows since the new label $\vlabel'(\vnode') = \vlabel(\vnode) \setminus \varsubset$ removes all elements of $\varsubset$, and thus $\varsubset'$, from $\vlabel(\vnode)$. Thus, we have shown that the output md-vtree implies $\margdetvars$-determinism, as required.
\end{itemize}

\paragraph{$\product(\cdot, \cdot)$} For the product operation, the output md-vtree is over variables $\vars^{(1)} \cup \vars^{(2)}$. Thus, let $\margdetvars$ be any subset of $\vars^{(1)} \cup \vars^{(2)}$.

\begin{itemize}
    \item \emph{Input Condition:} The input condition requires that the first input md-vtree $\mdvtree^{(1)}$ implies $\margdetvars^{(1)}$-determinism, and the second input md-vtree $\mdvtree^{(2)}$ implies $\margdetvars^{(2)}$-determinism, where \textit{one} of the following holds:
    \begin{enumerate}[(a)]
        \item $\margdetvars \subseteq \pcvars^{(1)} \cap \pcvars^{(2)}$ and $\margdetvars^{(1)} = \margdetvars^{(2)} = \margdetvars$;
        \item $\margdetvars^{(1)}, \margdetvars^{(2)} \supseteq \pcvars^{(1)} \cap \pcvars^{(2)}$ and $\margdetvars = \margdetvars^{(1)} \cup \margdetvars^{(2)}$
    \end{enumerate}
    \item \emph{Algorithm:} In Algorithm \ref{alg:prod}, every vtree node $\vnode'$ in the output md-vtree corresponds to a pair $\vnode^{(1)}, \vnode^{(2)}$ in the input md-vtrees respectively, such that $\scope'(\vnode') = \scope^{(1)}(\vnode^{(1)}) \cup \scope^{(2)}(\vnode^{(2)})$. There are four cases of the algorithm to consider, in which the label is:
    \begin{enumerate}
        \item $\vlabel'(\vnode') = \emptyset$.
        \item $\vlabel'(\vnode') = \vlabel^{(1)}(\vnode^{(1)}) \cup \vlabel^{(2)}(\vnode^{(2)})$
        \item $\vlabel'(\vnode') = \vlabel^{(1)}(\vnode^{(1)})$
        \item $\vlabel'(\vnode') = \vlabel^{(1)}(\vnode^{(1)}) \cup \vlabel^{(2)}(\vnode^{(2)})$
    \end{enumerate}
    \item \emph{Proof for Output Condition:} We need to show that for each case 1-3 of the algorithm, and for either input condition (a), (b), that the condition for implied $\margdetvars$-determinism holds on $\vnode'$; that is, if $\scope'_{\margdetvars}(\vnode') \neq \emptyset$, then $\vlabel'(\vnode') \subseteq \margdetvars$. Assuming that $\scope'_{\margdetvars}(\vnode') \neq \emptyset$, we have that 
    \begin{align*}
        &\scope'_{\margdetvars}(\vnode') \neq \emptyset \\
        \implies &\scope'(\vnode') \cap \margdetvars \neq \emptyset \tag*{(by definition of restricted scope)} \\
        \implies &(\scope^{(1)}(\vnode^{(1)}) \cup \scope^{(2)}(\vnode^{(2)})) \cap \margdetvars \neq \emptyset \tag*{(by effect of algorithm)} \\
        \implies &\scope^{(1)}_{\margdetvars}(\vnode^{(1)}) \cup \scope^{(2)}_{\margdetvars}(\vnode^{(2)}) \neq \emptyset \tag*{(rewriting)}
    \end{align*}
    However, this does not in general imply that $\scope^{(1)}_{\margdetvars^{(1)}}(\vnode^{(1)}) \neq \emptyset$ or $\scope^{(2)}_{\margdetvars^{(2)}}(\vnode^{(2)}) \neq \emptyset$. Thus, we look at the special cases defined by (a) and (b), and the algorithm variations 1, 2, 3, 4.
    \begin{itemize}[align=left]
        \item [(a1, a2, a3, a4)] In case (a), we have $\margdetvars^{(1)} = \margdetvars^{(2)} = \margdetvars \subseteq \commonvarsglobal$. 
        \begin{align*}
            &\scope^{(1)}_{\commonvarsglobal}(\vnode^{(1)}) = \scope^{(2)}_{\commonvarsglobal}(\vnode^{(2)}) \tag*{(by Proposition \ref{prop:algprodinv})} \\
            \implies &\scope^{(1)}_{\commonvarsglobal}(\vnode^{(1)}) \cap \margdetvars = \scope^{(2)}_{\commonvarsglobal}(\vnode^{(2)}) \cap \margdetvars \\
            \implies &\scope^{(1)}_{\commonvarsglobal \cap \margdetvars}(\vnode^{(1)}) = \scope^{(2)}_{\commonvarsglobal \cap \margdetvars}(\vnode^{(2)}) \\
            \implies &\scope^{(1)}_{\margdetvars}(\vnode^{(1)}) = \scope^{(2)}_{\margdetvars}(\vnode^{(2)}) \tag{as $\margdetvars \subseteq \commonvarsglobal$}\\
            \implies &\scope^{(1)}_{\margdetvars}(\vnode^{(1)}) \neq \emptyset, \scope^{(2)}_{\margdetvars}(\vnode^{(2)}) \neq \emptyset \tag{as $\scope^{(1)}_{\margdetvars}(\vnode^{(1)}) \cup \scope^{(2)}_{\margdetvars}(\vnode^{(2)}) \neq \emptyset$}\\
            \implies &\scope^{(1)}_{\margdetvars^{(1)}}(\vnode^{(1)}) \neq \emptyset, \scope^{(2)}_{\margdetvars^{(2)}}(\vnode^{(2)}) \neq \emptyset \tag{as $\margdetvars^{(1)} = \margdetvars^{(2)} = \margdetvars$}\\
        \end{align*}
        Thus, we have that $\margdetvars \supseteq \vlabel^{(1)}(\vnode^{(1)})$ and $\margdetvars \supseteq \vlabel^{(2)}(\vnode^{(2)})$, and so $\margdetvars \supseteq \vlabel^{(1)}(\vnode^{(1)}) \cup \vlabel^{(2)}(\vnode^{(2)})$. Finally, in each of the cases 1-4, we have $\margdetvars \supseteq \vlabel'(\vnode')$, so the output md-vtree implies $\margdetvars$-determinism as required.
        \item [(b1)] In case (b), we need to consider the cases of the algorithm separately. In case 1, $\vlabel'(\vnode') \subseteq \margdetvars$ holds trivially as $\vlabel'(\vnode') = \emptyset$, so we are done.
        \item [(b2, b3, b4)] We have that $\margdetvars^{(1)}, \margdetvars^{(2)} \supseteq \commonvarsglobal$ and $\margdetvars = \margdetvars^{(1)} \cup \margdetvars^{(2)}$. The key observation is that, as we are not in case 1 of the Algorithm, $\commonvars = \scope^{(1)}(\vnode^{(1)}) \cap \scope^{(2)}(\vnode^{(2)})$ must be non-empty.
        \begin{align*}
            &\commonvars \neq \emptyset\\
            \implies &\scope^{(1)}_{\commonvars}(\vnode^{(1)}) \neq \emptyset, \scope^{(2)}_{\commonvars}(\vnode^{(2)}) \neq \emptyset \tag{by definition of $\commonvars$} \\
            \implies &\scope^{(1)}_{\margdetvars^{(1)}}(\vnode^{(1)}) \neq \emptyset, \scope^{(2)}_{\margdetvars^{(1)}}(\vnode^{(2)}) \neq \emptyset \tag{as $\margdetvars^{(1)}, \margdetvars^{(2)} \supseteq \commonvarsglobal \supseteq \commonvars$} \\
        \end{align*}
        Thus, we have that $\margdetvars^{(1)} \supseteq \vlabel^{(1)}(\vnode^{(1)})$ and $\margdetvars^{(2)} \supseteq \vlabel^{(2)}(\vnode^{(2)})$, and so $\margdetvars = \margdetvars^{(1)} \cup \margdetvars^{(2)} \supseteq \vlabel^{(1)}(\vnode^{(1)}) \cup \vlabel^{(2)}(\vnode^{(2)})$. In each of the cases 2-4, we have $\margdetvars \supseteq \vlabel'(\vnode')$, so the output md-vtree implies $\margdetvars$-determinism as required.
    \end{itemize}
\end{itemize}

\paragraph{$\powop$} Since the reciprocal algorithm retains the same labelling function in the output as the input, it follows that a $\margdetvars$-deterministic input circuit will result in a $\margdetvars$-deterministic output circuit.

\paragraph{$\maxop$} This operation returns a scalar.

\paragraph{$\logop$} This operation does not have any marginal determinism conditions.

\end{proof}

\section{Causal Inference} \label{apx:causal_inference}

In this section, we provide further background on causal inference for interested readers, the proof of our hardness result for backdoor adjustment, and then provide the full derivations of our tractability results for the backdoor, frontdoor and extended napkin formulae.

\subsection{Background on Causal Inference}

We use the framework of structural causal models (SCMs) \citepSM{pearl09causality} to define the task of causal inference. SCMs provide a formal model of the underlying reality of a data-generating system over variables $\vars$. In particular, each variable has an associated causal mechanism, which is a deterministic function of other variables in $\vars$ and a set of \textit{exogenous variables} $\exovars$. The exogenous variables represent the ``external state of the world'', or in other words, the source of randomness in generated data.

\begin{definition}[Structural Causal Model]
A SCM $\scm$ is a tuple $(\exovars, \vars, \scmfns, p(\exovars))$ where:
\begin{itemize}
    \item $\exovars$ is a set of exogenous (i.e. outside the model) random variables, which are typically unobserved; 
    \item $\vars = \{\var_1, .., \var_{\dm}\}$ is a set of endogenous (i.e. inside the model) random variables;
    \item $\scmfns = \{\scmfn_1, ..., \scmfn_{\dm}\}$ is a set of causal mechanisms (functions). In particular, each endogenous variable $\var_i$ has an associated function $\scmfn_i$ which is a mapping from (the domain of) some subset $\parents(\var_i) \subseteq \exovars \cup (\vars \setminus \{\var\})$ to (the domain of) $\var_i$. Members of $\parents(\var_i)$ are referred to as parents of $\var$, and can be split into exogenous parents $\exoparents_i$ and endogenous parents $\edgparents_i$.
    \item $\dist(\exovars)$ is a probability distribution over the exogenous variables.
\end{itemize}
\end{definition}

\begin{definition}[Semi-Markovian SCM]
    A SCM is said to be semi-Markovian if the causal diagram induced by the SCM does not contain directed cycles. In such a case, for each variable $\var \in \vars$ we write $\var(\exovarsval)$ to denote the unique value of $\var$ given a particular value $\exovarsval$ of the exogenous variables $\exovars$.
\end{definition}

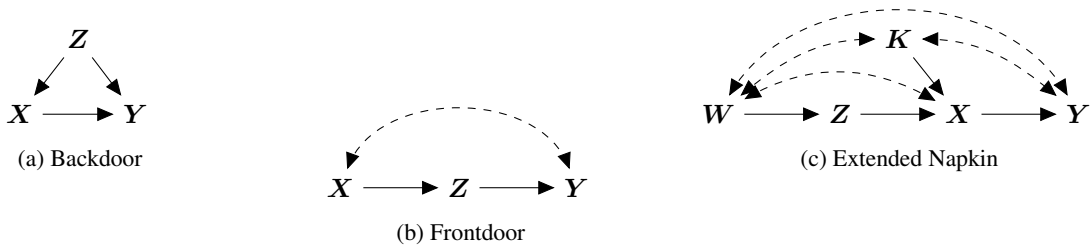
\begin{figure}
\begin{subfigure}[b]{0.25\linewidth}
\centering
\begin{tikzpicture}

\node[] (X) {$\treat$};
\node[right=0.2cm of X, yshift=1cm] (Z) {$\adjust$};
\node[right=0.2cm of Z, yshift=-1cm] (Y) {$\outcome$};

\edge {Z} {X};
\edge {Z} {Y};
\edge {X} {Y};

\end{tikzpicture}
\caption{Backdoor}
\label{fig:backdoor_apx}
\end{subfigure}
\begin{subfigure}{0.33\textwidth}
        \centering
        \begin{tikzpicture}

\node[] (X) {$\treat$};
\node[right=of X] (Z) {$\adjust$};
\node[right=of Z] (Y) {$\outcome$};

\edge {X} {Z};
\edge {Z} {Y};

\draw[<->, dashed] (X) to [out=70, in=110] (Y);

\end{tikzpicture}
        \caption{Frontdoor}
        \label{fig:frontdoor_apx}
    \end{subfigure}
\begin{subfigure}[b]{0.34\linewidth}
\centering
\begin{tikzpicture}
\node[] (W) {$\bm{W}$};
\node[right=of W] (Z) {$\adjust$};
\node[right=0.2cm of Z, yshift=1cm] (K) {$\bm{K}$};
\node[right=of Z] (X) {$\treat$};
\node[right=of X] (Y) {$\outcome$};

\edge {W} {Z};
\edge {Z} {X};
\edge {K} {X};
\edge {X} {Y};

\draw[<->, dashed] (W) to [out=60, in=120] (Y);
\draw[<->, dashed] (W) to [out=30, in=150] (X);
\draw[<->, dashed] (W) to [out=40, in=180](K);
\draw[<->, dashed] (K) to [out=0, in=140] (Y);
\end{tikzpicture}
\caption{Extended Napkin}
\label{fig:extnapkin_apx}
\end{subfigure}
\caption{Examples of causal diagrams}
\label{fig:causal_diagrams_apx}
\end{figure}

For semi-Markovian SCMs, we define a causal graph/diagram $\graph$ over the endogenous variables $\edg$, as follows. For each variable $\edgm \in \edg$, we add a directed edge to this variable from any endogenous parents of $\edgm$. Then, for each exogenous variable which is a parent of two endogenous variables $\edgm_1, \edgm_2$, we add a bidirected edge between the variables. Intuitively, directed edges represent direct/functional relationships between observed variables, while bidirected arrows represent unobserved confounding. Examples of causal diagrams are shown in Figure \ref{fig:causal_diagrams_apx}.

A semi-Markovian SCM naturally induces a distribution $p(\edg)$ on the observed variables, through the distribution on exogenous variables $\dist(\exovars)$. However, we can also use it to define the semantics of interventions, or actions on a system:

\begin{definition}[Intervention]
Given a SCM $\scm$, and any subset $\treat \subseteq \edg$, and an instantiation $\treatval$ of $\treat$, we can define an intervention on $\scm$ as producing a submodel $\scm_{\treatval}$. $\scm_{\treatval}$ differs from $\scm$ in that the functional relationships $\scmfn_{\treat}$ are replaced with setting each variable to the fixed value $\treatval$, i.e. $\scmfn_{\treat} = \treatval$.

The interventional distribution $p_{\treatval}(\edg)$ (also written $p(\edg|do(\treatval))$) is defined to be the distribution of $\edg$ in the intervened model $\scm_{\treatval}$.
\end{definition}

While SCMs are powerful functional models, we rarely have access to the true underlying SCM for a system. Thus, we often make the much weaker assumption of knowledge of the causal graph encoding the \textit{qualitative} functional dependencies between the endogenous variables. In particular, an important question is, given just a causal graph $\graph$, and observational data $\dist(\pcvars)$ on the variables $\edg$, under what circumstances can we deduce (properties of) interventional distributions $\dist_{\treatval}(\vars)$? More formally, we say that an interventional distribution (query) $p_{\treatval}(\outcome)$ (for some $\outcome \subseteq \edg$) is \emph{identfiable} with respect to the causal graph $\graph$, if $p_{\treatval}(\outcome)$ is uniquely computable from $p(\edg)$ for any SCM $\scm$ that induces $\graph$. 

The problem of causal identification can be solved constructively using the do-calculus \citepSM{pearl1995do}, which comprises a set of rules for transforming a given interventional expression $\dist_{\treatval}(\outcome)$ into a function the observational distribution $\dist(\vars)$, given the causal graph $\graph$. In particular, it was later shown that the do-calculus is \textit{complete} \citepSM{shpitser2006complete}, i.e., if a query is identifiable, then the do-calculus can derive a formula (estimand), and there exists a polynomial time algorithm for finding such a formula \citepSM{shpitser2008complete}. For example, for the backdoor causal graph (Figure \ref{fig:backdoor}), we can obtain a formula $\dist_{\treatval}(\outcome) = \sum_{\adjust} \dist(\adjust) \dist(\outcome|\treatval, \adjust)$.

\subsection{Hardness of Causal Inference for Probabilistic Circuit Models}

Now suppose we have a model $\dist(\pcvars)$ of the observational distribution, perhaps learned from data, and we would like to compute $\dist_{\treatval}(\outcome)$ in the identifiable backdoor case. Unfortunately, in high dimensions, na\"ive computation of the do-calculus formula $\dist_{\treatval}(\outcome) = \sum_{\adjust} \dist(\adjust) \dist(\outcome|\treatval, \adjust)$ is computationally intractable, as it involves a summation that is exponential in the dimension $|\adjust|$. The natural question is then whether there exist probabilistic models in which we can compute the backdoor query more efficiently. Unfortunately, despite the tractability of PCs for most probabilistic inference tasks, we show that, if the observed data distribution is modelled by a probabilistic circuit, current structural and support properties are not sufficient for exact causal inference:

\backdoorhard*

\begin{proof}

We prove this in the case of binary variables for brevity of presentation, though the proof can be easily extended to non-binary discrete variables. Our proof is based on a reduction from the problem of computing the expectation of a logistic regression model, which was defined and shown to be \textsc{\#P}-hard in \citetSM{vandenbroeck2022explr} and which we refer to as the \texttt{EXPLR} problem. In particular, for any \texttt{EXPLR} problem over variables $\adjust$, with input size $n_{\adjust} = |\adjust|$, we construct a circuit in time and with size linear in $\adjust$ and where computing the backdoor query $\dist_{\treatval}(\outcomeval) = \sum_{\adjust} \dist(\adjust) \dist(\outcomeval|\treatval, \adjust)$ is equivalent to solving the \texttt{EXPLR} problem.

The \texttt{EXPLR} problem is defined as computing the following quantity (where $w_i \in \mathbb{R}$):
\begin{equation}
    \texttt{EXPLR}(\bm{w}) = \sum_{\adjustval} \frac{1}{1 + e^{-(w_0 + \sum_i w_i z_i)}}
\end{equation}

We will construct a circuit over variables $\pcvars = \{\treat, \outcome, \adjust\}$, where the sets $\treat = \{\treatm\}$ and $\outcome = \{\outcomem\}$ each consist of a single variable, and consider the backdoor query for instantiations $\treatmval, \outcomemval$ of $\treatm, \outcomem$. We begin by defining a number of auxiliary circuits/nodes for $\treat, \outcome$ and $\adjust$ individually, all structured decomposable, smooth and deterministic, which will be part of the construction of the main circuit. 

First, for $\outcome$ we define the leaf nodes $\mathds{1}_{\outcomemval}, \mathds{1}_{\neg \outcomemval}$ to encode the functions $\dist_{\mathds{1}_{\outcomemval}}(\outcomem) := \mathds{1}(\outcomem = \outcomemval), \dist_{\mathds{1}_{\neg \outcomemval}}(\outcomem) := \mathds{1}(\outcomem = \neg \outcomemval)$ respectively. For $\treat$, we define $\mathds{1}_{\treatmval}, \mathds{1}_{\neg \treatmval}$ to encode $\dist_{\mathds{1}_{\treatmval}}(\treatm) := \mathds{1}(\treatm = \treatmval), \dist_{\mathds{1}_{\neg \treatmval}}(\treatm) := \mathds{1}(\treatm = \neg \treatmval)$ (respectively) in a similar manner. Finally, for $\adjust$, we define two circuits, $\mathds{1}_{\adjust}$ and $\circuit_{\adjust}$, as follows. 
Let $\adjust := \{\adjustm_1, ..., \adjustm_{n_{\adjust}}\}$ be an arbitrary ordering of the variables in $\adjust$, and let $\adjust_{\geq i}$ denote $\{\treatm_i, ..., \treatm_{n_{\adjust}}\}$ for any $1 \leq i \leq n_{\adjust}$. Then we define the circuit $\mathds{1}_{\adjust}$ recursively as follows, where $\mathds{1}_{\adjust} := \mathds{1}_{\adjust_{\geq 1}}$ (where $\times$ indicates a product node with its arguments as children):
\begin{equation}
    \mathds{1}_{\adjust_{\geq i}} := 
    \begin{cases}
        \mathds{1}_{\adjustm_{i}} \times \mathds{1}_{\adjust_{\geq i+1}} & 1 \leq i < n_{\adjust}\\
        \mathds{1}_{\adjustm_{i}} & i = n_{\adjust} 
    \end{cases}
\end{equation}
This circuit consists of a series of product units, and leaf units $\mathds{1}_{\adjustm_i}$ for each $\adjustm_i \in \adjustm$ which we define to encode the function $\dist_{\mathds{1}_{\adjustm_i}}(\adjustm_i) \equiv 1$ (for all values of $\adjustm_i$). Thus, the circuit as a whole encodes $\dist_{\mathds{1}_{\adjust}}(\adjust) \equiv 1$ for all values of $\adjust$. In terms of structural and support properties, the circuit is trivially deterministic and smooth as it does not contain any sum nodes, and is clearly also structured decomposable. Finally, it can also be seen that the size $|\mathds{1}_{\adjust}|$ (number of edges) of $\mathds{1}_{\adjust}$ is $O(n_{\adjust})$.

We now design a circuit $\circuit_{\adjust}$ to encode the function $e^{-(w_0 + \sum_i w_i z_i)}$ as follows, where $\circuit_{\adjust} := \circuit_{\adjust_{\geq 1}}$:
\begin{equation}
    \circuit_{\adjust_{\geq i}} := 
    \begin{cases}
        \circuit_{\adjustm_i} \times \circuit_{\adjust_{\geq i + 1}} & 1 \leq i < n_{\adjust}\\
        \circuit_{\adjustm_i}  & i = n_{\adjust} 
    \end{cases}
\end{equation}
where we define leaf nodes $\circuit_{\adjustm_i}$ to encode $\dist_{\circuit_{\adjustm_i}}(\adjustm_i) := e^{-w_i \adjustm_i}$ for $1 \leq i < n_{\adjust}$ and $\dist_{\circuit_{\adjustm_i}}(\adjustm_i) := e^{-(w_0 + w_i \adjustm_i)}$ for $i = n_{\adjust}$. By recursion it can be seen that this circuit does indeed encode $\dist_{\circuit_{\adjust}}(\adjust) = e^{-(w_0 + \sum_i w_i z_i)}$. This circuit is deterministic and smooth, and also decomposes in the same way as $\mathds{1}_{\adjust}$, i.e. they are structured decomposable with the same vtree. It can also be seen that the size $|\circuit_{\adjust}|$ of $\circuit_{\adjust}$ is $O(n_{\adjust})$.

Now, consider the following probabilistic circuit over $\pcvars = \treat \cup \outcome \cup \adjust$ (where $\times, +$ represent product, sum nodes respectively):
\begin{equation} \label{eqn:hardcircuitdefn}
\circuit := \mathds{1}_{\outcomemval}\times (\mathds{1}_{\treatmval} \times \mathds{1}_{\adjust} + \mathds{1}_{\neg \treatmval}  \times \mathds{1}_{\adjust}) +  \mathds{1}_{\neg \outcomemval} \times (\mathds{1}_{\treatmval} \times \circuit_{\adjust} +  \mathds{1}_{\neg \treatmval}  \times \mathds{1}_{\adjust})
\end{equation}
$\circuit$ is structured decomposable as all of the product units with the same scope in the equation above decompose in the same way, and we have seen that $\mathds{1}_{\adjust}$ and $\circuit_{\adjust}$ are structured decomposable with respect to the same vtree. It is also smooth and deterministic as the individual circuits $\mathds{1}_{\adjust}$ and $\circuit_{\adjust}$ are smooth and deterministic, and the sum nodes in the equation satisfy determinism by the fact that $(\mathds{1}_{\outcomemval}, \mathds{1}_{\neg \outcomemval})$ and $(\mathds{1}_{\treatmval}, \mathds{1}_{\neg \treatmval})$ have disjoint support. Finally, as the sizes of $\mathds{1}_{\adjust}$ and $\circuit_{\adjust}$ are $O(n_{\adjust})$, $|\circuit(\pcvars))|$ is also $O(n_{\adjust})$.

Now, we show that the backdoor query on $\circuit$ is equivalent to solving the corresponding $\texttt{EXPLR}$ problem. First, we derive expressions for all of the individual components of the backdoor formula on the circuit $\circuit$, by evaluating according to Equation \ref{eqn:hardcircuitdefn}:
\begin{align*}
\dist_{\circuit}(\treatmval, \outcomemval, \adjustval) &= \dist_{\mathds{1}_{\outcomemval}}(\outcomemval) \times (\dist_{\mathds{1}_{\treatmval}}(\treatmval) \times \dist_{\mathds{1}_{\adjust}}(\adjustval) + \dist_{\mathds{1}_{\neg \treatmval}}(\treatmval) \times \dist_{\mathds{1}_{\adjust}}(\adjustval)) + \dist_{\mathds{1}_{\neg \outcomemval}}(\outcomemval) \times (\dist_{\mathds{1}_{\treatmval}}(\treatmval) \times \dist_{\circuit_{\adjust}}(\adjustval) +  \dist_{\mathds{1}_{\neg \treatmval}}(\treatmval)  \times \dist_{\mathds{1}_{\adjust}}(\adjustval))\\ 
&= 1 \times (1 \times 1 + 0 \times 1) +  0 \times (1  \times 1 + 0 \times 1)\\
&= 1 \\ \\
\dist_{\circuit}(\treatmval, \adjustval) 
&= \dist_{\circuit}(\treatmval, \outcomemval, \adjustval) + \dist_{\circuit}(\treatmval, \neg \outcomemval, \adjustval)   \\
&= 1 + \circuit_{\adjust}(\adjustval)\\ \\
\dist_{\circuit}(\adjustval) 
&= \dist_{\circuit}(\treatmval, \adjustval) + \dist_{\circuit}(\neg \treatmval, \outcomemval, \adjustval) + \dist_{\circuit}(\neg \treatmval, \neg \outcomemval, \adjustval) \\
&= \dist_{\circuit}(\treatmval, \adjustval) + 1 + 1 \\
&= \dist_{\circuit}(\treatmval, \adjustval) + 2 \\
\end{align*}
The backdoor query for $\circuit$ can then be expressed as
\begin{align*}
    \dist_{\treatval}(\outcomeval) &= \sum_{\adjustval} \dist_{\circuit}(\adjustval) \dist_{\circuit}(\outcomemval|\treatmval, \adjustval)
    = \sum_{\adjustval}(\dist_{\circuit}(\treatmval, \adjustval) + 2) \frac{\dist_{\circuit}(\treatmval, \outcomemval, \adjustval)}{\dist_{\circuit}(\treatmval, \adjustval)} 
    = \sum_{\adjustval} \left[ 1 + \frac{2}{1 + \dist_{\circuit_{\adjust}}(\adjustval)} \right]  \\
    &= 2^{n_{\adjust}} + 2 \sum_{\adjustval} \frac{1}{1 + e^{-(w_0 + \sum_i w_i z_i)}}
\end{align*}

Thus, if we can compute the backdoor query for $\circuit$, then we can compute the given $\texttt{EXPLR}$ problem, completing the reduction.
\end{proof}

The significance of this result is that it implies hardness of causal inference for structured decomposable and deterministic PCs whenever there is a \textit{valid backdoor adjustment} (whether we use the backdoor formula or not), one of the simplest and most common cases where the causal effect is identifiable.

\begin{restatable}{corollary}{backdoorcor}
For any interventional query $\p_{\treatval}(\outcomeval)$ and causal diagram $\graph$ such that the query is identifiable through a backdoor adjustment, and the observational distribution $\p(\edg)$ given as  a decomposable and smooth circuit $\circuit$ encoding $\dist$, computing $\p_{\treatval}(\outcomeval)$ is \textsc{\#P}-hard, even if the circuit is structured decomposable and deterministic.
\end{restatable}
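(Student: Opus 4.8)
The plan is to reduce from Theorem~\ref{thm:backdoorhard} by exploiting the fact that, whenever a valid backdoor adjustment set exists, the \emph{value} of the interventional query $\p_{\treatval}(\outcomeval)$ coincides with the \emph{value} of the backdoor formula; hence any procedure computing the former also computes the latter. Concretely, I would reuse the very circuit $\circuit$ over $\pcvars = \treat \cup \outcome \cup \adjust$ constructed in Equation~\ref{eqn:hardcircuitdefn}, which is already structured decomposable, smooth and deterministic and has size $O(n_{\adjust})$, and exhibit a causal diagram for which it is a consistent observational distribution with $\adjust$ a valid backdoor set. Since $\circuit$ is smooth and decomposable, the normalizer $Z := \sum_{\pcvars}\dist_{\circuit}(\pcvars)$ is computable in time linear in $|\circuit|$, so rescaling the root weights by $Z$ yields a circuit of the same size encoding the normalized law $\dist := \dist_{\circuit}/Z$, preserving all structural and support properties.

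First I would fix the causal diagram $\graph$ over $\{\adjust_1, \dots, \adjust_{n_{\adjust}}, \treatm, \outcomem\}$ with no bidirected edges, whose directed edges are exactly the forward edges in the order $\adjust_1, \dots, \adjust_{n_{\adjust}}, \treatm, \outcomem$; that is, $\adjust$ is internally fully connected, every $\adjust_i$ points into both $\treatm$ and $\outcomem$, and $\treatm \to \outcomem$. Because this is a complete DAG with respect to a fixed total order, it imposes no conditional independence constraints, so $\dist$ is Markov-compatible with it and I can take any SCM inducing $\graph$ whose observational distribution is $\dist$. I would then verify the backdoor criterion for $\adjust$: every parent of $\treatm$ lies in $\adjust$, so any backdoor path leaves $\treatm$ through some $\adjust_i$ via $\adjust_i \to \treatm$; that first $\adjust$-node is therefore a non-collider on the path (one incident edge points out of it, toward $\treatm$), and being conditioned upon it blocks the path. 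Thus $\adjust$ is a valid backdoor adjustment set, the query is identifiable through it, and $\p_{\treatval}(\outcomeval) = \sum_{\adjust}\dist(\adjust)\,\dist(\outcomeval \mid \treatval, \adjust)$.

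The final step would substitute the normalization and invoke the computation already carried out in Theorem~\ref{thm:backdoorhard}. Conditionals are scale-invariant, so $\dist(\outcomeval \mid \treatval, \adjust) = \dist_{\circuit}(\outcomeval \mid \treatval, \adjust)$, and the only factor of $1/Z$ enters through $\dist(\adjust) = \dist_{\circuit}(\adjust)/Z$; hence $\p_{\treatval}(\outcomeval) = \frac{1}{Z}\bigl(2^{n_{\adjust}} + 2\,\texttt{EXPLR}(\bm{w})\bigr)$. Since $Z$ is computable in polynomial time, the value of $\p_{\treatval}(\outcomeval)$ yields $\texttt{EXPLR}(\bm{w})$ in polynomial time, and as \texttt{EXPLR} is \textsc{\#P}-hard, computing $\p_{\treatval}(\outcomeval)$ is \textsc{\#P}-hard. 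The whole construction stays within structured decomposable and deterministic circuits, giving the corollary.

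I expect the main obstacle to be the graph-construction step: arranging $\graph$ so that the \emph{fixed} hard distribution of Theorem~\ref{thm:backdoorhard} is simultaneously realizable as the observational law of an SCM inducing $\graph$ \emph{and} admits $\adjust$ as a valid backdoor set. In particular, the marginal $\dist_{\circuit}(\adjust) = 3 + e^{-(w_0 + \sum_i w_i z_i)}$ is not a product, so the $\adjust_i$ cannot be taken independent, forcing the fully connected internal structure; one must then confirm that conditioning on the whole set $\adjust$ blocks every backdoor path without inadvertently opening a collider path, which is exactly what the ``first $\adjust$-node is a conditioned non-collider'' argument above secures. The normalization bookkeeping via the efficiently computable $Z$ is a secondary subtlety, cleanly handled by tractable marginalization on smooth decomposable circuits.
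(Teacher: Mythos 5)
Your proposal is correct and follows essentially the same route as the paper: the paper's proof of this corollary is a two-line reduction observing that, under backdoor identifiability, $\p_{\treatval}(\outcomeval)$ equals the backdoor formula $\sum_{\adjustval}\dist(\adjustval)\dist(\outcomeval\mid\treatval,\adjustval)$ and then invoking Theorem~\ref{thm:backdoorhard}. The extra work you do --- exhibiting a complete DAG over $\adjust_1,\dots,\adjust_{n_{\adjust}},\treatm,\outcomem$ so that the hard circuit's distribution is Markov-compatible with a diagram admitting $\adjust$ as a valid backdoor set, and tracking the normalizer $Z$ so that the query value still reveals $\texttt{EXPLR}(\bm{w})$ --- is detail the paper leaves implicit, and it is carried out correctly.
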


\begin{proof}
By the identifiability condition, we have that $p_{\treatval}(\outcomeval) = \sum_{\adjustval} p(\outcomeval|\treatval, \adjustval) p(\adjustval) = \sum_{\adjustval} \circuit(\adjustval) \circuit(\outcomeval|\treatval, \adjustval)$, which is the backdoor query. Hardness of computing the causal effect then follows from hardness of backdoor queries for the probabilistic circuit.
\end{proof}

\subsection{MD-calculus and Causal Inference}

As sketched in the main paper, MD-calculus provides the tools for us to analyze what properties we need (to add to structured decomposability and determinism) to enable tractable computation of causal queries. We now provide the full derivations for the backdoor, frontdoor, and extended napkin cases. For convenience, in the following we will refer to the intermediate circuits in a computation by the functions they encode, e.g. $\circuit(\dist_{\circuit}(\treat, \adjust))$ for the circuit obtained from applying the $\marg(\cdot; \pcvars \setminus (\treat \cup \adjust))$ operation to $\circuit$.

\subsubsection{Backdoor}

We begin with the backdoor query. In cases where there is a valid backdoor adjustment set, such as in Figure \ref{fig:backdoor}, we have the following formula for the interventional distribution:
\begin{align*}
     \dist_{\circuit, \treat}(\outcome) = \sum_{\adjust} \dist_{\circuit}(\adjust) \dist_{\circuit}(\outcome|\treat, \adjust) = \sum_{\adjust} \dist_{\circuit}(\adjust) \frac{\dist_{\circuit}(\outcome, \treat, \adjust)}{\dist_{\circuit}(\treat, \adjust)}
\end{align*}

Before continuing, it is worth noting that the expression above is valid for any value of $\treat, \outcome$, though in causal inference it is more typical that we are interested in evaluating $\dist_{\circuit, \treat}(\outcome)$ for specific values $\treatval$ of $\treat$, i.e. a specific intervention, or a small set of interventions. This distinction is important as we will see, interestingly, that instantiating $\treat$ makes the query more tractable in the sense that the marginal determinism requirements for the circuit $\circuit$ are more relaxed.

To apply the MD-calculus, we first identify the deterministic operations in the pipeline. For the backdoor query, the only deterministic operation is the $\powop(\cdot; -1)$ operation, applied to $\circuit(\dist_{\circuit}(\treat, \adjust))$ which requires a deterministic input circuit. Then, we can work \textit{backwards} through the pipeline from $\powop$ in order to derive tractability conditions on $\circuit$. 

\begin{enumerate}
    \item \textbf{Requirement}: $\circuit(\dist_{\circuit}(\treat, \adjust))$ is $(\treat \cup \adjust)$-deterministic.
    \item $\marg(\cdot; \outcome)$: $\circuit(\dist_{\circuit}(\treat, \outcome, \adjust))$ is $(\treat \cup \adjust)$-deterministic $\implies$ $\circuit(\dist_{\circuit}(\treat, \adjust))$ is $(\treat \cup \adjust)$-deterministic
    \item $\marg(\cdot; \pcvars \setminus (\treat \cup \outcome \cup \adjust))$: $\circuit(\dist_{\circuit}(\pcvars))$ is $(\treat \cup \adjust)$-deterministic$\implies$ $\circuit(\dist_{\circuit}(\treat, \outcome, \adjust))$ is  $(\treat \cup \adjust)$-deterministic 
    \item \textbf{Sufficient Condition}: $\circuit = \circuit(\dist_{\circuit}(\pcvars))$ is $(\treat \cup \adjust)$-deterministic.
\end{enumerate}

This simple derivation shows that it suffices for $\circuit$ to be $(\treat \cup \adjust)$-deterministic to compute the backdoor query. Now, let us consider the case in which we instantiate $\treat$ with a value $\treatval$:
\begin{align*}
     \dist_{\circuit, \treatval}(\outcome) = \sum_{\adjust} \dist_{\circuit}(\adjust) \dist_{\circuit}(\outcome|\treatval, \adjust) = \sum_{\adjust} \dist_{\circuit}(\adjust) \frac{\dist_{\circuit}(\outcome, \treatval, \adjust)}{\dist_{\circuit}(\treatval, \adjust)}
\end{align*}

Here, employing the MD-calculus gives us the following:

\begin{enumerate}
    \item \textbf{Requirement}: $\circuit(\dist_{\circuit}(\treatval, \adjust)$) is $\adjust$-deterministic.
    \item $\marg(\cdot; \outcome)$: $\circuit(\dist_{\circuit}(\treatval, \outcome, \adjust))$ is $\adjust$-deterministic $\implies$ $\circuit(\dist_{\circuit}(\treatval, \adjust))$ is $\adjust$-deterministic
    \item $\marg(\cdot; \pcvars \setminus (\treat \cup \outcome \cup \adjust))$: $\circuit(\dist_{\circuit}(\treatval, \pcvars \setminus \treat))$ is $\adjust$-deterministic $\implies$ $\circuit(\dist_{\circuit}(\treatval, \outcome, \adjust))$ is  $\adjust$-deterministic 
    \item $\inst(\cdot; \treatval)$: $\circuit(\dist_{\circuit}(\pcvars))$ is $(\treat' \cup \adjust)$-deterministic for some $\treat' \subseteq \treat$ $\implies$ $\circuit(\dist_{\circuit}(\treatval, \pcvars \setminus \treat))$ is $\adjust$-deterministic
    \item \textbf{Sufficient Condition}: $\circuit$ is $(\treat' \cup \adjust)$-deterministic for some $\treat' \subseteq \treat$.
\end{enumerate}

Notice that in the requirement, due to the instantiation, the input to the $\powop$ operation $\circuit(\dist_{\circuit}(\treatval, \adjust))$ has scope $\adjust$, meaning that we require it to be $\adjust$-deterministic rather than $(\treat \cup \adjust)$-deterministic. In the final step, we use the MD-calculus rule for instantiation. This shows that the instantiated backdoor adjustment is tractable for a wider range of circuits than if we insisted on a circuit encoding $\dist_{\circuit, \treat}(\outcome)$ as a function of $\treat$ (and $\outcome$).

\subsubsection{Frontdoor}

Another common case where the causal effect is identifiable is the frontdoor causal diagram, shown in Figure \ref{fig:frontdoor_apx}. Unlike the backdoor case, there is unobserved confounding of $\treat$ and $\outcome$, represented by the dashed bidirectional arrow. However, the existence of the observed mediator $\adjust$ nonetheless allows for identifiability, via the formula
\begin{equation}
\dist_{\circuit, \treat}(\outcome) = \sum_{\adjust} \dist_{\circuit}(\adjust|\treat) \sum_{\treat'} \dist_{\circuit}(\treat') \dist_{\circuit}(\outcome|\treat', \adjust)
\end{equation}
We now employ the MD-calculus to derive tractability conditions. This time, looking at the conditionals, there are two deterministic ($\powop$) operations, as well as an auxiliary variable $\treat'$ (that is summed out in the end), which has the same joint distribution with $\pcvars \setminus \treat$ as $\treat$. 

\begin{enumerate}
    \item \textbf{Requirement}: $\circuit(\dist_{\circuit}(\treat))$ is $\treat$-deterministic and $\circuit(\dist_{\circuit}(\treat,\adjust))$ is $(\treat \cup \adjust)$-deterministic.
    \item $\marg(\cdot; \pcvars \setminus \treat)$: $\circuit(\dist_{\circuit}(\pcvars))$ is $\treat$-deterministic $\implies$ $\circuit(\dist_{\circuit}(\treat))$ is $\treat$-deterministic.
    \item $\marg(\cdot; \pcvars \setminus (\treat \cup \adjust))$: $\circuit(\dist_{\circuit}(\pcvars))$ is $(\treat \cup \adjust)$-deterministic $\implies$ $\circuit(\dist_{\circuit}(\treat \cup \adjust))$ is $(\treat \cup \adjust)$-deterministic.
    \item \textbf{Sufficient Condition}: $\circuit$ is $\treat$-deterministic and $(\treat \cup \adjust)$-deterministic.
\end{enumerate}

Now, let us consider instantiating $\treat$ with value $\treatval$. In this case, we have:
\begin{equation}
\dist_{\circuit, \treatval}(\outcome) = \sum_{\adjust} \dist_{\circuit}(\adjust|\treatval) \sum_{\treat'} \dist_{\circuit}(\treat') \dist_{\circuit}(\outcome|\treat', \adjust)
\end{equation}
Note that in this case, the conditional $\dist_{\circuit}(\adjust|\treatval)$ does not impose any determinism requirements, since the input to the $\powop$ operation is a scalar $\circuit(\dist_{\circuit}(\treatval))$. However, the requirements for the other conditional remain the same, as $\treat'$ is an auxiliary variable that is not tied to the intervention value $\treatval$. Overall, we can conclude that $\circuit$ being $(\treat \cup \adjust)$-deterministic is sufficient for the instantiated frontdoor formula, which is again weaker than in the non-instantiated case.

\subsubsection{Extended Napkin}

The extended napkin causal graph in Figure \ref{fig:extnapkin} is an extension of the so-called \textit{napkin} causal graph \citepSM{pearl09causality}, which is obtained by removing $\bm{K}$ from the graph.
For this diagram, the do-calculus gives us the following formula for the interventional distribution:
\begin{align*}
    \centering
    \dist_{\circuit, \treat}(\outcome) = \sum_{\bm{K}} &\left(\sum_{\bm{W}, \treat', \outcome'}\dist_{\circuit}(\treat', \outcome'|\bm{K}, \bm{z}, \bm{W}) \dist_{\circuit}(\bm{W}, \bm{K})\right) \frac{\sum_{\bm{W}} \dist_{\circuit}(\treat, \outcome| \bm{K}, \adjustval, \bm{W}) \dist_{\circuit}(\bm{W}, \bm{K})}{\sum_{\bm{W}} \dist_{\circuit}(\treat| \bm{K}, \adjustval, \bm{W}) \dist_{\circuit}(\bm{W}, \bm{K})}
\end{align*}

As described in the main paper, this is a case where we need to instantiate $\treat$ for tractability, leading to the following formula:
\begin{align*}
    \centering
    \dist_{\circuit, \treatval}(\outcome) = \sum_{\bm{K}} &\left(\sum_{\bm{W}, \treat', \outcome'}\dist_{\circuit}(\treat', \outcome'|\bm{K}, \bm{z}, \bm{W}) \dist_{\circuit}(\bm{W}, \bm{K})\right) \frac{\sum_{\bm{W}} \dist_{\circuit}(\treatval, \outcome| \bm{K}, \adjustval, \bm{W}) \dist_{\circuit}(\bm{W}, \bm{K})}{\sum_{\bm{W}} \dist_{\circuit}(\treatval| \bm{K}, \adjustval, \bm{W}) \dist_{\circuit}(\bm{W}, \bm{K})}
\end{align*}

In this formula, there are four deterministic operations. Three of them relate to the conditionals in the formula, while the final one is the $\powop(\cdot; -1)$ operation applied to $\circuit(\sum_{\bm{W}} \dist_{\circuit}(\treatval| \bm{K}, \adjustval, \bm{W}) \dist_{\circuit}(\bm{W}, \bm{K}))$. To derive a sufficient tractability condition on $\circuit$, we work backward through the computation from each of these operations.

We first consider the conditionals in the formula. Through similar reasoning to the instantiated backdoor formula, all of these can be computed efficiently as long as $\circuit$ is $(\bm{K} \cup \bm{W} \cup \adjust')$-deterministic, for some $\adjust' \subseteq \adjust$. 

For the final deterministic operation $\powop(\cdot; -1)$ applied to $\circuit(\sum_{\bm{W}} \dist_{\circuit}(\treatval| \bm{K}, \adjustval, \bm{W}) \dist_{\circuit}(\bm{W}, \bm{K}))$, note that the input circuit has scope $\bm{K}$, so the requirement is that it is $\bm{K}$-deterministic. At this point, we can apply the rules for $\marg$ and $\product$:

\begin{enumerate}
    \item \textbf{Requirement}: $\circuit(\sum_{\bm{W}} \dist_{\circuit}(\treatval| \bm{K}, \adjustval, \bm{W}) \dist_{\circuit}(\bm{W}, \bm{K}))$ is $\bm{K}$-deterministic.
    \item $\marg$: $\circuit(\dist_{\circuit}(\treatval| \bm{K}, \adjustval, \bm{W}) \dist_{\circuit}(\bm{W}, \bm{K}))$ is $\bm{K}$-deterministic $\implies$ $\circuit(\sum_{\bm{W}} \dist_{\circuit}(\treatval| \bm{K}, \adjustval, \bm{W}) \dist_{\circuit}(\bm{W}, \bm{K}))$ is $\bm{K}$-deterministic.
    \item $\product$: $\circuit(\dist_{\circuit}(\treatval| \bm{K}, \adjustval, \bm{W}))$, $\circuit(\dist_{\circuit}(\bm{W}, \bm{K}))$ both $\bm{K}$-deterministic $\implies$ $\circuit(\dist_{\circuit}(\treatval| \bm{K}, \adjustval, \bm{W}) \dist_{\circuit}(\bm{W}, \bm{K}))$ is $\bm{K}$-deterministic, by rule (a), where $\margdetvars = \bm{K}$.
    \item $\marg$: \underline{$\circuit(\dist_{\circuit}(\pcvars))$ is $\bm{K}$-deterministic} $\implies$ $\circuit(\dist_{\circuit}(\bm{W}, \bm{K}))$ is $\bm{K}$-deterministic
    \item $\product$: $\circuit(\dist_{\circuit}(\treatval, \bm{K}, \adjustval, \bm{W})), \circuit(\dist_{\circuit}(\bm{K}, \adjustval, \bm{W})^{-1})$ both $\bm{K}$-deterministic $\implies$ $\circuit(\dist_{\circuit}(\treatval| \bm{K}, \adjustval, \bm{W}))$ is $\bm{K}$-deterministic, by rule (a), where $\margdetvars = \bm{K}$.
    \item $\marg$: $\circuit(\dist_{\circuit}(\treatval, \adjustval, \pcvars \setminus (\treat \cup \adjust)))$ is $\bm{K}$-deterministic $\implies$ $\circuit(\dist_{\circuit}(\treatval, \bm{K}, \adjustval, \bm{W}))$ is $\bm{K}$-deterministic
    \item $\inst$: \underline{$\circuit(\dist_{\circuit}(\pcvars))$ is $(\bm{K} \cup \treat' \cup \adjust')$-deterministic for some $\treat' \subseteq \treat$, $\adjust' \subseteq \adjust$} $\implies$ $\circuit(\dist_{\circuit}(\treatval, \adjustval, \pcvars \setminus (\treat \cup \adjust)))$ is $\bm{K}$-deterministic
    \item $\powop$: $\circuit(\dist_{\circuit}(\bm{K}, \adjustval, \bm{W}))$ is $\bm{K}$-deterministic $\implies$ $\circuit(\dist_{\circuit}(\bm{K}, \adjustval, \bm{W})^{-1})$ is $\bm{K}$-deterministic
    \item $\marg$: $\circuit(\dist_{\circuit}(\adjustval, \pcvars \setminus \adjust))$ is $\bm{K}$-deterministic $\implies$ $\circuit(\dist_{\circuit}(\bm{K}, \adjustval, \bm{W})$ is $\bm{K}$-deterministic
    \item $\inst$: \underline{$\circuit(\dist_{\circuit}(\pcvars))$ is $(\bm{K} \cup \adjust')$-deterministic for some $\adjust' \subseteq \adjust$} $\implies$ $\circuit(\dist_{\circuit}(\adjustval, \pcvars \setminus \adjust))$ is $\bm{K}$-deterministic
    \item \textbf{Sufficient Condition}: $\circuit$ is $\bm{K}$-deterministic
\end{enumerate}

We have underlined the individual conditions on $\circuit$ that have been derived. For the reciprocal to be tractable, we need $\circuit$ to satisfy all of these. However, it can be seen that the first condition implies the other two (by taking $\treat' = \emptyset$, $\adjust' = \emptyset$), giving the condition at the bottom of the derivation. Now, combining with the previous conditions due to the conditional distributions, the overall sufficient condition for tractability of the (instantiated)extended napkin query is that $\circuit$ is $\bm{K}$-deterministic and $(\bm{K} \cup \bm{W} \cup \adjust')$-deterministic, for some $\adjust' \subseteq \adjust$.

\bibliographystyleSM{apalike}

\bibliographySM{citations.bib}

\end{document}